\documentclass[a4paper,10pt,reqno]{amsart}
\usepackage[T1]{fontenc}
\usepackage{lmodern}
\usepackage[utf8]{inputenc}

\usepackage{amsmath,amssymb,graphicx, mathtools} 
\usepackage{amsthm,amsaddr}
\usepackage[a4paper,left=2.5cm,right=2.5cm,top=3cm,bottom=3cm]{geometry}
\usepackage{enumerate} 
\usepackage{enumitem}
\usepackage{tikz} 
\usetikzlibrary{arrows,decorations.markings}
\usepackage[colorlinks, linkcolor=red, urlcolor=blue, citecolor=blue]{hyperref}
\usepackage{url}
\usepackage{comment}
\usepackage{caption}
\usepackage{subcaption}

\DeclareMathOperator*{\argsup}{arg\,sup}

\newcommand{\norm}[1]{\lVert#1\rVert}

\theoremstyle{plain}
\newtheorem{thm}{Theorem}[section]
\newtheorem{cor}[thm]{Corollary}
\newtheorem{lem}[thm]{Lemma}
\newtheorem{prop}[thm]{Proposition}

\theoremstyle{definition}

\theoremstyle{remark}
\newtheorem{rmk}[thm]{Remark}

\numberwithin{equation}{section}

\makeatletter
\renewcommand\subsection{\@startsection{subsection}{2}%
  \z@{-.5\linespacing\@plus-.7\linespacing}{.5\linespacing}%
  {\normalfont\scshape}}
\renewcommand\subsubsection{\@startsection{subsubsection}{3}%
  \z@{.5\linespacing\@plus.7\linespacing}{-.5em}%
  {\normalfont\scshape}}
\makeatother

\makeatletter
\@namedef{subjclassname@2020}{%
  \textup{2020} Mathematics Subject Classification}
\makeatother

\usepackage[format=plain,
            labelfont=sc]{caption}
\usepackage[toc,page]{appendix} 
\usepackage{scalerel,stackengine}
\stackMath%
\newcommand\reallywidehat[1]{%
\savestack{\tmpbox}{\stretchto{%
  \scaleto{%
    \scalerel*[\widthof{\ensuremath{#1}}]{\kern.1pt\mathchar"0362\kern.1pt}%
    {\rule{0ex}{\textheight}}
  }{\textheight}%
}{2.4ex}}%
\stackon[-6.9pt]{#1}{\tmpbox}%
}

\makeatletter
\newsavebox\myboxA
\newsavebox\myboxB
\newlength\mylenA
\newcommand*\xoverline[2][0.75]{%
    \sbox{\myboxA}{$\m@th#2$}%
    \setbox\myboxB\null
    \ht\myboxB=\ht\myboxA%
    \dp\myboxB=\dp\myboxA%
    \wd\myboxB=#1\wd\myboxA
    \sbox\myboxB{$\m@th\overline{\copy\myboxB}$}
    \setlength\mylenA{\the\wd\myboxA}
    \addtolength\mylenA{-\the\wd\myboxB}%
    \ifdim\wd\myboxB<\wd\myboxA%
       \rlap{\hskip 0.5\mylenA\usebox\myboxB}{\usebox\myboxA}%
    \else
        \hskip -0.5\mylenA\rlap{\usebox\myboxA}{\hskip 0.5\mylenA\usebox\myboxB}%
    \fi}
\makeatother    

\usepackage{fancyhdr}

\newcommand\shortitle{}
\newcommand\name{}

\begin{document}

\title{Singular perturbations and hierarchical learning in two-layer neural networks}
\author{Cédric Gerbelot}
\address{Unité de Mathématiques Pures et Appliquées (UMPA), ENS Lyon}
\author{Jean-Christophe Mourrat}
\address{Unité de Mathématiques Pures et Appliquées (UMPA), ENS Lyon and CNRS}
\email{cedric.gerbelot-barrillon@ens-lyon.fr, jean-christophe.mourrat@ens-lyon.fr}
\thanks{JCM acknowledges the support of the ERC MSCA grant SLOHD (101203974).}
\date{\today}

\begin{abstract}
We study the population gradient flow of an infinitely wide two-layer neural network learning a misspecified single-index model in high dimension. The two layers are optimized 
jointly, with a perturbative parameter tuning the relative training speed between the first and second layer. This setting 
was considered by Berthier, Montanari and Zhou in \cite{berthier2024learning}, who conjectured a hierarchical learning scenario with explicit timescales as the second layer is trained faster than the first. 
In this paper, we prove that the constant and linear components of the hidden link function are indeed recovered within the predicted timescales, at sharp explicit thresholds. We then analyze the onset of learning of the quadratic component and show that the components learned at earlier stages continue to influence the dynamics in an essential way. Our proof is based on quantitative approximation results 
for singularly perturbed flows evolving near a manifold defined by integral constraints. At a phenomenological level, we also show that the empirical measure of the weights displays 
singular behaviour when reaching the quadratic component of the hidden link, with a small fraction of neurons growing significantly while the remaining ones rearrange to preserve the components already learned.
\end{abstract}

\maketitle

\section{Introduction}
In the effort to understand the success of gradient-based optimization methods for deep learning \cite{bengio2017deep}, it is crucial to find models 
that are amenable to theoretical analysis while retaining practical relevance. Among such models, two-layer neural networks remain a significant technical challenge 
despite a flurry of recent contributions: in the absence of convexity, it is difficult to find analytically tractable approximations 
of the dynamical system induced by a gradient flow on the parameters. In that regard, several approaches have appeared to go beyond the kernel regime \cite{jacot2018neural,du2019gradient} which, 
despite the appealing simplicity of the ensuing (linear) dynamics, falls short of explaining the ability of neural networks to perform feature learning \cite{chizat2019lazy,woodworth2020kernel,geiger2020disentangling}. Notably, 
kernel-based analyses of neural network training dynamics do not show how these architectures adapt to latent low-dimensional structure in the data, thereby avoiding the curse of dimensionality \cite{bach2017breaking,yehudai2019power,ghorbani2020neural}. We summarize in what follows, without claim of exhaustivity, two of these approaches that motivate the present work. \newline \indent
On the one hand, the series of papers \cite{mei2018mean,chizat2018global,rotskoff2022trainability}
proposes to study infinitely wide two-layer neural networks and shows that, upon an appropriate scaling of the network and parameter initialization, the evolution of the 
empirical measure of the weights may be described by a non-linear transport equation corresponding to a Wasserstein gradient flow on a convex functional. Although this approach 
allows one to provide significant generalization guarantees in certain settings \cite{chizat2020implicit}, the arguments often rely on infinite training time (see also \cite{wojtowytsch2020convergence}), and quantitative convergence results still largely rely on Polyak-Lojasiewicz assumptions, see e.g. \cite{chizat2022sparse}. \newline \indent
On the other hand, a vast body of work studies the high-dimensional dynamics of various instances of gradient methods on simple problems, with random design data models displaying latent low-dimensional structure. Examples include single \cite{yehudai2020learning,arous2021online,bietti2022learning,berthier2024learning} and 
multi-index regression \cite{goldt2019dynamics,damian2022neural,ba2022high,abbe2023sgd,arous2024high,dandi2024two,bietti2025learning}, as well as binary and multiclass classification \cite{refinetti2021classifying,ben2024high,glasgow2024sgd}. These approaches 
rely on reducing the high-dimensional dynamics to a low-dimensional (Euclidean) dynamical system capturing both the effect of empirical noise and population terms. Postponing a more detailed discussion to Section \ref{sec:related_works}, we 
mention for now that this has led to quantitative descriptions of a number of phenomena observed in practice such as hierarchical learning \cite{arpit2017closer}, where the neural network learns functions of increasing complexity on different timescales, with associated plateaus in the training and test error.    \newline \indent 
In the case of single index regression, if the link function is known, a single neuron 
is sufficient to learn the target function (i.e. the hidden index), and the gradient dynamics can be reduced to a one dimensional ordinary differential equation (ODE). The time for a parameter vector under gradient flow to achieve non-trivial correlation 
with the hidden index from an uninformative start is then governed by the local curvature of the loss landscape near the initial condition \cite{dudeja2018learning,arous2020algorithmic,arous2021online}. 
This, in turn, allows a sharp analysis of gradient methods with no oracle modification to the algorithm. When the activation function is not known however, a (fully parametrized) two-layer neural network 
is required to solve the corresponding semiparametric problem, and the ensuing effective dynamics is much harder to analyze. This has motivated the use of oracle-based proof methods such as layer-wise 
training \cite{daniely2020learning,bietti2022learning,lee2024neural}, or meta-algorithms operating directly in an appropriate Hilbert space \cite{pillaud2025joint}. Thus, understanding the joint learning dynamics of a two-layer 
neural network trained with gradient descent to solve a semiparametric regression task is still unresolved. \newline \indent 
Recently, the work \cite{berthier2024learning} took a significant step towards answering this problem. Blending elements from both the approaches described above, the authors 
study gradient flow on an infinitely wide two-layer neural network, in the high-dimensional limit, and in a regime where the second layer is trained faster than the first. The relative 
speed is controlled by a perturbative parameter that allows one to interpolate between the unchanged gradient flow and layerwise training. Focusing on the population limit, they derive solutions to the effective 
dynamics using a matched asymptotic expansion argument, a non-rigorous method from the theory of singularly perturbed dynamical systems \cite{holmes2012introduction}. These computations lead to a precise description of the hierarchical learning process for the constant 
and linear components of the unknown activation, while progressively recovering the hidden index, with explicit expressions for the corresponding timescales. An additional conjecture is then established for the
higher order components based on neglecting the terms associated to the ones learned previously and solving simplified dynamics that do not require a perturbative analysis. \newline \indent 
In this paper, we build a framework to rigorously study a family of singularly perturbed dynamical systems that includes the learning dynamics of two-layer neural networks,
providing mathematical grounding for the results of \cite{berthier2024learning} regarding the constant and linear components of the hidden activation along the way. We then provide an explicit representation for the learning dynamics of the non-linear components, showing that the terms associated to the previously learned ones cannot be neglected. At a phenomenological level, beyond the constant and linear components,  
the empirical measure of the weights displays singular behaviour as it moves towards learning the next components while retaining the information acquired at earlier stages of the learning process. At a technical level, 
we provide a general methodology to construct approximations of dynamical systems evolving near target sets defined by integral constraints, and to obtain time dependent bounds that adapt to the hierarchical nature of the dynamics for the validity of these
approximations.
\section{Setting}
Our setting is the same as that of \cite{berthier2024learning}, and we adopt the same notations. For the sake of clarity, we will not reproduce 
their main results explicitly or enumerate the assumptions required for them to hold. Rather, we will give an informal description of only the elements required for our results to be stated. We refer the reader to their paper for more precise statements, and to the second paragraph 
of Section~\ref{sec:related_works} for further discussion on this matter. \newline \indent
Consider the joint law $P_{x,y}$ on $\mathbb{R}^{d} \times \mathbb{R}$ defined by the conditional model
\begin{equation}
  y = \varphi(\langle u_{*},x \rangle),
\end{equation}
where $\varphi : \mathbb{R} \to \mathbb{R}$ is an unknown link function, $x$ is a random vector in $\mathbb{R}^{d}$ with i.i.d. standard normal entries and $u_{*} \in \mathbb{S}^{d-1}$ is a fixed, unknown vector on the unit sphere. We learn this 
target function using a two-layer neural network parametrized as follows 
\begin{equation}
  f(x,a,u) = \frac{1}{m}\sum_{i=1}^{m}a_{i}\sigma(\langle u_{i},x \rangle), \quad a_{1},\dots,a_{m} \in \mathbb{R}, \quad u_{1},\dots,u_{m} \in \mathbb{S}^{d-1},
\end{equation}
and where $\sigma : \mathbb{R} \to \mathbb{R}$ is an activation function.
The parameters of this network are trained with gradient flow on the population loss 
\begin{equation}
  \mathcal{R}(a,u) = \frac{1}{2}\mathbb{E}\left\{\left(y-f(x,a,u)\right)^{2}\right\},
\end{equation}
where we denote $(a,u) = (a_{1},...,a_{m},u_{1},...,u_{m})$ the vectors gathering the external and internal weights.
The gradient flow is considered on the product manifold $(\mathbb{R} \times \mathbb{S}^{d-1})^{m}$, so that a spherical gradient is considered for the internal weights, and an Euclidean gradient for the external ones, 
leading to the system of ordinary differential equations (ODEs)
\begin{align}
  \partial_t a_{i} &= -m\partial _{a_{i}}\mathcal{R}(a,u), \\
  \partial_{t}u_{i} &= -m\left(I_{d}-u_{i}u_{i}^{\top}\right)\nabla_{u_{i}}\mathcal{R}(a,u).
\end{align}
Additionally, the relative training speed between the first and second layer weights is tuned with a perturbative parameter $\varepsilon>0$, understood as small, as follows
\begin{align}
  \label{eq:base_syst_dyn1}
  \partial_t a_{i} &= -\frac{m}{\varepsilon}\partial _{a_{i}}\mathcal{R}(a,u), \\
  \partial_{t}u_{i} &= -m\left(I_{d}-u_{i}u_{i}^{\top}\right)\nabla_{u_{i}}\mathcal{R}(a,u).
  \label{eq:base_syst_dyn2}
\end{align}
Thus, as $\varepsilon$ becomes small, the second layer weights are tuned progressively faster than the first layer ones. At initialization, the vectors $a,u$ are assumed to be drawn i.i.d. and independently respectively from a measure $P_{A}$, assumed to be absolutely continuous w.r.t. the Lebesgue measure on $\mathbb{R}$, and the invariant measure on the d-dimensional unit sphere, denoted $\mathcal{U}(\mathcal{S}^{d-1})$.
The activations $\sigma,\phi$ are assumed to be square-integrable with respect to the standard Gaussian measure. The coefficients of their 
decompositions in Hermite polynomials are respectively denoted $\{\sigma_{k}\}_{k=0}^{\infty}$ and $\{\varphi_{k}\}_{k=0}^{\infty}$.
Letting, for each $1 \leq i \leq m$, $s_{i} = \langle w_{i},u_{*} \rangle$, the authors of \cite{berthier2024learning} show that, when taking $m,d$ to infinity, the dynamical system 
\eqref{eq:base_syst_dyn1}-\eqref{eq:base_syst_dyn2} can be reduced to a set of autonomous ODEs on the variables $\{(a_{i},s_{i})\}_{i=1}^{m}$. At initialization, the overlaps $\{s_{i}\}_{i=1}^{m}$ are 
then i.i.d. according to the distribution of the push-forward of $\mathcal{U}(\mathcal{S}^{d-1})$ through the Euclidean projection on $u_{*}$, denoted $P_{S}$. In turn, these variables may be reindexed 
using a probability space $(\Omega,\mathcal{F},\rho)$ and random processes $a(t,\omega),s(t,\omega)$ (for $\omega \in \Omega$ and $t \geq 0$), respectively taking values in $\mathbb{R}$ and $[-1,1]$, and following the evolution
\begin{align}
  \label{eq:main_mf_dyn1}
  \partial_{t}a(t,\omega) &= \frac{1}{\varepsilon}\sum_{k=0}^{\infty}\sigma_{k}s(t,\omega)^{k}\left(\varphi_{k}-\sigma_{k}\int a(t,\nu)s(t,\nu)^{k}d\rho(\nu)\right), \\
  \partial_{t}s(t,\omega) &= a(t,\omega)\left(1-s(t,\omega)^{2}\right)\sum_{k=1}^{\infty}k \sigma_{k}s(t,\omega)^{k-1}\left(\varphi_{k}-\sigma_{k}\int a(t,\nu)s(t,\nu)^{k}d\rho(\nu)\right),
  \label{eq:main_mf_dyn2}
\end{align}
which can be understood as a gradient flow on the cost function
\begin{equation}
  \frac{1}{2}\sum_{k=0}^{\infty}\left(\varphi_{k}-\sigma_{k}\int a(\omega)s(\omega)^{k}d\rho(\omega)\right)^{2}.
\end{equation}
The initial condition $a(0,\omega),s(0,\omega)$ is a pair of measurable maps such that pushing $\rho$ through them recovers $P_{A}\otimes P_{S}$, and,
since $P_{S}$ converges weakly to a point mass at $0$ as $d$ goes to infinity, we have $s(0,\omega) = 0$. Also, if we assume that $\rho$ is the uniform measure on $[0,1]$,
$a(0,\omega)$ is simply the inverse cumulative density function (CDF) of $P_{A}$.
We refer the reader to Section 4.3 and 4.4 of \cite{berthier2024learning}, and to the references therein, for a more detailed discussion on the links between this formulation and the one usually found 
in the mean-field theory of neural networks \cite{mei2019mean,chizat2018global,rotskoff2022trainability}.
The evolution \eqref{eq:main_mf_dyn1}-\eqref{eq:main_mf_dyn2} will be the main topic of our study, and, in particular, we seek to understand whether or not the dynamics reaches points where the terms $\left(\varphi_{k}-\sigma_{k}\int a(\omega)s(\omega)^{k}d\rho(\omega)\right)^{2}$ are small,
for increasing values of $k$. \newline 
\paragraph{\textbf{Notations}}
We will denote the $L^{\infty}$ norm on $[0,1]$ (or other subsets of the real line) by $\norm{.}$, and the Euclidean norm on $\mathbb{R}^{n}$ (for any integer $n$) by $\norm{.}_{2}$. Also,
for any piecewise continuous, integrable function $f$ on $\Omega$, we denote $\overline{f} = \int_{\Omega}f(\omega)d\rho(\omega)$, and $f_{\perp} = f-\overline{f}$. We will 
sometimes abbreviate the centered, standard Gaussian measure with $\mu$. Finally, for any real number $x$, we let $(x)_{+} = \max(0,x)$.
\newline \indent 
\begin{rmk}
In what follows, various initial value problems related to the evolution \eqref{eq:main_mf_dyn1}-\eqref{eq:main_mf_dyn2} will be considered, always for initial conditions 
around which the vector fields defining the dynamics are locally Lipschitz continuous. Thus, we will often omit to justify the existence and uniqueness of maximal solutions, which immediately follow 
from the Cauchy-Lipschitz theorem.
\end{rmk}
\section{Main results}
\label{sec:main_results}
This section gathers the results that are the most meaningful to describe the learning dynamics in an intuitive way. They are consequences of more general 
technical results that can be found in Section \ref{sec:proofs} and Appendix \ref{sec:app_B}. Our assumptions are as follows: \newline
\paragraph{\textbf{Assumption 1}}
\label{par:ass1}
  \begin{itemize}
    \item The function $a(0,.)$, which will be denoted $a_{init}$, is piecewise continuous from $[0,1]$ to $\mathbb{R}$ and there exists a constant $M$ such that $\norm{a(0,.)} \leq M$.
    \item $s(0,.)=0$. 
    \item The functions $\sigma,\sigma'$ and $\varphi$ are in $L^{2}(\mu)$. For each $k \geq 0$, $\sigma_{k}>0$ and $\varphi_{k} >0$. \\
  \end{itemize}
We will sometimes refer to the parameters $M,\{\sigma_{k}\}_{k=0}^{\infty},\{\varphi_{k}\}_{k=0}^{\infty}$ as the problem \emph{data}.
Until further notice, all the following statements concern the solution $a(t,.),s(t,.)$ to the initial value problem defined by the evolution 
\eqref{eq:main_mf_dyn1}-\eqref{eq:main_mf_dyn2}, with any initial condition such that Assumption 1 is verified. 
\subsection{Learning the constant and linear components}
Our first main result shows that the timescale on which the constant component (the coefficient $\varphi_{0}$) is learned to a precision close to $\varepsilon$ is of order $\frac{\varepsilon\log(\frac{1}{\varepsilon})}{\sigma_{0}^{2}}$.
\begin{prop}[Learning the constant component]
\label{prop:main_0}
There exists a positive constant $C_{0}$ such that, for any $\eta \in (0,1)$ and $\varepsilon$ sufficiently small, 
\begin{equation}
  \inf \left\{t \geq 0: \vert \overline{a}(t)- \frac{\varphi_{0}}{\sigma_{0}} \vert \leq \varepsilon^{\eta}+C_{0}\varepsilon\log(\frac{1}{\varepsilon})\right\} \leq \frac{\left(\eta \log(\frac{1}{\varepsilon})+\log(\vert \overline{a}(0)-\frac{\varphi_{0}}{\sigma_{0}} \vert)\right)_{+}}{\sigma_{0}^{2}}\varepsilon
\end{equation}
\end{prop}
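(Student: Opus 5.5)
Integrating \eqref{eq:main_mf_dyn1} against $\rho$ gives the dynamics of $\overline{a}$. Writing $m_k(t)=\int a(t,\nu)s(t,\nu)^k d\rho(\nu)$ (so $m_0=\overline a$) and $\overline{s^k}(t)=\int s(t,\nu)^k\,d\rho(\nu)$, we get
\begin{equation*}
\varepsilon\,\partial_t \overline{a} = \sigma_0(\varphi_0-\sigma_0\overline{a}) + \sum_{k\geq 1}\sigma_k\,\overline{s^k}(t)\,(\varphi_k-\sigma_k m_k(t)).
\end{equation*}
At $t=0$ all $s=0$, so the sum vanishes and $\overline a$ relaxes exponentially at rate $\sigma_0^2/\varepsilon$ towards $\varphi_0/\sigma_0$. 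The plan is to treat the sum as a perturbation that stays small on the time window $[0,T_\varepsilon]$ with $T_\varepsilon=\varepsilon(\eta\log\frac1\varepsilon+\log|\overline a(0)-\varphi_0/\sigma_0|)_+/\sigma_0^2$, which is itself $O(\varepsilon\log\frac1\varepsilon)$.

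\textbf{Step 1 (a priori bounds).} I will first show that on $[0,T]$, with $T\le c\,\varepsilon\log\frac1\varepsilon$, we have $\norm{a(t)}\le C$ and $\norm{s(t)}\le C\,t$. The fast equation reads $\varepsilon\partial_t a(\omega)=\sigma_0(\varphi_0-\sigma_0\overline a)+O(|s(\omega)|)$, and its pointwise solution is a convex combination of $a_{init}(\omega)$ and a bounded forcing. More simply, Grönwall gives $a(t,\omega)-\overline a(t)=a_{\perp}(0,\omega)+O(\varepsilon^{-1}\int_0^t\norm{s})$. The factor $1/\varepsilon$ is dangerous here, so I will instead use that the mean-zero part $a_\perp$ obeys $\varepsilon\partial_t a_\perp=O(\norm{s})$ (the $k=0$ term is constant in $\omega$ and cancels). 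Since $\sum_k k\sigma_k\varphi_k$-type sums converge because $\sigma,\sigma',\varphi\in L^2(\mu)$, the right side of \eqref{eq:main_mf_dyn2} is bounded by $C\norm{a}$ as long as $\norm{a}$ and $|m_k|$ stay bounded. A continuity (bootstrap) argument on the hypothesis $\norm{a}\le 2M+C$, $\norm{s}\le 1/2$ then closes: $\norm{s}\le Ct$ gives $\norm{a_\perp(t)-a_\perp(0)}\le C t^2/\varepsilon = O(\varepsilon\log^2\frac1\varepsilon)$, and $\overline a$ stays between its initial value and the target up to small errors.

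\textbf{Step 2 (perturbed linear ODE).} With these bounds the perturbation satisfies $|\sum_{k\ge1}\sigma_k\overline{s^k}(\varphi_k-\sigma_km_k)|\le C\norm{s}\le Ct$. Set $e=\overline a-\varphi_0/\sigma_0$. Duhamel gives
\begin{equation*}
|e(t)|\le e^{-\sigma_0^2t/\varepsilon}|e(0)|+\frac{C}{\varepsilon}\int_0^t e^{-\sigma_0^2(t-r)/\varepsilon}\,r\,dr\le e^{-\sigma_0^2t/\varepsilon}|e(0)|+C' t .
\end{equation*}
At $t=T_\varepsilon$ the first term is at most $\varepsilon^\eta$; if the positive part is zero, then $|e(0)|\le\varepsilon^{\eta}$ and $t=0$ already works. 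The second term is $\le C'T_\varepsilon\le C_0\varepsilon\log\frac1\varepsilon$, with $C_0$ depending on the data and $M$ (and $\eta<1$ absorbed). Taking $\varepsilon$ small enough makes $T_\varepsilon\le c\,\varepsilon\log\frac1\varepsilon$, so the bootstrap region of Step 1 holds, and the infimum is at most $T_\varepsilon$.

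\textbf{Main obstacle.} The main obstacle is Step 1: controlling the infinite Hermite sums uniformly, i.e.\ bounding $\sum_k k\sigma_k|\varphi_k-\sigma_k m_k|\,|s|^{k-1}$ using $\sigma'\in L^2$ and $|s|\le1/2$ via Cauchy–Schwarz, and making the bootstrap independent of $\varepsilon$. I would try first to prove the crude bound $\norm{s(t)}\le Ct$ with $C$ depending only on the data, which suffices here.
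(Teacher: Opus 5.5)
Your proposal is correct, but it takes a more self-contained route than the paper.

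\textbf{The paper's route.} The paper does not prove Proposition~\ref{prop:main_0} directly. It computes the time $t_0$ at which the idealized curve $\overline{b}$ enters the $\varepsilon^{\eta}$-neighbourhood of $\varphi_0/\sigma_0$. It then reads off $|\overline{a}-\overline{b}|(t_0)\le Ct_0+C\varepsilon\beta^2(t_0)$ from Theorem~\ref{thm:harmo_0+1_tube}, using that $\beta(t_0)=O(1)$ when $t_0=O(\varepsilon\log\frac{1}{\varepsilon})$. That theorem is proved with a hitting time enforcing $\norm{a}+\varepsilon^{-1/2}\norm{s}\le C\beta$ together with the tube $\norm{a_\perp-b}+\varepsilon^{-1/2}\norm{s_\perp-g}\le\varepsilon^{\eta}\beta$. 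It is valid up to times of order $\sqrt{\varepsilon}\log\frac{1}{\varepsilon}$.

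\textbf{Your route.} You close an elementary bootstrap only on $[0,c\,\varepsilon\log\frac{1}{\varepsilon}]$. There $\norm{s}\le Ct$ and $\norm{a_\perp(t)-a_\perp(0)}\le Ct^2/\varepsilon$ are trivially small. You then run the same integrating-factor estimate the paper uses for $\overline{a}-\overline{b}$: your first Duhamel term is exactly $\overline{b}(t)-\varphi_0/\sigma_0$, and your $C't$ plays the role of the paper's $Ct$ term.

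\textbf{Comparison.} The core mechanism is the same, but you avoid $\beta,\gamma$, the linearization around $(b,g)$, and the differential inequality for $Z$ altogether. The paper's version is a one-line corollary only because Theorem~\ref{thm:harmo_0+1_tube} is needed anyway for the linear component. Yours is confined to the $\varepsilon\log\frac{1}{\varepsilon}$ scale: there $\norm{s}\le Ct$ suffices to control all $k\ge 1$ terms, whereas later $s_\perp$ grows like $\sqrt{\varepsilon}\beta$.

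\textbf{On your ``main obstacle''.} It is not really one. Under the bootstrap hypothesis $\norm{s}\le\frac12$, every Hermite sum carries a factor of order $k2^{-(k-1)}$. So mere boundedness of $\{\sigma_k\}$ and $\{\varphi_k\}$, which follows from $\sigma,\varphi\in L^2(\mu)$, already makes them uniformly summable. Cauchy--Schwarz and the assumption $\sigma'\in L^2(\mu)$ are not needed for this step.
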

The next result characterizes the timescale required to learn the linear component (the coefficient $\varphi_{1}$). The first part of the result 
shows that learning $\varphi_{1}$ up to a precision of order $o(1)$ is arbitrarily close to $\frac{1}{4\sigma_{1}\varphi_{1}}\sqrt{\varepsilon}\log(\frac{1}{\varepsilon})$. 
The second part shows that, on a longer timescale of order $\mathcal{O}(\sqrt{\varepsilon}\log(\frac{1}{\varepsilon}))$, the coefficient is learned up to a precision of $\varepsilon^{1/4}$.
\begin{prop}[Learning the linear component]
\label{prop:main_1}
  For any constant $c_{1}>0$, $\alpha \in (0,1)$ and $\varepsilon$ sufficiently small, 
  \begin{equation}
    \inf \left\{t \geq 0 : \overline{as}(t) = (1-\alpha)\frac{\varphi_{1}}{\sigma_{1}}\right\} \leq \left(\frac{1}{4\sigma_{1}\varphi_{1}}+c_{1}\right)\sqrt{\varepsilon}\log(\frac{1}{\varepsilon}).
  \end{equation}
  Furthermore, there exists a constant $C_{1}>\frac{1}{4\sigma_{1}\varphi_{1}}$ such that, for $\varepsilon$ sufficiently small, 
  \begin{equation}
    \inf \left\{t \geq 0 : \overline{as}(t) = (1-\varepsilon^{1/4})\frac{\varphi_{1}}{\sigma_{1}}\right\} \leq C_{1}\sqrt{\varepsilon}\log(\frac{1}{\varepsilon}).
  \end{equation}
\end{prop}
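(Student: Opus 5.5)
The plan is to split the evolution \eqref{eq:main_mf_dyn1}--\eqref{eq:main_mf_dyn2} into three successive phases: a short initial layer, a linear growth phase of length $\frac{1}{4\sigma_1\varphi_1}\sqrt{\varepsilon}\log(\frac1\varepsilon)$ to leading order, and a nonlinear saturation phase of length $\mathcal{O}(\sqrt{\varepsilon})$ for the first claim, or $\mathcal{O}(\sqrt{\varepsilon}\log(\frac1\varepsilon))$ for the second.

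\emph{Phase 1 (initial layer).} I would use Proposition~\ref{prop:main_0}, or rather the estimates in its proof, up to a time $t_0=\mathcal{O}(\varepsilon\log(\frac1\varepsilon))$. At that time $\overline{a}(t_0)$ is within $\varepsilon^{\eta}+C_0\varepsilon\log(\frac1\varepsilon)$ of $\varphi_0/\sigma_0$. Since $s(0,\cdot)=0$ and $\partial_t s=\mathcal{O}(1)$ on this window, we also have $\norm{s(t_0)}=\mathcal{O}(\varepsilon\log(\frac1\varepsilon))$. As only the $k=0$ mode acts at leading order, $a_\perp(t_0)=a_{init,\perp}+o(1)$ uniformly in $\omega$. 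We therefore set $a_0(\omega):=\frac{\varphi_0}{\sigma_0}+a_{init,\perp}(\omega)$, which is bounded by Assumption~1 and satisfies $\overline{a_0^2}>0$.

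\emph{Phase 2 (linear growth).} While $s$ is small and $\overline{as}$ is small, I would linearize. The $k=0$ term relaxes $\overline{a}$ at rate $\sigma_0^2/\varepsilon$ and keeps it pinned near $\varphi_0/\sigma_0$, which is a slow-manifold constraint. The remaining dominant terms are
\begin{equation*}
\partial_t a\approx \tfrac{1}{\varepsilon}\sigma_1\varphi_1 s \ (\text{projected off the constant mode}),\qquad \partial_t s\approx \sigma_1\varphi_1 a .
\end{equation*}
With $\lambda=\sigma_1\varphi_1/\sqrt{\varepsilon}$, this gives pointwise in $\omega$
\begin{equation*}
s(t,\omega)\approx\sqrt{\varepsilon}\,a_0(\omega)\sinh(\lambda (t-t_0)),\qquad a(t,\omega)\approx a_0(\omega)\cosh(\lambda(t-t_0)),
\end{equation*}
up to the small correction from the projection. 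Consequently $\overline{as}(t)\approx\frac{\sqrt{\varepsilon}}{4}\overline{a_0^2}\,e^{2\lambda(t-t_0)}$. This quantity reaches a small fixed level $\delta$ once $\lambda(t-t_0)=\frac14\log(\frac1\varepsilon)+\mathcal{O}(1)$, which yields the threshold $\frac{1}{4\sigma_1\varphi_1}\sqrt{\varepsilon}\log(\frac1\varepsilon)$. To make this rigorous I would compare the true solution with the linear one via a Gronwall argument in the weighted norm $\norm{s}/\sqrt{\varepsilon}+\norm{a}$, normalized by $e^{\lambda t}$. The error sources to control are the cubic terms $s^2$ and $s\,\overline{as}$, the $k\ge2$ modes (which carry extra factors of $s$), the factor $(1-s^2)$, and the deviation of $\overline{a}$ from the slow manifold. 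The bound should hold up to the exit time at which $\overline{as}=\delta$, where $s=\mathcal{O}(\varepsilon^{1/4})$ and $a=\mathcal{O}(\varepsilon^{-1/4})$.

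\emph{Phase 3 (nonlinear saturation).} I would rescale $s=\varepsilon^{1/4}S$, $a-\overline{a}=\varepsilon^{-1/4}A$ and $t=\sqrt{\varepsilon}\,\tau$. To leading order the rescaled system is
\begin{equation*}
\partial_\tau S=\sigma_1 A\,r,\qquad \partial_\tau A=\sigma_1 S\,r,\qquad r=\varphi_1-\sigma_1\overline{AS},
\end{equation*}
with errors that are small powers of $\varepsilon$. This is a gradient flow on $r^2/2$, and it satisfies
\begin{equation*}
\partial_\tau r=-\sigma_1^2\,\overline{A^2+S^2}\;r .
\end{equation*}
Since $\overline{A^2+S^2}$ is bounded below by its value at entry, of order $\delta$ times a constant, $r$ decays exponentially in $\tau$. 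Hence reaching $\overline{as}=(1-\alpha)\varphi_1/\sigma_1$ takes $\tau=\mathcal{O}(1)$, i.e.\ $t=\mathcal{O}(\sqrt{\varepsilon})$. Choosing $\delta$ small then absorbs the $\mathcal{O}(1)$ shifts into $c_1$, which proves the first claim.

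For precision $\varepsilon^{1/4}$ one needs $\tau\sim\frac14\log(\frac1\varepsilon)/(\sigma_1^2\,\overline{A^2+S^2})$, giving some $C_1$. Here the perturbation errors must remain below $\varepsilon^{1/4}$ over a logarithmically long $\tau$-window, so the exponential contraction of $r$ has to be used to prevent accumulation.

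The main obstacle is Phase 2 and its matching to Phase 3. There the approximation must be controlled relative to exponentially growing quantities, uniformly in $\omega$, on a window of length $\sim\log(\frac1\varepsilon)$ times the fast scale. At the same time the fast $k=0$ constraint and the slow $k\ge 2$ terms must not drift, which requires bounding $a$, of size up to $\varepsilon^{-1/4}$, inside the $k\ge2$ sums.
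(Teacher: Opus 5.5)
Your three-phase route differs from the paper's and can be made to work for the first claim. As written, however, Phase 2 compares against the wrong profile, and your argument for the second claim does not reach the exact level stated (the paper's proof does not either).

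\textbf{Phase 2: the mean mode does not grow.} The $k=0$ term keeps $\overline{a}$ close to $\overline{b}(t)$, hence bounded, and $\overline{s}=O(t)$. Only the fluctuations grow: $a_\perp\approx a_{init,\perp}\cosh(\lambda t)$ and $s_\perp\approx\sqrt{\varepsilon}\,a_{init,\perp}\sinh(\lambda t)$. Hence $\overline{as}\approx\frac{\sqrt{\varepsilon}}{4}\frac{\sigma_1'}{\sigma_1}e^{2\lambda t}$, not $\frac{\sqrt{\varepsilon}}{4}\overline{a_0^2}e^{2\lambda t}$.

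This is not a small correction. After normalizing by $e^{\lambda t}$, the mean of your profile $a_0\cosh(\lambda t)$ differs from the true $\overline{a}$ by about $\frac{\varphi_0}{2\sigma_0}$. A Gronwall comparison in your normalized norm against $(a_0\cosh,\sqrt{\varepsilon}a_0\sinh)$ therefore cannot close. You need the paper's split $a=\overline{a}+a_\perp$, $s=\overline{s}+s_\perp$: compare only $(a_\perp,s_\perp)$ with the growing solution, and control $\overline{a},\overline{s}$ separately.

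Relatedly, the nondegeneracy that matters is $\sigma_1'>0$, i.e.\ $a_{init}$ non-constant (true here since $P_A$ is absolutely continuous), not $\overline{a_0^2}>0$. For constant $a_{init}$ one has $a_\perp=s_\perp\equiv 0$ and nothing grows exponentially, even though $\overline{a_0^2}>0$. The leading time $\frac{1}{4\sigma_1\varphi_1}\sqrt{\varepsilon}\log(\frac1\varepsilon)$ is unaffected, and after this fix your Phases 2--3 do yield the first claim.

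\textbf{The $\varepsilon^{1/4}$ claim.} In your rescaled variables the errors are not below $\varepsilon^{1/4}$. The $k=2$ modes add $\varepsilon^{1/4}\sigma_2\varphi_2\bigl(\overline{S^3}+2\overline{A^2S}\bigr)$ to $\partial_\tau\overline{AS}$. This is of order exactly $\varepsilon^{1/4}$ and does not vanish in general; to leading order it is proportional to $\int a_{init,\perp}^3\,d\rho$.

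Contraction of $r$ then gives only $|r|\le C\varepsilon^{1/4}$ for large $\tau$, with $C$ set by the ratio of this forcing to the contraction rate. Nothing forces $C<\varphi_1$, yet that is exactly what is needed for $\overline{as}$, starting from $0$, to hit $(1-\varepsilon^{1/4})\varphi_1/\sigma_1$.

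The paper's proof has the same limitation. It deduces the second part from Corollary~\ref{cor:stable_01}, which via Lemma~\ref{lem:Ber_1/2} also only gives $|\varphi_1-\sigma_1\overline{as}|\le C\varepsilon^{1/4}$. So your argument recovers what is actually proved there. The literal statement needs either a constant-level computation of this residual or a weakening to $1-C\varepsilon^{1/4}$.

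\textbf{Comparison of routes.} The paper does not linearize and match. It compares $(a_\perp,s_\perp)$ directly with $b=\beta a_{init,\perp}$, $g=\gamma a_{init,\perp}$, where $(\beta,\gamma)$ solves the projected system with the saturation factor $\varphi_1-\sigma_1'\beta\gamma$ kept. That system is solved exactly by $\beta=\cosh\theta$, $\gamma=\sqrt{\varepsilon}\sinh\theta$ (Proposition~\ref{prop:app_bg}). Restricted to multiples of $a_{init,\perp}$ and in your rescaling, it is precisely your Phase 3 system, and its early-time linearization is your Phase 2, so one ODE covers both.

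Tube estimates (Theorems~\ref{thm:harmo_0+1_tube} and~\ref{thm:stable_01}) then give $|\overline{as}-\overline{bg}|\le C\varepsilon^{\delta}$, and the crossing time follows from Corollary~\ref{cor:bet_gam_bounds} and continuity. Your Phase 3 contraction corresponds to Theorem~\ref{thm:stable_01}, the Bernoulli-type bound for $(\varphi_1-\sigma_1\overline{as})^2$. The paper needs that theorem because the radius $\varepsilon^{1/4-cD}\beta(t)$ degrades past $t^\circ_\varepsilon$.

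The paper's route avoids a matching step at $\overline{as}=\delta$ and gives explicit timing constants. Yours needs no closed form. In Phase 3 it only requires a scalar inequality for $r$ plus a lower bound on $\overline{A^2+S^2}$, rather than closeness to an idealized trajectory.
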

Propositions \ref{prop:main_0} and \ref{prop:main_1} are corollaries of the following more general result, that provides a quantitative approximation to the solution of
the dynamical system \eqref{eq:main_mf_dyn1}-\eqref{eq:main_mf_dyn2} on timescales up to order $\mathcal{O}(\sqrt{\varepsilon}\log(\frac{1}{\varepsilon}))$. To state this result,
we begin by defining the idealized system around which our approximation is constructed. Let $\overline{b}:\mathbb{R}_{+} \to \mathbb{R}$ be the function 
\begin{equation}
  \label{eq:b_bar}
  \overline{b}(t) = \frac{\varphi_{0}}{\sigma_{0}}+\left(\overline{a}(0)-\frac{\varphi_{0}}{\sigma_{0}}\right)\exp(-\frac{\sigma_{0}^{2}t}{\varepsilon}).
\end{equation}
$\overline{b}$ is the solution to the ODE obtained for the quantity $\overline{a}$ if only the term corresponding to the coefficient $\varphi_{0}$
is conserved in the dynamical system \eqref{eq:main_mf_dyn1}-\eqref{eq:main_mf_dyn2}. Similarly, with $\sigma'_{1} = \sigma_{1}\int_{0}^{1}a_{init,\perp}^{2}(\nu)d\nu$, let $\beta,\gamma$ be the solutions to the dynamical system 
\begin{align}
  \label{eq:main_bg1}
  \partial_{t}\beta(t) &= \frac{\sigma_{1}}{\varepsilon}\gamma(t)\left(\varphi_{1}-\sigma'_{1}\beta(t)\gamma(t)\right), \\
  \partial_{t}\gamma(t) &= \sigma_{1}\beta(t)\left(\varphi_{1}-\sigma'_{1}\beta(t)\gamma(t)\right),
  \label{eq:main_bg2}
\end{align}
with initial condition $\beta(0) = 1$ and $\gamma(0)=0$. This system can be integrated (almost) explicitly, and $\beta,\gamma$ are studied in detail in Appendix \ref{sec:app_A}. Now, define $b(t,.) = \beta(t)a_{init,\perp}(.)$ and $g(t,.) = \gamma(t)a_{init,\perp}(.)$.
The functions $b,g$ are the solution to the dynamical system \eqref{eq:main_mf_dyn1}-\eqref{eq:main_mf_dyn2} if only the terms corresponding to the coefficient $\varphi_{1}$ are kept. The following 
theorem provides a quantitative approximation result for $a(t,.),s(t,.)$ in the form of tubes with time-dependent radii around the curves defined by $\overline{b},b,g$.
\begin{thm}
  \label{thm:main_tube01}
  There exists a constant $D$ such that, for any $c \in [0,\frac{1}{8D}]$ and time $(t_{\varepsilon})_{\varepsilon>0}$ of the form 
  \begin{equation}
    t_{\varepsilon} = \left(\frac{1}{4\sigma_{1}\varphi_{1}}+c\right)\sqrt{\varepsilon}\log(\frac{1}{\varepsilon}),
  \end{equation}
  there exists a constant $C$ such that, for any $t \in [0,t_{\varepsilon}]$ and $\varepsilon$ sufficiently small,
  \begin{align}
    \label{eq:main_tube_mean}
    \vert \overline{a}-\overline{b} \vert(t) \leq Ct+C\varepsilon\beta^{2}(t), \quad \vert \overline{s}(t) \vert \leq Ct, \\
    \norm{(a_{\perp}-b)(t,.)}+\varepsilon^{-1/2}\norm{(s_{\perp}-g)(t,.)} \leq C\varepsilon^{\frac{1}{4}-cD}\beta(t).
    \label{eq:main_tube_perp}
  \end{align}
  Furthermore, there exists a constant $\delta \in (0,\frac{1}{8})$ such that, for any constant $K\geq\frac{1}{4\sigma_{1}\varphi_{1}}+\frac{1}{8D}$ and 
  any $t \in [t_{\varepsilon},K\sqrt{\varepsilon}\log(\frac{1}{\varepsilon})]$, and $\varepsilon$ sufficiently small, \eqref{eq:main_tube_mean} holds and
   \begin{align}
    \norm{(a_{\perp}-b)(t,.)}+\varepsilon^{-1/2}\norm{(s_{\perp}-g)(t,.)} \leq C\varepsilon^{\delta}\beta(t).
  \end{align}
\end{thm}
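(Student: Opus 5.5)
The approach is a bootstrap (continuity) argument on the error variables $e_a = a_\perp - b$, $e_s = s_\perp - g$, $\overline{a}-\overline{b}$ and $\overline{s}$, run on the maximal interval on which the claimed tube bounds hold with twice the constant. The aim is to show that on this interval they actually hold with the original constant.

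\textbf{Step 1: decompose the vector field.} Split \eqref{eq:main_mf_dyn1}-\eqref{eq:main_mf_dyn2} into the mode $k=0$, which drives $\overline{a}$; the mode $k=1$, which drives $(a_\perp,s)$; and the remainder $k\ge2$. Since $s$ stays of size $O(\sqrt{\varepsilon}\,\beta)$ in the tube (this follows from $g=\gamma a_{init,\perp}$ and the behaviour of $\gamma$ in Appendix \ref{sec:app_A}), the modes $k\ge 2$ contribute terms of order $\norm{s}^{k}$ times bounded quantities. Their contribution to $\partial_t a$ is therefore $O(\varepsilon^{-1}\norm{s}^2)$, and to $\partial_t s$ it is $O(\norm{a}\,\norm{s})$. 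Summability is controlled by $\sigma,\sigma'\in L^2(\mu)$ together with $|s|\le1$.

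\textbf{Step 2: the mean.} The equation for $\overline{a}$ reads
\begin{equation*}
\partial_t\overline{a} = \frac{\sigma_0}{\varepsilon}\bigl(\varphi_0-\sigma_0\overline{a}\bigr) + \frac{1}{\varepsilon}\,\overline{R},
\end{equation*}
where $\overline{R}$ collects the averaged $k\ge1$ terms. The $k=1$ part of $\overline{R}$ carries $\overline{s}$ and $\overline{a s}$ cross terms. Duhamel's formula with the contracting kernel $e^{-\sigma_0^2 (t-r)/\varepsilon}$ gains a factor $\varepsilon$, which yields $|\overline{a}-\overline{b}|\le Ct + C\varepsilon\beta^2$. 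The $\beta^2$ term reflects the quadratic size of $\overline{(a s)^{\cdot}}$-type corrections. For $\overline{s}$, integrate its equation directly; its right-hand side is bounded, giving $|\overline{s}|\le Ct$.

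\textbf{Step 3: the perpendicular part.} Linearize the $k=1$ dynamics around $(b,g)$. Because $b$ and $g$ are both multiples of $a_{init,\perp}$, the errors obey a linear system whose leading matrix is that of the $(\beta,\gamma)$ system \eqref{eq:main_bg1}-\eqref{eq:main_bg2} linearized along its trajectory, plus a rank-one nonlocal term coming from $\int a s$. The forcing consists of the mean errors from Step 2 and the $k\ge2$ remainders. Rescaling $e_s$ by $\varepsilon^{-1/2}$ balances the $1/\varepsilon$ factor and makes the system effectively a fast rotation/growth at rate $\varepsilon^{-1/2}$.

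Grönwall in the weighted norm then shows that the propagator grows no faster than $\beta(t)$ times a factor $\exp(D\,\cdot)$ accumulated once $\beta$ has grown. Here $\beta$ grows like $e^{\sigma_1\varphi_1 t/\sqrt\varepsilon}$ until saturation, which occurs near $t=\frac{1}{4\sigma_1\varphi_1}\sqrt\varepsilon\log\frac1\varepsilon$. Forcing of size $\varepsilon^{1/4}$, dominated by the $k=2$ remainder when $\beta$ is small, yields the bound $C\varepsilon^{1/4}\beta$ up to saturation. Beyond saturation, each extra time $c\sqrt\varepsilon\log\frac1\varepsilon$ costs $\varepsilon^{-cD}$. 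This is the origin of the $\varepsilon^{1/4-cD}$ bound, and it closes the bootstrap provided $cD\le 1/8$.

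\textbf{Step 4: the longer time range and the main obstacle.} For the second statement, after saturation the $(\beta,\gamma)$ system approaches the manifold $\sigma_1'\beta\gamma=\varphi_1$. On this manifold the linearization is stable in the normal direction and neutral along it. One then restarts a Grönwall estimate with a contraction in the normal direction, obtaining the $\varepsilon^{\delta}$ bound up to $K\sqrt\varepsilon\log\frac1\varepsilon$.

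The main obstacle is Step 3. It requires a sharp estimate of the linearized propagator along the $(\beta,\gamma)$ trajectory, uniform across the transition from exponential growth to saturation. In addition, the nonlocal rank-one term must be shown not to generate growth beyond $\beta(t)$. I would handle this by projecting the errors onto $a_{init,\perp}$ and onto its orthogonal complement separately. On the orthogonal complement the coupling $\int a s$ vanishes to leading order.
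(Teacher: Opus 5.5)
Your first part is the paper's proof of Theorem~\ref{thm:harmo_0+1_tube}. It uses a hitting time, Duhamel for $\overline{a}-\overline{b}$ to gain a factor $\varepsilon$, and a Gr\"onwall inequality for $Z=\norm{a_\perp-b}+\varepsilon^{-1/2}\norm{s_\perp-g}$. In that inequality the rank-one term is bounded crudely, $|\overline{ug}+\overline{bv}|\le D\beta\sqrt{\varepsilon}Z$, which gives $\partial_t Z\le\frac{\sigma_1}{\sqrt{\varepsilon}}(\varphi_1-\sigma_1\overline{bg})Z+D\beta^2Z+C^2D\beta^2$.

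The first coefficient integrates exactly to $\theta(t)$, and $\int_0^{t^\circ_\varepsilon}\beta^2=O(1)$. So this crude inequality already suffices up to saturation. After saturation, the term $D\beta^2Z$ with $\beta^2\sim\varepsilon^{-1/2}$ is precisely the source of the $\varepsilon^{-cD}$ loss. The "main obstacle" you name is therefore not needed for the first statement. One correction to your heuristic: the forcing is $O(C^2\beta^2)$, not of size $\varepsilon^{1/4}$. The factor $\varepsilon^{1/4}$ comes from $\sqrt{\varepsilon}\beta^2\le D\varepsilon^{1/4}\beta$ at saturation.

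For the second statement you use the same mechanism as the paper, namely transversal contraction toward the curve of equilibria $\sigma_1'\beta\gamma=\varphi_1$, but you implement it differently.

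\textbf{The paper's route.} It never analyses a linearized propagator. From the full system it derives $\partial_t\overline{as}\approx\sigma_1(\overline{a^2}+\varepsilon^{-1}\overline{s^2})(\varphi_1-\sigma_1\overline{as})$. It then uses $\overline{a^2}+\varepsilon^{-1}\overline{s^2}\ge\varepsilon^{-1/2}/D$, inherited from the first part, together with Lemma~\ref{lem:Ber_1/2}, to make $|\varphi_1-\sigma_1\overline{as}|$ decay to $O(\varepsilon^{1/4})$. Combined with the explicit decay of $\varphi_1-\sigma_1\overline{bg}$, this gives the $Z$-independent bound $|\overline{as}-\overline{bg}|\le D\varepsilon^{\delta}$. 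Inserted into the $Z$-inequality, it lowers the growth rate to $\varepsilon^{-1/2+\delta}$.

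\textbf{Your route.} The projection is sound, and in fact exact. The linear rank-one term depends only on, and points along, $a_{init,\perp}$. Hence on the $L^2(\rho)$-orthogonal complement the pair $(u,\varepsilon^{-1/2}v)$ evolves by a pointwise hyperbolic rotation of angle $\theta$. On the span, let $\mu,\tilde\nu$ be the coefficients of $u$ and $\varepsilon^{-1/2}v$, and set $p=\mu+\tilde\nu$, $q=\mu-\tilde\nu$. One finds $\dot p=(\dot\theta-\tfrac{\sigma_1\sigma_1'}{2}e^{2\theta})p+\tfrac{\sigma_1\sigma_1'}{2}q$ and $\dot q=-(\dot\theta+\tfrac{\sigma_1\sigma_1'}{2}e^{-2\theta})q+\tfrac{\sigma_1\sigma_1'}{2}p$. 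The dangerous part is a damping with the favourable sign, and the remaining couplings are $O(1)$. Consequently, on times $O(\sqrt{\varepsilon}\log(\frac{1}{\varepsilon}))$ the propagator is $\lesssim\beta(t)/\beta(r)$.

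This is exactly the computation you leave undone. If you carry it out, it should give $Z\lesssim\varepsilon^{1/4}\log(\frac{1}{\varepsilon})\beta$ uniformly on $[0,K\sqrt{\varepsilon}\log(\frac{1}{\varepsilon})]$ in a single pass. That would improve the exponents, and it would make both the $\varepsilon^{-cD}$ loss you accept in Step~3 and the restart in Step~4 unnecessary. You should commit to one of the two strategies rather than mixing them. The paper's route is cheaper, since it needs one scalar inequality and no two-dimensional non-autonomous analysis, but it only yields an unspecified $\delta$.
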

\begin{rmk}
  \label{rmk:bg_comp}
  One can show that, for all times, $\gamma(t) \leq \sqrt{\varepsilon}\beta(t)$, 
  justifying the sole use of $\beta$ to state the bounds on both $a_{\perp}$ and $s_{\perp}$ in Theorem \ref{thm:main_tube01}.
\end{rmk}
\begin{rmk}
  \label{rmk:time_adapt_bound}
  On timescales of order $\mathcal{O}(\varepsilon\log(\frac{1}{\varepsilon}))$, the terms depending on $\beta(t)$ are negligible, so that 
  the bounds relevant for the constant component are linear in time. As we progress towards the linear component, the dependence in $\beta(t)$ 
  becomes relevant, and we rigorously recover the corresponding timescales from \cite{berthier2024learning} down to the constants. 
\end{rmk}
\subsection{Learning the quadratic component}
In the remainder of this section, we will focus on the following simplification of the dynamical system \eqref{eq:main_mf_dyn1}-\eqref{eq:main_mf_dyn2}:
\begin{align}
  \label{eq:main_quad_dyn1}
  \partial_{t} a(t,\omega) &= \frac{\sigma_{0}}{\varepsilon}\left(\varphi_{0}-\sigma_{0}\overline{a}(t)\right)+\frac{\sigma_{1}}{\varepsilon}s(t,\omega)\left(\varphi_{1}-\sigma_{1}\overline{as}(t)\right)+\frac{\sigma_{2}\varphi_{2}}{\varepsilon}s^{2}(t,\omega), \\
  \partial_{t} s(t,\omega) &= a(t,\omega)\sigma_{1}\left(\varphi_{1}-\sigma_{1}\overline{as}(t)\right)+2\sigma_{2}\varphi_{2}a(t,\omega)s(t,\omega).
  \label{eq:main_quad_dyn2}
\end{align}
We now turn to the learning process of the quadratic part of $\varphi$, namely the coefficient $\varphi_{2}$. A core difficulty in this case is the absence 
of a clear decomposition of the solution to \eqref{eq:main_mf_dyn1}-\eqref{eq:main_mf_dyn2}, mainly due to the previously learned coefficients $\varphi_{0},\varphi_{1}$, the contribution of which in \eqref{eq:main_mf_dyn1}-\eqref{eq:main_mf_dyn2} does not vanish once they have been learned. Thus, we determine the 
correct idealized system to study the learning dynamics on a relevant timescale (in powers of $\varepsilon$). Such a timescale is identified in Lemma \ref{lem:pure_quad_sol} by studying the 
solution to \eqref{eq:main_mf_dyn1}-\eqref{eq:main_mf_dyn2} when only the terms corresponding to $\varphi_{2}$ are kept, showing that whatever approximation is obtained should hold for times of order at least $\varepsilon^{1/4}$.
We begin by describing the family of initial conditions that we will consider, as prescribed by the terminal values of the pair $a(t,.),s(t,.)$ in Theorem \ref{thm:main_tube01} (and its corollaries). In particular, upon
setting a time of the form $t_{0} = K\sqrt{\varepsilon}\log(\frac{1}{\varepsilon})$, which we will choose as our origin, in Corollary \ref{cor:stable_01}, we obtain that there exist positive constants $C$ and $\delta$ such that 
\begin{align}
    \label{eq:main_init_quad1}
    &\vert \varphi_{0}-\sigma_{0}\overline{a}(t_{0}) \vert \leq C\sqrt{\varepsilon}, \quad \vert \varphi_{1}-\sigma_{1}\overline{as}(t_{0}) \vert \leq C\varepsilon^{1/4}, \quad \vert \overline{s}(t_{0}) \vert \leq C\sqrt{\varepsilon}\log(\frac{1}{\varepsilon}), \\
    &\norm{a_{\perp}(t_{0},.)-\left(\frac{\varphi_{1}}{\sigma_{1}'}\right)^{1/2}\varepsilon^{-1/4}a_{init,\perp}(.)}+\varepsilon^{-1/2}\norm{s_{\perp}(t_{0},.)-\left(\frac{\varphi_{1}}{\sigma_{1}'}\right)^{1/2}\varepsilon^{1/4}a_{init,\perp}(.)} \leq C\varepsilon^{-1/4+\delta}.
    \label{eq:main_init_quad2}
\end{align}
Owing to the initial scale of $a,s$ prescribed above (respectively $\varepsilon^{-1/4}$ and $\varepsilon^{1/4}$), and the fact that the relevant timescale on which we seek to work is of order $\varepsilon^{1/4}$, it is convenient to consider the following change in time and scale to state our 
approximation result:
 \begin{align}
    \label{eq:main_change_scale_a}
    \hat{a}(t,\omega) = \varepsilon^{1/4}a(\varepsilon^{1/4} t,\omega), \\
    \hat{s}(t,\omega) = \varepsilon^{-1/4}s(\varepsilon^{1/4} t,\omega).
    \label{eq:main_change_scale_s}
  \end{align}
Our goal now becomes to study the pair $\hat{a},\hat{s}$ on a timescale of order one. 
Now, as done for the constant and linear component, we define our idealized dynamics, in this case as the solution to the system
  \begin{align}
    \label{eq:main_quad_ideal_dyn1}
    \partial_{t}z_{1}(t,\omega) &= \sigma_{0}\alpha_{0}(t)+\sigma_{1}\alpha_{1}(t)z_{2}(t,\omega)+\sigma_{2}\varphi_{2}z_{2}^{2}(t,\omega), \\
    \partial_{t}z_{2}(t,\omega) &= z_{1}(t,\omega)\left(\sigma_{1}\alpha_{1}(t)+2\sigma_{2}\varphi_{2}z_{2}(t,\omega)\right),
    \label{eq:main_quad_ideal_dyn2}
  \end{align}
with any initial condition verifying $\overline{z}_{1}(t_{0}) = \varepsilon^{1/4}\frac{\varphi_{0}}{\sigma_{0}}$ and $\overline{z_{1}z_{2}}(t_{0}) = \frac{\varphi_{1}}{\sigma_{1}}$, and where the coefficients $\alpha_{0}(t),\alpha_{1}(t)$ are given by  
\begin{align}
  \alpha_{0}(t) &= -\frac{1}{\sigma_{0}}\left(\sigma_{1}\alpha_1(t)\overline{z_{2}}(t)+\sigma_{2}\varphi_{2}\overline{z_{2}^{2}}(t)\right), \\
  \alpha_{1}(t) &= -\frac{2\sigma_{2}\varphi_{2}\overline{z_{1}^{2}z_{2}}+\sigma_{2}\varphi_{2}\overline{z_{2}^{3}}-\sigma_{2}\varphi_{2}\overline{z_{2}^{2}}\overline{z_{2}}}{\sigma_{1}\overline{z_{1}^{2}}+\sigma_{1}(\overline{z_{2}^{2}}-\overline{z_{2}}^{2})},
\end{align}
and respectively ensure that 
\begin{equation}
  \partial_{t} \overline{z_{1}}(t) = 0, \quad \mbox{and} \quad \partial_{t} \overline{z_{1}z_{2}}(t) = 0.
\end{equation}
The following result is a consequence of the more general Proposition \ref{prop:approx_dyn_set} (and of Corollary \ref{cor:nn_approx_dyn}), which provides an approximation 
result for families of singularly perturbed flows evolving near certain target sets defined by integral constraints. 
\begin{prop}
  \label{prop:quad_approx}
  Let $(a(t,.),s(t,.))$ denote the solution to \eqref{eq:main_quad_dyn1}-\eqref{eq:main_quad_dyn2} with any initial condition $a(t_{0},.),s(t_{0},.)$ verifying the bounds \eqref{eq:main_init_quad1}-\eqref{eq:main_init_quad2}. We 
  then have the following results.
  \begin{enumerate}[font={\bfseries},label={(\alph*)},ref=\theprop(\alph*)]
  \item \label{prop:quad_approxa} There exist continuous functions $z_{1}(t_{0},.),z_{2}(t_{0},.)$ and a constant $C_{0}$ such that
  \begin{align}
    &\overline{z_{1}}(t_{0}) = \varepsilon^{1/4}\frac{\varphi_{0}}{\sigma_{0}}, \quad \overline{z_{2}}(t_{0}) = 0, \quad \overline{z_{1}z_{2}}(t_{0}) = \frac{\varphi_{1}}{\sigma_{1}}, \\
    &\norm{\hat{a}(t_{0},.)-z_{1}(t_{0},.)}+\norm{\hat{s}(t_{0},.)-z_{2}(t_{0},.)} \leq C_{0}\varepsilon^{1/4}\log(\frac{1}{\varepsilon}).
  \end{align}
  \item \label{prop:quad_approxb} Let $(z_{1}(t,.),z_{2}(t,.))$ be the solution to \eqref{eq:main_quad_ideal_dyn1}-\eqref{eq:main_quad_ideal_dyn2} with initial condition $(z_{1}(t_{0},.),z_{2}(t_{0},.))$.
  For any constant $R>0$, consider the hitting time 
  \begin{equation}
    T_{R} = \inf\{t \geq t_{0} : (\norm{\hat{a}(t,.)}+\norm{\hat{s}(t,.)}) \vee(\norm{z_{1}(t,.)}+\norm{z_{2}(t,.)}) \geq R\}.
  \end{equation}
Then there exist constants $C,K$ depending only on the problem data and $R$ such that, for any fixed time horizon $T$ independent of $\varepsilon$ and any $t_{0} \leq t \leq T_{R} \wedge T$,
\begin{equation}
  \label{eq:main_err_approx_quad}
  \norm{\norm{(\hat{a}(t,.),\hat{s}(t,.))-(z_{1}(t,.),z_{2}(t,.))}_{2}}_{\infty} \leq \left(C_{0}\varepsilon^{1/4}\log(\frac{1}{\varepsilon})+C\varepsilon^{1/8}(t-t_{0})\right)\exp\left(K(t-t_{0})\right).
\end{equation}
\end{enumerate}
\end{prop}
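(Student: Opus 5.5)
The plan is to rewrite the simplified system in the rescaled variables and compare it with the idealized flow \eqref{eq:main_quad_ideal_dyn1}-\eqref{eq:main_quad_ideal_dyn2} through a Gronwall argument. The key difference is that the rescaled system keeps the mean constraints stiff, while the idealized flow enforces them exactly through $\alpha_0,\alpha_1$.

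\textbf{Rescaling.} Substituting \eqref{eq:main_change_scale_a}-\eqref{eq:main_change_scale_s} into \eqref{eq:main_quad_dyn1}-\eqref{eq:main_quad_dyn2}, I set
\[
\hat\alpha_0(t)=\varepsilon^{-1/2}\bigl(\varphi_0-\sigma_0\overline{a}(\varepsilon^{1/4}t)\bigr),
\qquad
\hat\alpha_1(t)=\varepsilon^{-1/2}\bigl(\varphi_1-\sigma_1\overline{\hat a\hat s}(t)\bigr).
\]
With these, the rescaled system has exactly the same form as the idealized one, with $\alpha_0,\alpha_1$ replaced by $\hat\alpha_0,\hat\alpha_1$. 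The equations for the scalars $\hat\alpha_0,\hat\alpha_1$ are then of the form $\partial_t\hat\alpha=-\varepsilon^{-1/4}\bigl(\Lambda(t)\hat\alpha-F(t)\bigr)$, where:
\begin{itemize}
\item $\Lambda$ is the positive Gram-type matrix built from $\sigma_0^2$, $\sigma_1^2\overline{\hat a^2}$ and $\sigma_1^2\overline{\hat s_\perp^2}$;
\item $F$ collects the $\varphi_2$ terms.
\end{itemize}
The quasi-static solution $\Lambda^{-1}F$ of this fast scalar system is precisely $(\alpha_0,\alpha_1)$ evaluated along $(\hat a,\hat s)$.

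\textbf{Part (a).} I would construct $z_1(t_0,\cdot),z_2(t_0,\cdot)$ by projecting $(\hat a(t_0),\hat s(t_0))$ onto the constraint set $\{\overline{z_1}=\varepsilon^{1/4}\varphi_0/\sigma_0,\ \overline{z_2}=0,\ \overline{z_1z_2}=\varphi_1/\sigma_1\}$. Concretely:
\begin{itemize}
\item subtract $\overline{\hat s}$ from $\hat s$;
\item adjust $\overline{\hat a}$ to the prescribed value;
\item rescale $z_1$ by a factor $1+O(\varepsilon^{1/4})$ to fix $\overline{z_1z_2}$.
\end{itemize}
By \eqref{eq:main_init_quad1}-\eqref{eq:main_init_quad2}, each of these corrections has size at most $C\varepsilon^{1/4}\log(1/\varepsilon)$ in sup norm. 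The dominant correction is $\varepsilon^{-1/4}\overline{s}(t_0)$. Continuity follows from taking $a_{init}$ piecewise continuous, or from the structure given by Theorem \ref{thm:main_tube01}; otherwise I would mollify.

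\textbf{Part (b), first step: tracking the constraints.} On $[t_0,T_R\wedge T]$, I would show that $|\hat\alpha-\alpha(\hat a,\hat s)|\le C\varepsilon^{1/8}$ after an initial layer. This is the abstract content of Proposition \ref{prop:approx_dyn_set} and Corollary \ref{cor:nn_approx_dyn}, which I would invoke. The direct argument is as follows:
\begin{itemize}
\item $\Lambda$ is uniformly positive definite on $\{t\le T_R\}$. This needs $\overline{\hat a^2}$ bounded below, which follows from $\overline{\hat a\hat s}\approx\varphi_1/\sigma_1$ together with the $R$-bounds.
\item $\Lambda$ and $F$ have derivatives bounded by $C(R)$ in the $t$ variable.
\item Duhamel's formula then gives an error $e^{-c\varepsilon^{-1/4}t}|\hat\alpha(t_0)-\alpha(t_0)|+C\varepsilon^{1/4}$.
\end{itemize}
The initial mismatch $|\hat\alpha(t_0)|$ can be as large as $\varepsilon^{-1/4}$, since $\varphi_1-\sigma_1\overline{as}=O(\varepsilon^{1/4})$ divided by $\varepsilon^{1/2}$. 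Its integrated contribution is therefore $\int\varepsilon^{-1/4}e^{-c\varepsilon^{-1/4}t}\,dt=O(1)$, which is not small. This is the main obstacle. To handle it, I would compare integrated velocities rather than pointwise ones, i.e.\ bound $\int_{t_0}^t\hat\alpha_j\,ds$ and $\int_{t_0}^t\hat\alpha_j\hat s\,ds$. These integrals are controlled by the change of the constraint quantities, which is small, rather than by the size of the pointwise transient. Balancing the transient against the tracking error is what I expect to produce the weaker rate $\varepsilon^{1/8}(t-t_0)$ instead of $\varepsilon^{1/4}$.

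\textbf{Part (b), second step: Gronwall.} Write $\Delta=(\hat a-z_1,\hat s-z_2)$. Both the vector field and the map $(z_1,z_2)\mapsto(\alpha_0,\alpha_1)$ are locally Lipschitz in sup norm on the $R$-ball, where the denominator of $\alpha_1$ is bounded below as above. Hence
\[
\partial_t\norm{\Delta}\le K\norm{\Delta}+C\varepsilon^{1/8},
\]
where the forcing term $C\varepsilon^{1/8}$ comes from the previous step. Applying Gronwall's lemma with initial error $C_0\varepsilon^{1/4}\log(1/\varepsilon)$ from part (a) gives \eqref{eq:main_err_approx_quad}.
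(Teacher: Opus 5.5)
Your overall plan (rescale, build constrained initial data, compare with the Lagrange-multiplier flow, close with Gronwall) is the paper's. However, the analysis driving your part (b) rests on a scaling error. In the rescaled equations the factor multiplying $\sigma_1\hat s$ in $\partial_t\hat a$ is $\varepsilon^{-1/4}(\varphi_1-\sigma_1\overline{\hat a\hat s})$. So the correct normalization is $\hat\alpha_1=\varepsilon^{-1/4}(\varphi_1-\sigma_1\overline{\hat a\hat s})$, not $\varepsilon^{-1/2}(\cdots)$. By \eqref{eq:main_init_quad1} this gives $|\hat\alpha_0(t_0)|+|\hat\alpha_1(t_0)|\le C$, so there is no mismatch of size $\varepsilon^{-1/4}$. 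Since $\hat\alpha_1$ relaxes at a rate of order $\varepsilon^{-1/4}$ (and $\hat\alpha_0$ at rate $\sigma_0^2\varepsilon^{-3/4}$), the initial layer contributes only $O(\varepsilon^{1/4})$ once integrated in time. Your ``main obstacle'' is thus an artifact. The mechanism you propose for the rate $\varepsilon^{1/8}$ is also not the one at work. In the paper, $\varepsilon^{1/8}$ comes from Proposition~\ref{prop:approx_dyn_set}: comparing increments over windows of length $\delta$ costs $\frac{t}{\delta}\bigl(C\delta^{2}+Cu^{*}(\varepsilon)\bigr)$, where $u^{*}\asymp\varepsilon^{1/4}$ is the size of $\varphi_1-\sigma_1\overline{\hat a\hat s}$, and Corollary~\ref{cor:nn_approx_dyn} optimizes at $\delta=\sqrt{u^{*}}$. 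The ``integrated velocities'' device, which you leave unspecified, is therefore not needed. On the other hand, your Gronwall step as written is not justified. A pointwise forcing $C\varepsilon^{1/8}$ in $\partial_t\norm{\Delta}\le K\norm{\Delta}+C\varepsilon^{1/8}$ fails inside the layer, where $|\hat\alpha-\alpha(\hat a,\hat s)|$ is of order one. Use the integral form, with forcing $C\int_{t_0}^{t}|\hat\alpha-\alpha(\hat a,\hat s)|\,ds$.

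What you must actually supply, on either route, is that $\hat\alpha$ stays bounded on all of $[t_0,T_R\wedge T]$, i.e.\ $|\varphi_0-\sigma_0\overline{a}|\le C\sqrt{\varepsilon}$ and $|\varphi_1-\sigma_1\overline{as}|\le C\varepsilon^{1/4}$ there. This is exactly the hypothesis $u_k\le C_0v_k$ of Corollary~\ref{cor:nn_approx_dyn}, which the paper obtains from a Bernoulli-type inequality as in \eqref{eq:stable_phi1}. Your Duhamel step presupposes it, in two places:
\begin{enumerate}
\item The claim that $\Lambda$ and $F$ have $t$-derivatives bounded by $C(R)$ is unjustified without it, since $\partial_t\hat a$ contains $\sigma_0\hat\alpha_0+\sigma_1\hat\alpha_1\hat s$.
\item Your lower bound on $\overline{\hat a^{2}}$ uses $\overline{\hat a\hat s}\approx\varphi_1/\sigma_1$.
\end{enumerate}
Both must be closed by a continuity argument on $\sup_{s\le t}|\hat\alpha(s)|$. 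Once this is done, your direct route works and is a genuine alternative to the paper's. Take a weighted Lyapunov estimate for $e=\hat\alpha-\alpha(\hat a,\hat s)$, with weights $\varepsilon^{3/4},\varepsilon^{1/4}$ on $e_0^2,e_1^2$, and then treat the $e_0$ equation separately. This gives $\int_{t_0}^{t}|e|\,ds\le C\varepsilon^{1/4}(1+t-t_0)$. Hence you get \eqref{eq:main_err_approx_quad}, after enlarging $C_0$, with $\varepsilon^{1/4}$ in place of $\varepsilon^{1/8}$. This route exploits the explicit linear structure of the residual dynamics in this example. Proposition~\ref{prop:approx_dyn_set}, by contrast, uses only smallness of the residuals and no control of their time derivatives.

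Finally, two issues in part (a). First, rescaling $z_1$ to fix $\overline{z_1z_2}$ breaks $\overline{z_1}=\varepsilon^{1/4}\varphi_0/\sigma_0$; rescale $z_2$ (a multiple of $\hat s_\perp$, as in the paper) or only $z_{1,\perp}$. Second, mollification cannot preserve sup-norm closeness across jumps of $a_{init}$, so simply take $z(t_0,\cdot)$ as regular as $(\hat a,\hat s)(t_0,\cdot)$.
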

At an intuitive level, the perturbative parameter $\varepsilon$ allows to pin the dynamics in a neighborhood of the sets defined by the previously learned components. While the auxiliary dynamical system \eqref{eq:main_quad_ideal_dyn1}-\eqref{eq:main_quad_ideal_dyn2} is difficult to study quantitatively, 
it precisely characterizes the effect of the previously learned coefficients on the remainder of the learning dynamics. In particular, it shows 
that the effect of these coefficients cannot be ignored, and that, upon reaching the quadratic term, the dynamics \eqref{eq:main_mf_dyn1}-\eqref{eq:main_mf_dyn2}
is not equivalent to the one obtained by keeping only the terms related to the coefficient $\varphi_{2}$. Through simulations, we show that 
the maps $a(t,\omega)$ and $s(t,\omega)$ become highly singular, as the corresponding empirical measures evolve towards learning the 
quadratic component, but must also retain values of $\overline{a},\overline{as}$ respectively close to $\frac{\varphi_{0}}{\sigma_{0}},\frac{\varphi_{1}}{\sigma_{1}}$.
\begin{rmk}
 A straightforward consequence of the bounds for the initial condition at Eq.\eqref{eq:main_init_quad1}-\eqref{eq:main_init_quad2} is that 
 there exists a constant $D$ depending only on $\varphi_{0},\varphi_{1},\sigma_{0},\sigma_{1},M$ such that 
 \begin{equation}
  (\norm{\hat{a}(t_{0},.)}+\norm{\hat{s}(t_{0},.)}) \vee(\norm{z_{1}(t_{0},.)}+\norm{z_{2}(t_{0},.)}) \leq D.
 \end{equation}
Thus, Proposition \ref{prop:quad_approxb} becomes informative when choosing $R > D$, and describes the evolution of $\hat{a},\hat{s}$ (or equivalently $a,s$) at least until they reach a large, order one multiple of their initial scale. The meaning of this hitting time and the required scale for 
 the pair $a,s$ to learn the coefficient $\varphi_{2}$ are discussed further in Subsection \ref{sec:proof_quad}.
\end{rmk}
\begin{rmk}
  \label{rmk:quad_approx2}
  While the error bound in Eq.\eqref{eq:main_err_approx_quad} is exponential in time, we argue that this does not constitute a technical bottleneck. 
  The error is governed by how close the original system \eqref{eq:main_quad_dyn1}-\eqref{eq:main_quad_dyn2} evolves to the sets defined by the learned
  coefficients $\varphi_{0},\varphi_{1}$, and by the possibility to construct initial conditions that exactly satisfy those constraints while 
  remaining close to the true initial condition, as done in Proposition \ref{prop:quad_approxa}. This is formalized in Lemma \ref{lem:quad_approx}, of which 
  Proposition \ref{prop:quad_approx} is a straightforward consequence. More precisely, as soon as the exponential factor in Eq.\eqref{eq:main_err_approx_quad} becomes 
  larger than a chosen threshold, we may restart the argument, provided the resulting error is negligible compared to $\hat{a},\hat{s}$ 
  (here they are of order one and the error of order $\varepsilon^{1/8}$). The local nature of the bounds in Proposition \ref{prop:approx_dyn_set} 
  also allows one to adapt the approximation bound \eqref{eq:main_err_approx_quad} to whatever scale is reached by $\hat{a},\hat{s}$. However, 
  this method rests on a quantitative understanding of the auxiliary dynamics \eqref{eq:main_quad_ideal_dyn1}-\eqref{eq:main_quad_ideal_dyn2}, which 
  remains challenging.
\end{rmk}
\section{Further related works}
\label{sec:related_works}
\paragraph{\textbf{Hierarchical learning}}
Hierarchical learning in two-layer neural networks is an active area of research within machine learning theory, with a significant part of the effort directed towards analytically tractable models 
that reproduce this phenomenon. Building on insights gathered on the well-specified single index problem \cite{yehudai2020learning,arous2021online,tan2023online} and the analysis of a
single step of gradient descent \cite{damian2022neural,ba2022high}, several works have given quantitative descriptions of hierarchical learning, taking multi-index 
models as a benchmark problem. Using layerwise training, the work \cite{abbe2023sgd} provides sample complexity guarantees for the recovery of the indices with online 
stochastic gradient descent (SGD) and data uniformly supported on the hypercube. The plateaus in training and test error are associated with the incremental recovery of the 
support of the index, with dimension-dependent timescales governed by the degree (in the sense of a polynomial decomposition) of the components of the link function carrying 
subspaces spanned by the hidden multi-index. Similar guarantees are proposed in \cite{dandi2024two} for Gaussian data with large gradient steps and correspondingly large data batches, 
also with layerwise training. Finally \cite{bietti2025learning} gives a richer description of the same hierarchical learning phenomenon using the continuous time and population limit
of a two-stage algorithm combining successive nonparametric regression steps with gradient steps. The phenomenology described in the present paper is different: the incremental learning
is not associated with the latent subspace being supported on components of increasing degree, but rather with the successive learning of the components themselves while non-trivial 
correlation with the hidden index is achieved as soon as the linear component is learned, and continues to increase thereafter. This simultaneous fitting of the coefficients of $\varphi$ and 
progressive recovery of the hidden index presents a core technical difficulty absent from the aforementioned works. Finally, other models 
for incremental learning have been studied, notably diagonal linear networks \cite{woodworth2020kernel,pesme2023saddle,berthier2023incremental} and deep linear networks \cite{saxe2014exact,jacot2021saddle}. 
The resulting dynamics are significantly different from that considered in the present paper, and the models are not directly comparable. \newline
\paragraph{\textbf{Mean-field theory of neural networks}}
Following the identification of the mean field limit for infinitely wide neural networks in \cite{chizat2018global,mei2018mean,rotskoff2022trainability}, several works 
have proposed quantitative analysis of the resulting trajectories for models trained on synthetic data involving low-dimensional latent representations, or some degree of symmetry. The work 
\cite{abbe2022merged} studies the recovery of multi-index models with input data uniformly distributed on the hypercube. Using the invariances of the model, 
the authors reduce the mean field gradient flow (i.e. the transport equation for the empirical measure of the weights) to a dimension free one and 
provide guarantees for the recovery of the support of the hidden indices. For two-layer networks with ReLU activation, a similar use of symmetries is proposed in \cite{hajjar2023symmetries}
which establishes invariance properties of the resulting gradient flow and provides exponential convergence guarantees in a setting reducing to a linear model. Also, the works 
\cite{veiga2022phase,arnaboldi2023high} combine the approaches of \cite{saad1995dynamics,goldt2019dynamics} with quantitative bounds for the validity of the mean-field limit from \cite{mei2019mean}
to obtain low-dimensional representations (or dimension-free ones at the level of distributions) of the trajectories of online SGD for wide two-layer neural networks.  None of these works provide quantitative convergence guarantees or 
a detailed analysis of how a two-layer neural network solves a semi-parametric problem when trained with gradient flow. We note that the bounds from \cite{berthier2024learning} (see Corollary 1 and Proposition 3), based on propagation of chaos results from \cite{mei2019mean}, are insufficient to show that 
the mean-field limit is valid on the timescales describing the behaviour of the evolution \eqref{eq:main_mf_dyn1}-\eqref{eq:main_mf_dyn2} outside of regimes where the ambient dimension and number of neurons diverge exponentially with $\frac{1}{\varepsilon}$. In the context of machine learning, 
uniform-in-time bounds for propagation of chaos in single layer neural networks have recently been proposed in \cite{mahankali2023beyond,glasgow2025mean,glasgow2026uniform}, with \cite{mahankali2023beyond} carrying out a complete convergence analysis of the resulting dynamics for a specific instance of a single index target function. Their results do not apply directly to our setting, 
which involves a two-layer neural network and a singular perturbation. We refer the reader to the related works section of the latter papers for a more detailed discussion on propagation of 
chaos for neural networks, and leave the development of methods leading to uniform-in-time bounds for our setting to later work.\newline
\paragraph{\textbf{The two-timescale regime}}
Two-timescale algorithms and their connections to singular ordinary differential equations appear in \cite{borkar1997stochastic}, and 
have since then been used in stochastic approximation \cite{konda2004convergence}, reinforcement learning \cite{dalal2018finite} and optimization \cite{hong2023two}.
In the context of two-layer neural networks, the paper \cite{marion2023leveraging} provides a detailed study of the training dynamics 
of a specific instance of a two-layer neural network solving a univariate non-parametric regression problem. By only tuning the second layer weights and biases, 
the authors provide quantitative convergence guarantees for the two-timescale population gradient flow learning a piecewise constant function, with explicit bounds on the required number of neurons and 
relative speed parameter $\varepsilon$ for the convergence to hold. In particular, their work shows experimentally that the gradient flow may fail to converge without a timescale separation between the 
training of the outer layer weights and biases. On the one hand, our study is asymptotic in the number of neurons and, although explicit constants for the parameter $\varepsilon$ can be extracted from our proof 
method, we do not track them. On the other hand, our target function is high-dimensional and more complex, with the learning architecture more general and closer to practical ones.
\section{Simulations}
Letting $h_{k}$ denote the $k$-th normalized Hermite polynomial, we illustrate the results from Section \ref{sec:main_results} through simulations with the following activation and 
target link function.
\begin{align}
  \sigma(x) &= 0.4h_{0}(x)+0.5h_{1}(x)+0.2h_{2}(x), \\
  \varphi(x) &= h_{0}(x)+2h_{1}(x)+h_{2}(x).
\end{align}
We begin by verifying the separation of timescales for the learning of each of the components $\varphi_{0},\varphi_{1},\varphi_{2}$ with simulations reminiscent of those 
proposed in \cite{berthier2024learning}, before turning to the approximation results presented in Theorem \ref{thm:main_tube01} and Proposition \ref{prop:quad_approx}. In all figures, 
we choose $\Omega = [0,1]$ and $\rho$ the corresponding uniform measure, along with $a_{init}(\omega) = -1+2\omega$, the inverse CDF of the uniform measure on $[-1,1]$.
\subsection{Learning the constant and linear components}
Figures \ref{fig:harmo01_fig1}, \ref{fig:harmo01_fig2} and \ref{fig:harmo01_fig3} illustrate the results of Section \ref{sec:main_results} for the constant and linear components. For clarity of exposition, we simulate the dynamical system \eqref{eq:main_mf_dyn1}-\eqref{eq:main_mf_dyn2} with only the first two coefficients of the above activation and link function.
\begin{figure}[h]
\centering
\includegraphics[scale=0.31]{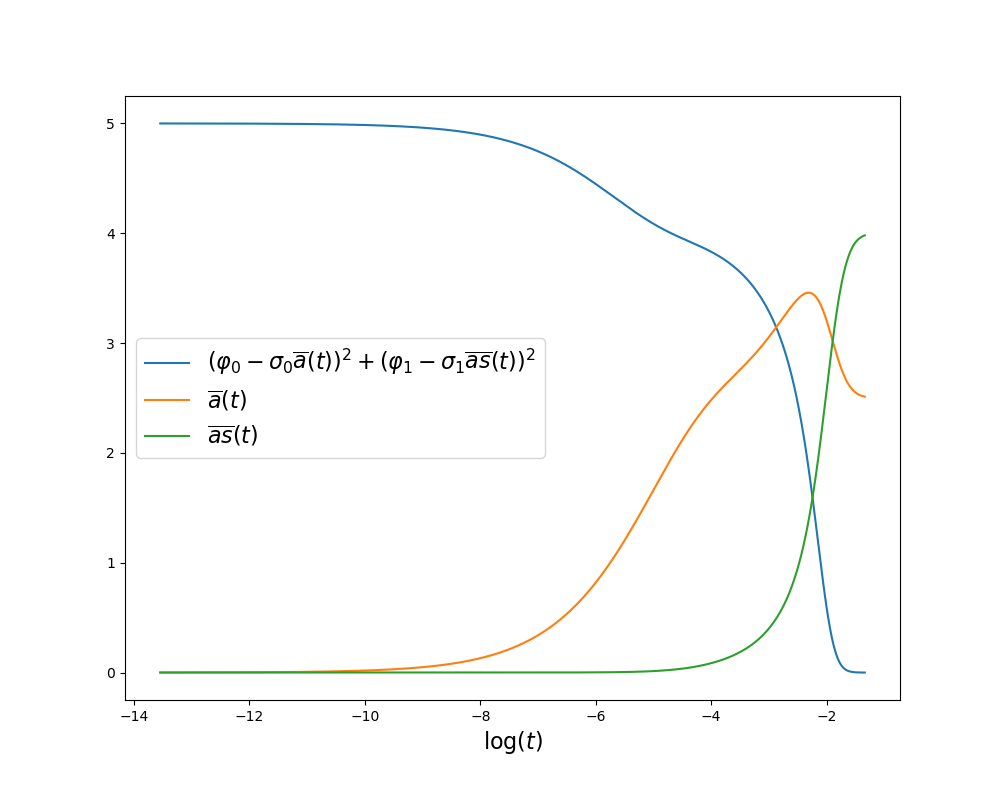}
\includegraphics[scale=0.31]{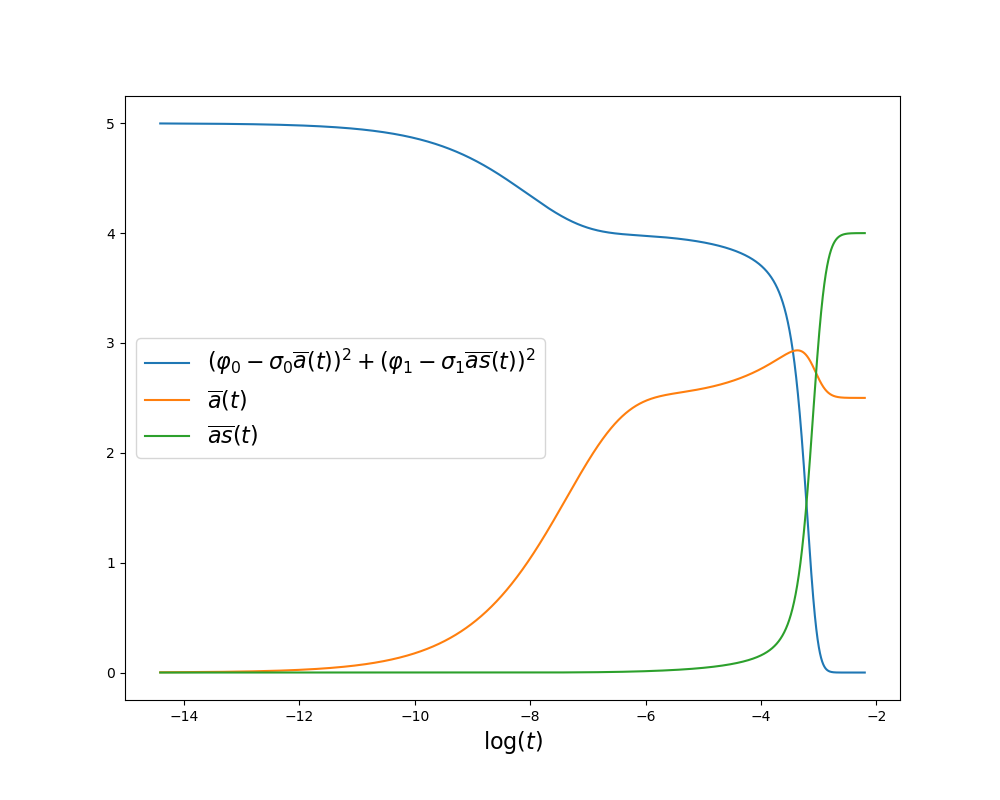}
\caption{Separation of timescales for the constant and linear components: evolution of the averaged quantities $\overline{a}$, $\overline{as}$, and cost function 
for the first two coefficients $\varphi_{0},\varphi_{1}$ as a function of time on a logarithmic scale, for $\varepsilon = 10^{-3}$ (left) and $\varepsilon = 10^{-4}$ (right).}
\label{fig:harmo01_fig1}
\end{figure}
\begin{figure}[h]
\centering
\includegraphics[scale=0.31]{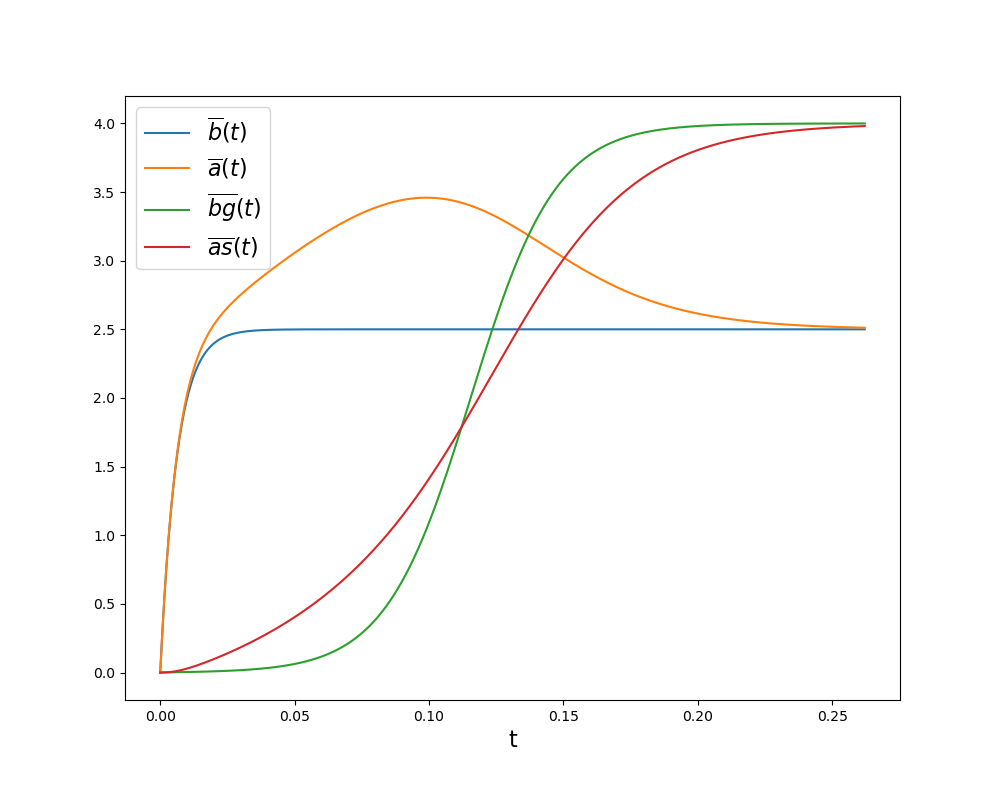}
\includegraphics[scale=0.31]{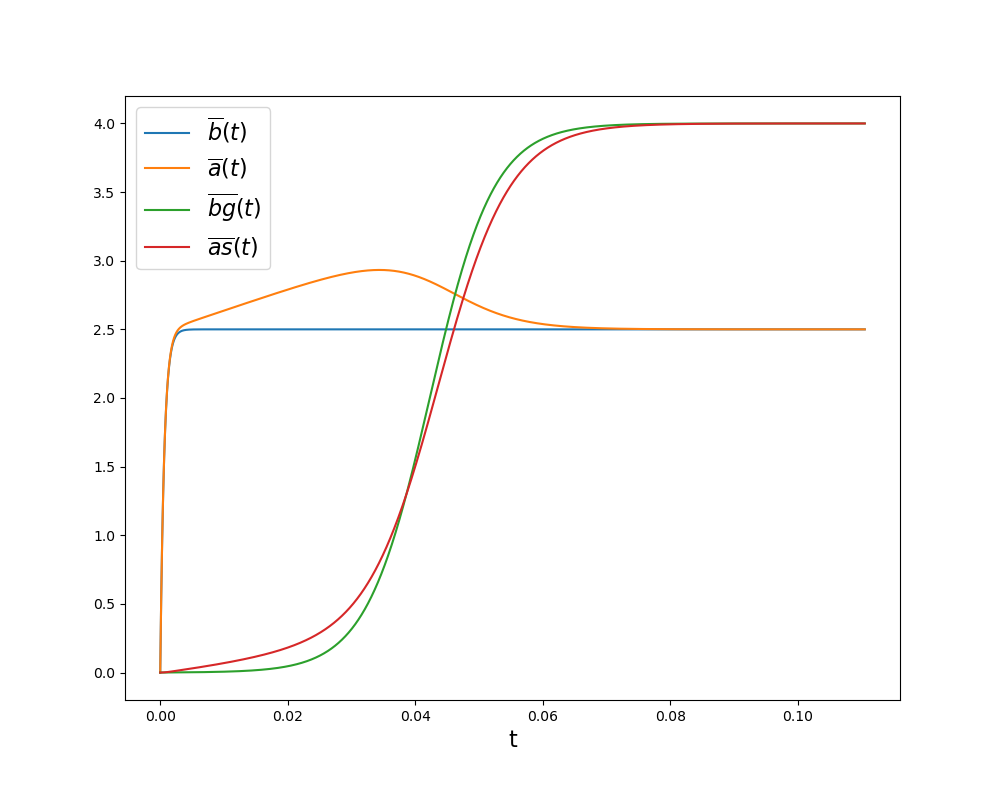}
\caption{Illustration of Theorem \ref{thm:main_tube01}: comparison of $\overline{a}$ (orange curve) with $\overline{b}$ (blue curve), and of $\overline{as}$ (red curve) with $\overline{bg}$ (green curve), 
for $\varepsilon = 10^{-3}$ (left) and $\varepsilon = 10^{-4}$ (right). On a timescale of order $\varepsilon\log(1/\varepsilon)$, the blue and orange curves are nearly indistinguishable, and, when moving to a timescale of order $\sqrt{\varepsilon}\log(1/\varepsilon)$, the error becomes larger, 
as predicted by the bound \eqref{eq:main_tube_mean} (see also Remark \ref{rmk:time_adapt_bound}). The curves collapse at longer times owing to the absence of higher-order terms in the simulation. Overall the relative error between the curves decreases with $\varepsilon$.}
\label{fig:harmo01_fig2}
\end{figure}
\begin{figure}[h]
\centering
\includegraphics[scale=0.31]{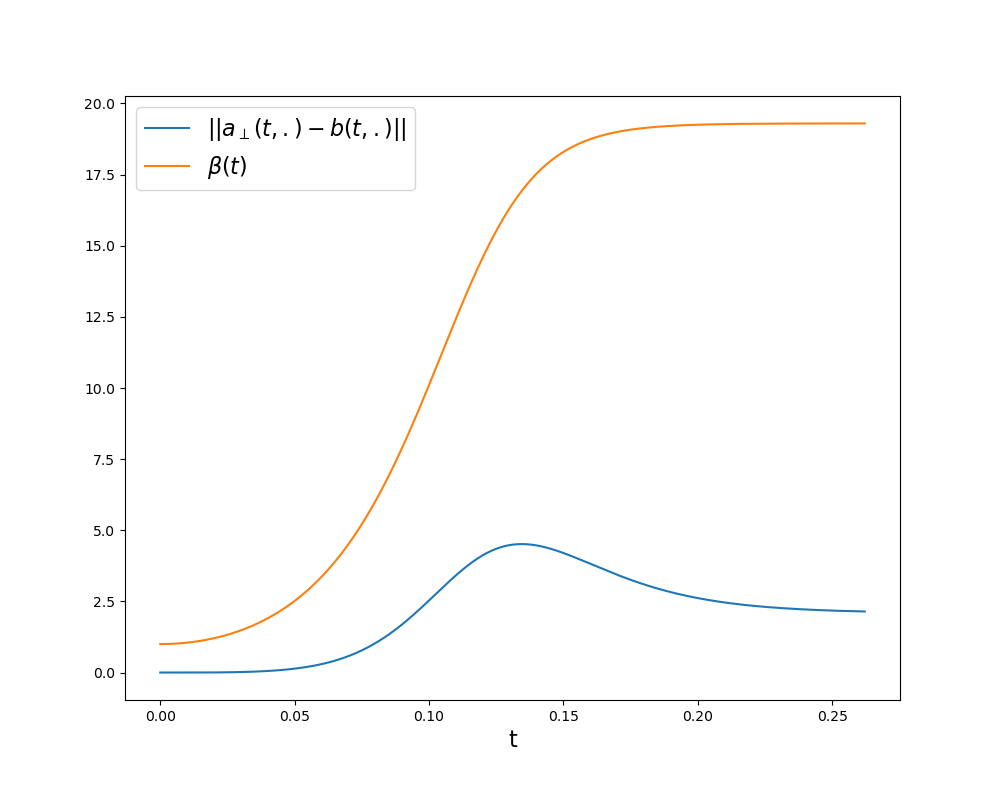}
\includegraphics[scale=0.31]{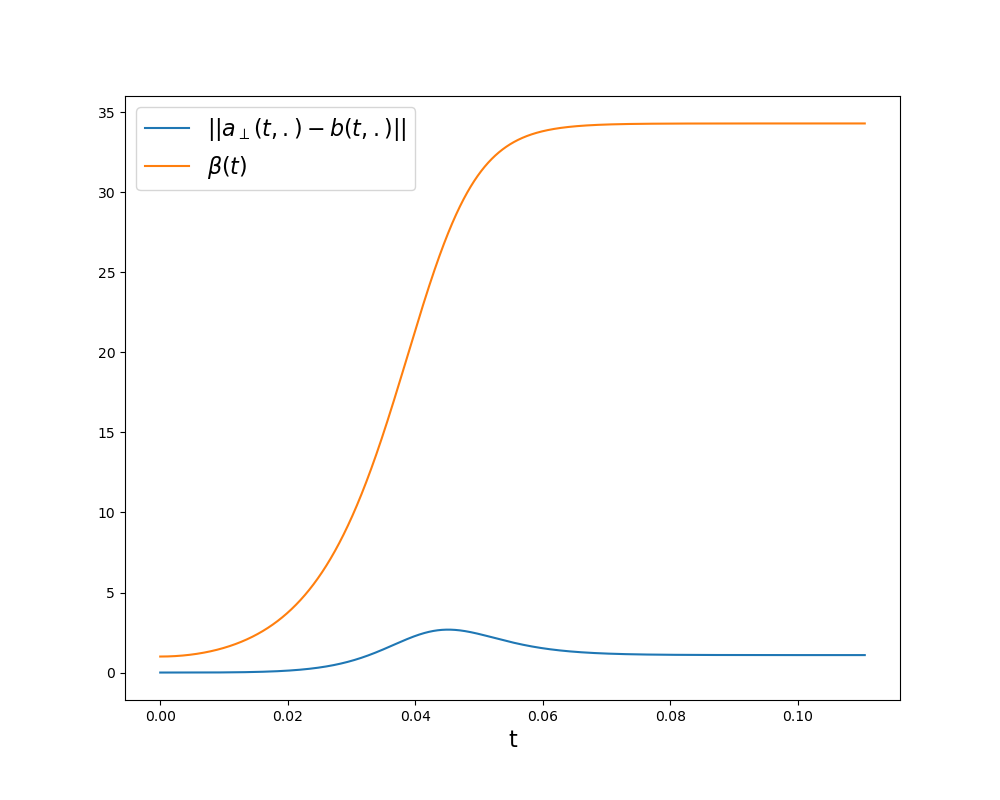}
\includegraphics[scale=0.31]{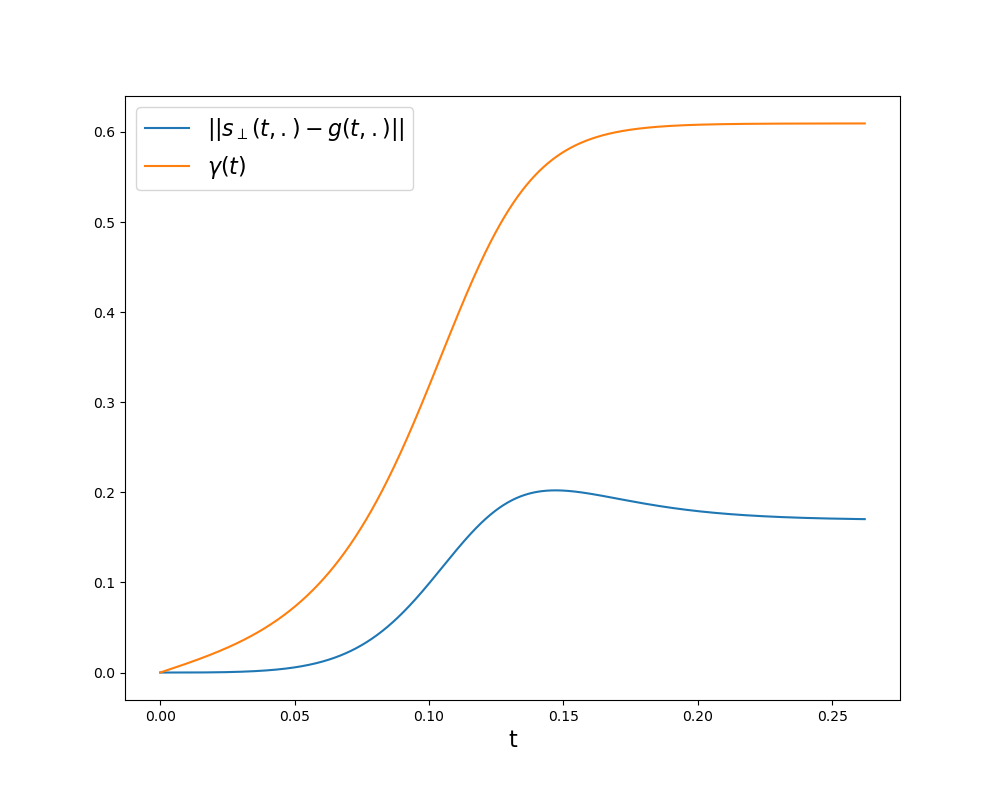}
\includegraphics[scale=0.31]{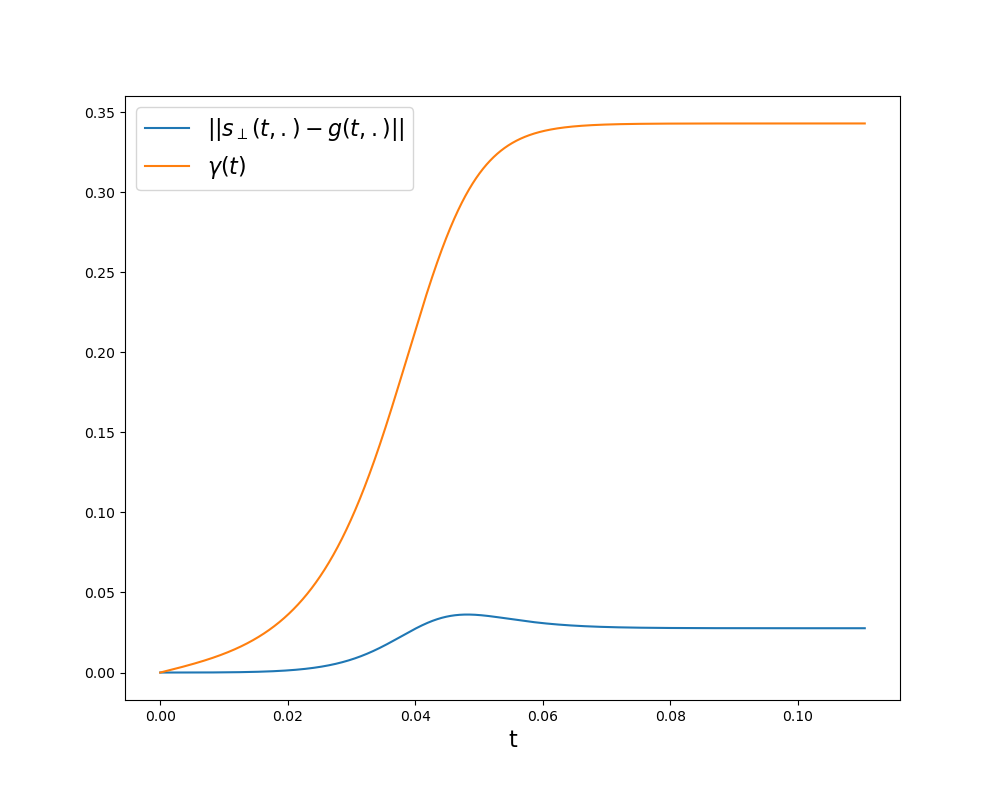}
\caption{Illustration of Theorem \ref{thm:main_tube01}: comparison of $\norm{a_{\perp}(t,.)-b(t,.)}$ with $\beta(t)$ (top two figures), and 
$\norm{s_{\perp}(t,.)-g(t,.)}$ with $\gamma(t)$ (bottom two figures), for $\varepsilon = 10^{-3}$ (left panel) and $\varepsilon = 10^{-4}$ (right panel). The 
approximation errors roughly behave as small multiples of $\beta(t)$, as predicted by the bound \eqref{eq:main_tube_perp} (see also Remark \ref{rmk:bg_comp}), with the error decreasing with $\varepsilon$.}
\label{fig:harmo01_fig3}
\end{figure}
\subsection{Learning the quadratic component}
Figures \ref{fig:harmo012_fig1}, \ref{fig:harmo012_fig2} and \ref{fig:harmo012_fig3} illustrate the results of Section \ref{sec:main_results} for the quadratic component.
\begin{figure}[h]
\centering
\includegraphics[scale=0.31]{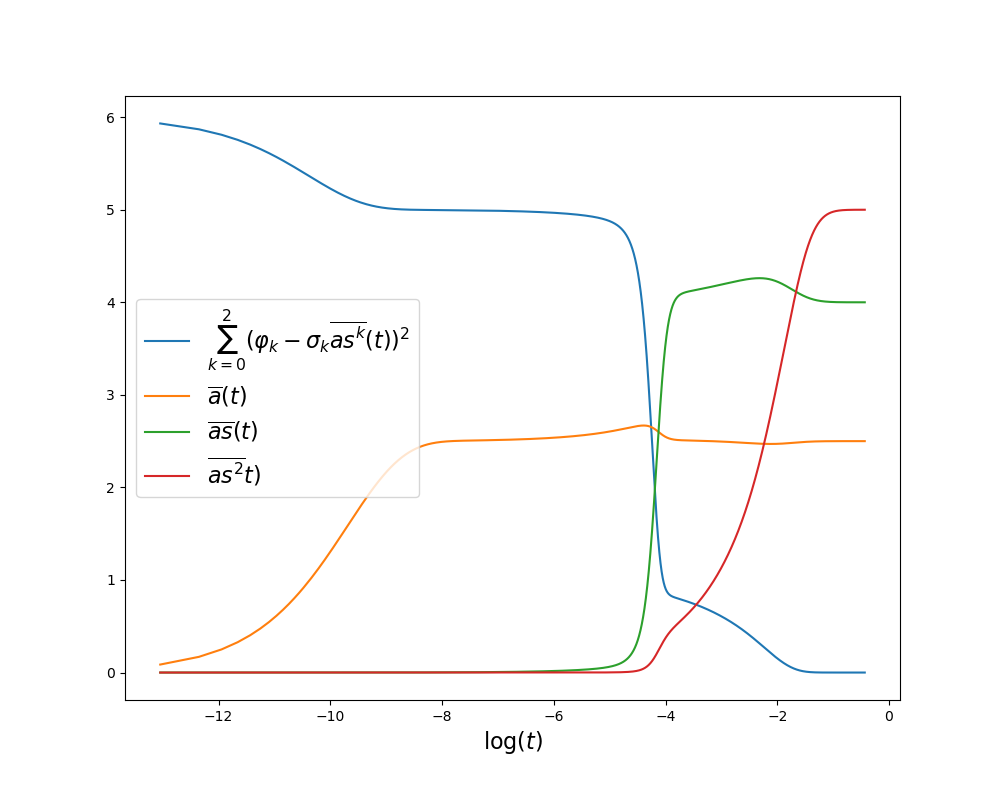}
\includegraphics[scale=0.31]{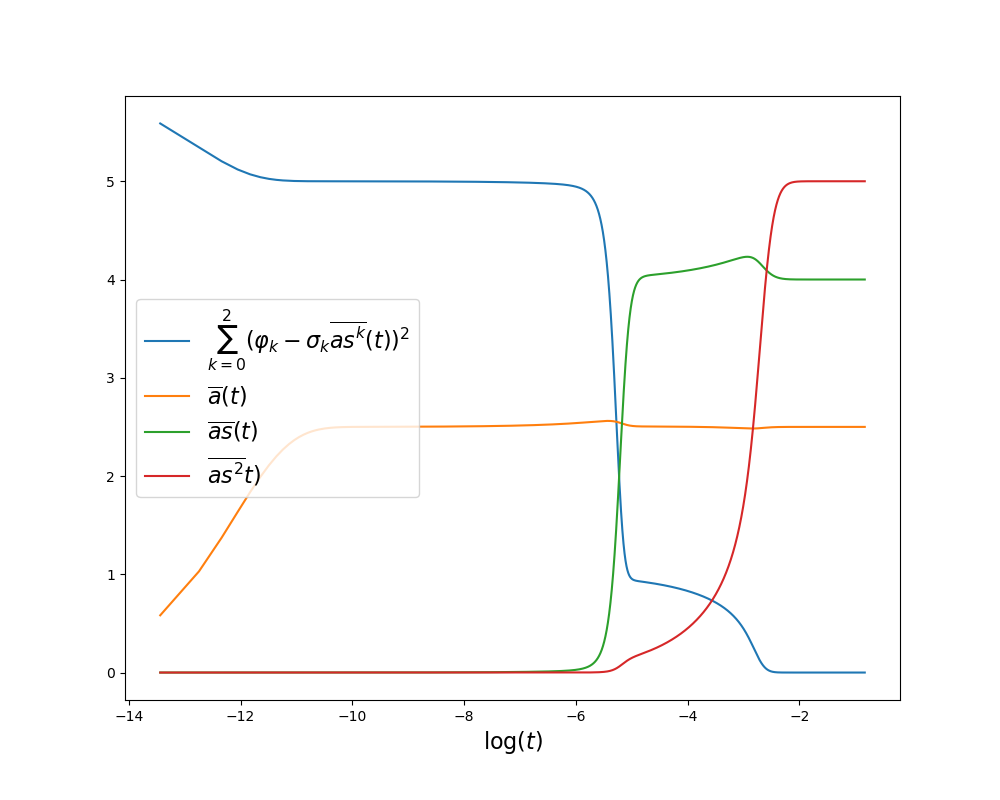}
\caption{Separation of timescales for the constant, linear and quadratic component: evolution of the quantities $\overline{a},\overline{as},\overline{as^{2}}$,
along with the cost function for the first three coefficients $\varphi_{0},\varphi_{1},\varphi_{2}$, as a function of time on a logarithmic scale, for $\varepsilon = 10^{-5}$ (left) and $\varepsilon = 10^{-6}$ (right). Note that, 
to better observe the separation, smaller values of $\varepsilon$ must be used than for the first two components.}
\label{fig:harmo012_fig1}
\end{figure}
In Figure \ref{fig:harmo012_fig2}, we compare the rescaled maps $\hat{a}(t,\omega),\hat{s}(t,\omega)$ defined at Eq.\eqref{eq:main_change_scale_a}-\eqref{eq:main_change_scale_s}, that obey 
the evolution 
\begin{align}
  \partial_{t} \hat{a}(t,\omega) &= \frac{\sigma_{0}}{\sqrt{\varepsilon}}\left(\varphi_{0}-\sigma_{0}\varepsilon^{-1/4}\overline{\hat{a}}(t)\right)+\frac{\sigma_{1}}{\varepsilon^{1/4}}\hat{s}(t,\omega)\left(\varphi_{1}-\sigma_{1}\overline{\hat{a}\hat{s}}(t)\right)+\sigma_{2}\varphi_{2}\hat{s}^{2}(t,\omega), \\
  \partial_{t} \hat{s}(t,\omega) &= \hat{a}(t,\omega)\frac{\sigma_{1}}{\varepsilon^{1/4}}\left(\varphi_{1}-\sigma_{1}\overline{\hat{a}\hat{s}}(t)\right)+2\sigma_{2}\varphi_{2}\hat{a}(t,\omega)\hat{s}(t,\omega),
\end{align}
the idealized maps $z_{1}(t,\omega),z_{2}(t,\omega)$, solutions to the dynamical system Eq.\eqref{eq:main_quad_ideal_dyn1}-\eqref{eq:main_quad_ideal_dyn2}, and the maps 
$\hat{a}_{q}(t,\omega),\hat{s}_{q}(t,\omega)$ corresponding to the solutions to Eq.\eqref{eq:main_mf_dyn1}-\eqref{eq:main_mf_dyn2} if only the coefficients $\sigma_{2},\varphi_{2}$
are kept (i.e. neglecting the effect of the learned coefficients $\varphi_{0},\varphi_{1}$), with the change in time and scale defined at \eqref{eq:main_change_scale_a}-\eqref{eq:main_change_scale_s}.
Namely, $\hat{a}_{q}(t,\omega),\hat{s}_{q}(t,\omega)$ are solutions to the dynamical system
\begin{align}
  \partial_{t} \hat{a}_{q}(t,\omega) &= \sigma_{2}\varphi_{2}\hat{s}_{q}^{2}(t,\omega), \\
  \partial_{t} \hat{s}_{q}(t,\omega) &= 2\sigma_{2}\varphi_{2}\hat{a}_{q}(t,\omega)\hat{s}_{q}(t,\omega).
\end{align}
The initial conditions for these maps are constructed by solving Eq.\eqref{eq:main_mf_dyn1}-\eqref{eq:main_mf_dyn2} until the first two coefficients are learned to a chosen precision.
To summarize these maps, we compare the quantities $\overline{\hat{a}\hat{s}^{2}},\overline{\hat{a}_{q}\hat{s}_{q}^{2}}$ and $\overline{z_{1}z_{2}^{2}}$, which govern 
the learning of the coefficient $\varphi_{2}$. Note that, owing to the change in time and scale, the target value for these quantities is $\varepsilon^{-1/4}\frac{\varphi_{2}}{\sigma_{2}}$.
\begin{figure}[h]
\centering
\includegraphics[scale=0.31]{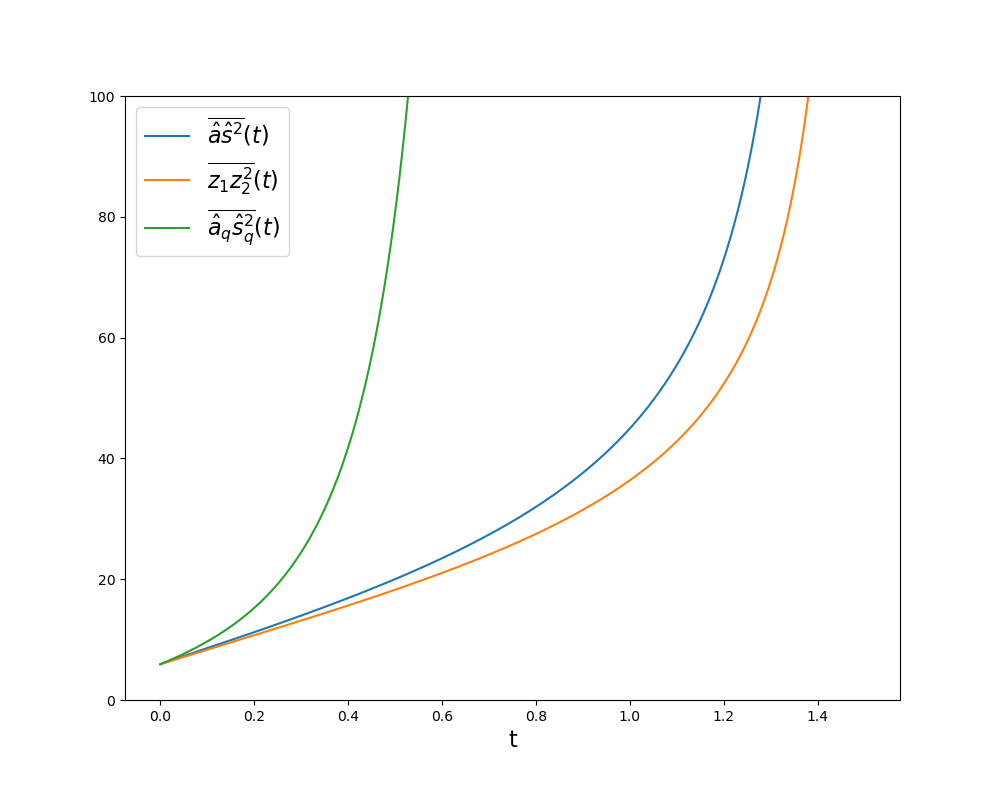}
\includegraphics[scale=0.31]{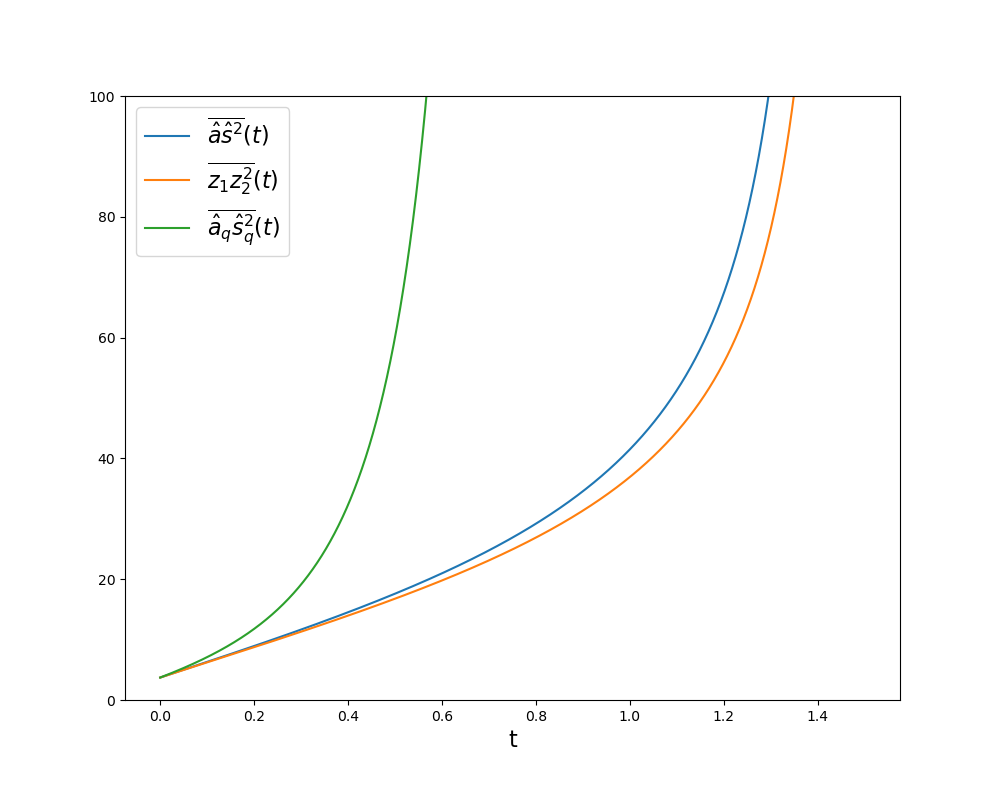}
\caption{Illustration of Proposition \ref{prop:quad_approx}: comparison of the evolution of $\overline{\hat{a}\hat{s}^{2}}$ (blue curve), $\overline{\hat{a}_{q}\hat{s}_{q}^{2}}$ (green curve) and $\overline{z_{1}z_{2}^{2}}$ (orange curve), for $\varepsilon = 10^{-5}$ (left) and $\varepsilon = 10^{-6}$ (right).
The explosion time for the quantity $\overline{\hat{a}_{q}\hat{s}_{q}^{2}}$ is significantly smaller than that of $\overline{\hat{a}\hat{s}^{2}}$, whose evolution is much closer to that of the idealized quantity $\overline{z_{1}z_{2}^{2}}$. The approximation becomes more accurate as $\varepsilon$ decreases.}
\label{fig:harmo012_fig2}
\end{figure}
Finally, in Figure \ref{fig:harmo012_fig3}, we show the evolution of the maps $z_{1}(t,\omega)$ and $z_{2}(t,\omega)$, from the initial condition described above. 
Due to our choice of $(\Omega,\rho)$, this initial condition corresponds to multiples (the terminal values of $\beta,\gamma$) of $a_{init} = -1+2\omega$.
\begin{figure}[h]
\centering
\includegraphics[scale=0.31]{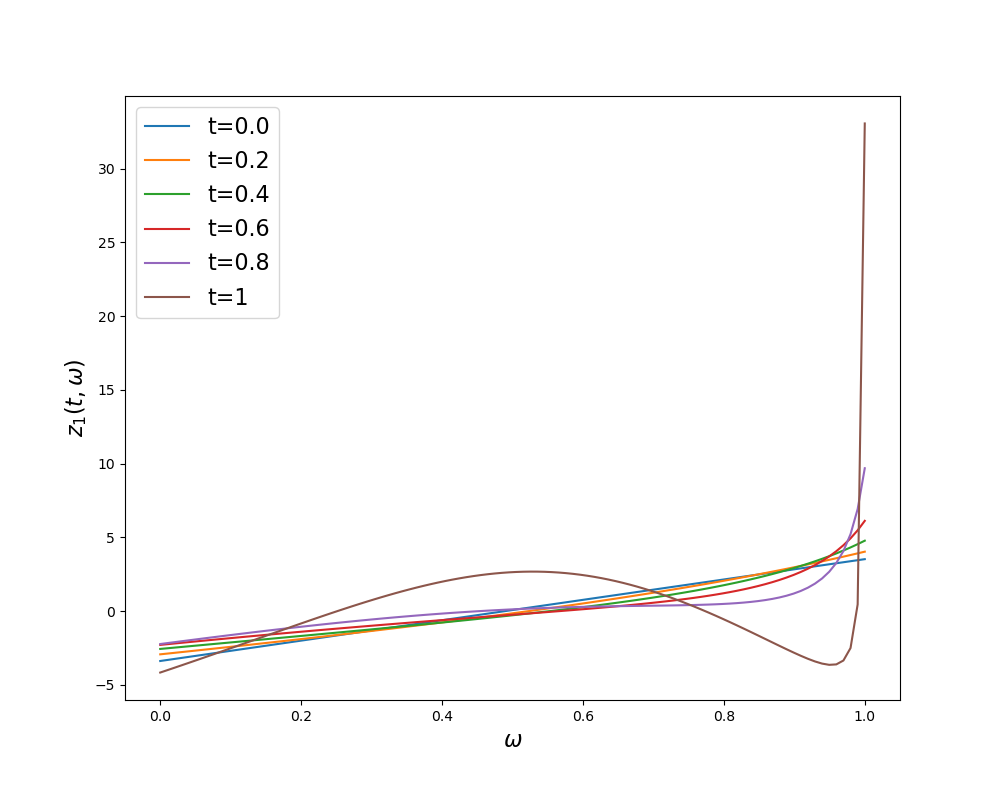}
\includegraphics[scale=0.31]{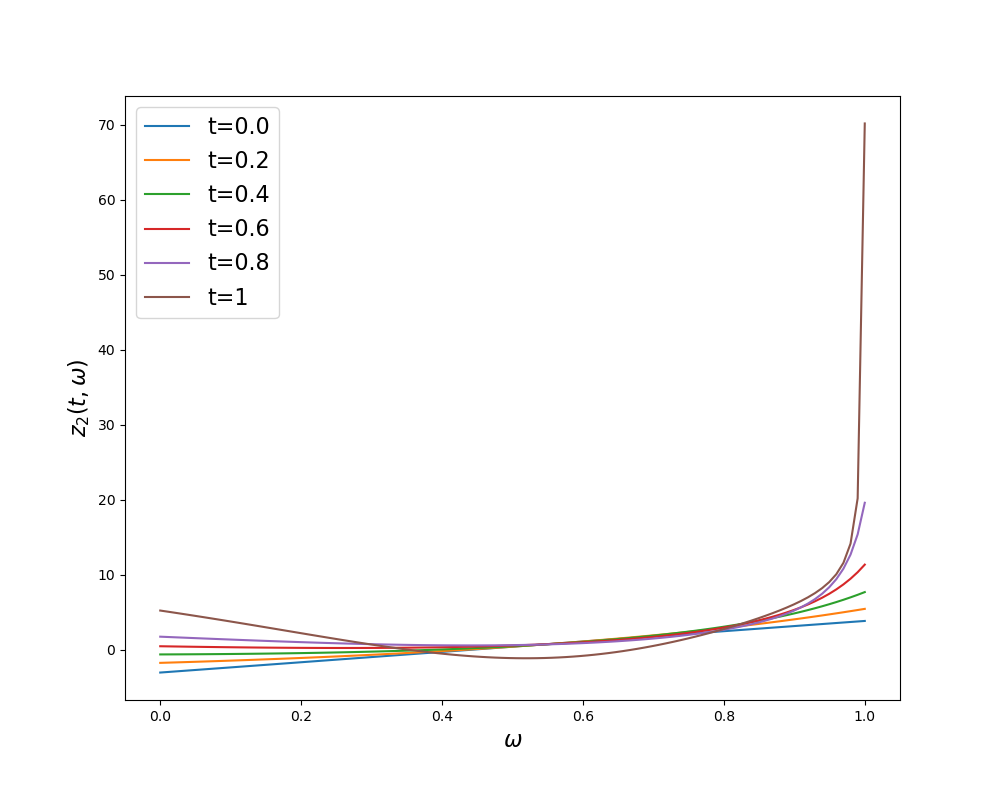}
\caption{Evolution of the maps $z_{1}(t,.),z_{2}(t,.)$, in the setting of Figure \ref{fig:harmo012_fig2}, with $\varepsilon = 10^{-6}$, for fractions of the total running time of Figure \ref{fig:harmo012_fig2}. A small fraction of neurons 
diverge towards learning the coefficient $\varphi_{2}$, while the remaining neurons adapt to the change to conserve the constraints on $\overline{z}_{1}$, $\overline{z_{1}z_{2}}$.}
\label{fig:harmo012_fig3}
\end{figure}
\section{Outline of the proof}
The purpose of this section is to give an informal description of the proof method. It revolves around the identification, at various timescales, of an idealized dynamical system that 
may be (almost) integrated explicitly and captures the phenomenology of the original one.
Owing to the initial condition, we expect to be able 
to successively approximate the evolution \eqref{eq:main_mf_dyn1}-\eqref{eq:main_mf_dyn2} by the terms corresponding to each harmonic at appropriate timescales. Two major difficulties appear:
firstly, obtaining a quantitative control on such approximations and the timescales on which they are valid; and secondly, summarizing the effect of coefficients once they have been learned. Indeed, we cannot 
expect that, once a certain harmonic has been learned, its effect becomes completely negligible as the system progresses to the next one. As we will see below, when studying the coefficients corresponding to the constant and linear part of $\varphi$, an explicit decomposition of the solution allows one to 
circumvent this second issue. Therefore we postpone dealing with this difficulty to Subsection \ref{subsec:sketch2}.
\subsection{Constructing tubes around idealized systems}
\label{subsec:sketch01}
Let us now focus on the first difficulty described above. A naive idea is to introduce sequences of thresholds for the variables $a,s$ (or related quantities such as their mean or supremum over $\Omega$), and use them to approximate \eqref{eq:main_mf_dyn1}-\eqref{eq:main_mf_dyn2} around the leading order terms 
up to the first hitting times of these thresholds. While this strategy can lead to preliminary results, it becomes quickly laborious as more complex dynamics are considered. A more efficient solution is to introduce tubes of varying radii around the curves defined 
by the idealized system, and show that the original system remains in those tubes on relevant timescales. We will now outline how to determine the radii of those tubes, namely using the knowledge obtained from integrating the idealized system, before formalizing this discussion 
in Section \ref{sec:proofs}. \newline \indent 
In what follows, $D$ will denote a constant that depends only on $\{\varphi_{k},\sigma_{k}\}_{k\geq 0}$ and $M$, and whose exact value may change from line to line. We begin by noting that, near initialization,
\begin{equation}
  \label{eq:abar_bound_1}
  \vert \partial_{t} \overline{a}(t) - \frac{\sigma_{0}}{\varepsilon}\left(\varphi_{0}-\sigma_{0}\overline{a}(t)\right) \vert \leq \frac{D}{\varepsilon}\norm{s(t,.)},
\end{equation}
so that we expect $\overline{a}(t)$ to remain close to the solution to the ODE 
\begin{equation}
  \begin{cases}
  &\frac{d}{dt}\overline{b}(t) = \frac{\sigma_{0}}{\varepsilon}\left(\varphi_{0}-\sigma_{0}\overline{b}(t)\right), \\
  &\overline{b}(0) = \overline{a}(0),
  \end{cases}
\end{equation}
namely 
\begin{equation}
  \overline{b}(t) = \frac{\varphi_{0}}{\sigma_{0}}+\left(\overline{a}(0)-\frac{\varphi_{0}}{\sigma_{0}}\right)\exp(-\frac{\sigma_{0}^{2}t}{\varepsilon}).
\end{equation}
This function immediately gives the required timescale to learn the constant component, provided we establish a sufficiently sharp control of $\overline{a}$ around $\overline{b}$.
Upon rewriting the bound \eqref{eq:abar_bound_1} as 
\begin{equation}
  \big\vert \partial_{t}(\overline{a}(t)-\overline{b}(t))+\frac{\sigma_{0}^{2}}{\varepsilon}\left(\overline{a}(t)-\overline{b}(t)\right) \big\vert \leq \frac{D}{\varepsilon}\norm{s(t,.)},
\end{equation}
we obtain the first tube for $\overline{a}$ around $\overline{b}$, the radius of which will depend on our control over $\norm{s(t,.)}$. 
Noting that we also have 
\begin{align}
  &\partial_{t} \norm{s(t,.)} \leq D \norm{a(t,.)}, \\
  &\partial_{t} \norm{a_{\perp}(t,.)} \leq \frac{D}{\varepsilon}\norm{s(t,.)},
\end{align}
near initialization, we may close this system of inequalities to obtain an upper bounding flow on $\norm{s(t,.)}$ and conclude with respect to the 
learning process of the coefficient $\varphi_{0}$.
Instead of doing so, we seek to refine our approximation to also include the learning process of $\varphi_{1}$. Upon writing the evolution of $a_{\perp},s_{\perp}$, we find:
\begin{align}
  \big \vert \partial_{t}a_{\perp}-\frac{\sigma_{1}}{\varepsilon}s_{\perp}\left(\varphi_{1}-\sigma_{1}\overline{as}\right) \big \vert &\leq \frac{D}{\varepsilon}\norm{s_{\perp}(t,.)}^{2}, \\
  \big \vert \partial_{t}s_{\perp}-\sigma_{1}a_{\perp}(\varphi_{1}-\sigma_{1}\overline{as})\big \vert &\leq D \norm{a_{\perp}(t,.)}\norm{s_{\perp}(t,.)}.
\end{align}
Assuming that $\overline{a}$ remains close to $\overline{b}$, $\overline{s}$ will remain of order $Dt$ (with $t$ a fractional power of $\varepsilon$) until $\overline{as}$ becomes of order one, 
which is necessary to learn the coefficient $\varphi_{1}$. Thus, the term $\overline{as}$ should be well approximated by $\overline{a_{\perp}s_{\perp}}$, and 
the functions $a_{\perp},s_{\perp}$ should remain close to the solutions to the system 
\begin{align}
  \partial_{t}b = \frac{\sigma_{1}}{\varepsilon}g(\varphi_{1}-\sigma_{1}\overline{bg}), \\
  \partial_{t}g = \sigma_{1}b(\varphi_{1}-\sigma_{1}\overline{bg}),
\end{align}
with initial condition $b(0,.) = a_{init,\perp}(.)$ and $g(0,.)=0$, which can be solved analytically. As before, we can use this solution to define tubes for $a_{\perp},s_{\perp}$ respectively around the curves 
$b,g$, and combine them with the tube for $\overline{a}$ around $\overline{b}$ along with the upper bound for $\overline{s}$ (i.e. a tube around $0$). The radii of the tubes 
are then determined by plugging the idealized solutions $\overline{b},b,g$ in the higher order terms of the evolution equations \eqref{eq:main_mf_dyn1}-\eqref{eq:main_mf_dyn2}, making it possible to complete
the proof upon introducing an appropriate stopping time to control the validity of the approximation. These steps are formalized in Subsection \ref{subsec:proofs_linear}.
\subsection{Approximating the learning dynamics near target sets}
\label{subsec:sketch2}
As we move to the quadratic component, there is no clear decomposition of the solution to \eqref{eq:main_mf_dyn1}-\eqref{eq:main_mf_dyn2} that leads 
to a simpler auxiliary dynamical system to study. Indeed, as shown by the bounds \eqref{eq:main_init_quad1}-\eqref{eq:main_init_quad2}, 
once the first two coefficients have been learned, the terms $\varphi_{0}-\sigma_{0}\overline{a},s(\varphi_{1}-\sigma_{1}\overline{as})$ in \eqref{eq:main_mf_dyn1} and $a(\varphi_{1}-\sigma_{1}\overline{as})$ in \eqref{eq:main_mf_dyn2} are 
respectively of the same order of magnitude as the terms $a^{2}$ and $as$ that drive the dynamics towards learning the coefficient $\varphi_{2}$. There is, however, no straightforward way to show that one dominates the other. We will show instead 
that the learning dynamics are well approximated by an auxiliary system in which the integral constraints defined by the coefficients $\varphi_{0},\varphi_{1}$ are exactly verified, and are enforced through time dependent coefficients akin to Lagrange multipliers. To provide some understanding of this procedure, 
let us look at a simplified example. With $f,g$ smooth vector fields on $\mathbb{R}^{d}$ and $h : \mathbb{R}^{d} \to \mathbb{R}_{+}$ a smooth function, consider the following dynamical system 
\begin{equation}
  \label{eq:toy_flow}
  \frac{d}{dt}z_{\varepsilon}(t) = \frac{h(z_{\varepsilon}(t))}{\varepsilon}g(z_{\varepsilon}(t))+f(z_{\varepsilon}(t)), 
\end{equation}
with any initial condition $z_{\varepsilon}(0)$ such that there exists a non-trivial time interval $[0,T_{0}]$ on which a solution exists. Assume that there exists a time $T_{1}$ and a continuous, positive,
increasing function $u(\varepsilon)$ such that $u(0) = 0$ and such that, for any $0 \leq t \leq T_{1}$, $h(z_{\varepsilon}(t)) \leq u(\varepsilon)$. We claim that, for any $t$ and $\delta$ such that 
$t+\delta \leq T_{0} \wedge T_{1}$, the increment $z_{\varepsilon}(t+\delta)-z_{\varepsilon}(t)$ is close to the tangent space at $z_{\varepsilon}(t)$ of the embedded submanifold 
\begin{equation}
  \mathcal{S} = \left\{z \in \mathbb{R}^{d} : h(z) = 0\right\}.
\end{equation}
Indeed, a second order Taylor expansion of $h(z_{\varepsilon}(t+\delta))$ around $t$ gives 
\begin{equation}
  \vert \langle \nabla h(z_{\varepsilon}(t)),z_{\varepsilon}(t+\delta)-z_{\varepsilon}(t) \rangle \vert \leq 2u(\varepsilon)+\mathcal{O}(\delta^{2}).
\end{equation}
Using this inequality and appropriately expanding $z_{\varepsilon}(t+\delta)$ around $z_{\varepsilon}(t)$ and, in turn, the flow Eq.\eqref{eq:toy_flow}, one may show that the considered solution 
to \eqref{eq:toy_flow} is close to that of the Cauchy problem 
\begin{equation}
  \begin{cases}
    \frac{d}{dt}z(t) = \alpha(t)g(z(t))+f(z(t)), \\
    z(0) = z_{0},
  \end{cases}
\end{equation}
where $z_{0}$ is such that $h(z_{0}) = 0$ (and is sufficiently close to $z_{\varepsilon}(0)$), and where the coefficient $\alpha(t)$ is given by 
\begin{equation}
  \alpha(t) = -\frac{\langle \nabla h(z(t)),f(z(t)) \rangle}{\langle \nabla h(z(t)),g(z(t)) \rangle},
\end{equation}
and imposes that $\frac{d}{dt}h(z(t)) = 0$. This operation is reminiscent of the orthogonal projection on a tangent space used to compute gradients on smooth manifolds, see e.g. \cite{boumal2023introduction}, but the projection is skewed in the direction of the vector field $g$. This intuition is formalized by Proposition \ref{prop:approx_dyn_set} in a more general setup (not specific to neural networks), which we specialize to the evolution 
\eqref{eq:main_mf_dyn1}-\eqref{eq:main_mf_dyn2} in Corollary \ref{cor:nn_approx_dyn}. This ultimately leads to Proposition \ref{prop:quad_approx} upon 
combining the aforementioned approximation result with the control over the quantities $\vert \varphi_{0}-\sigma_{0}\overline{a} \vert$ and $\vert \varphi_{1}-\sigma_{1}\overline{as} \vert$ established 
in Theorem \ref{thm:main_tube01}. We leave the construction of tubes, used to prove Theorem \ref{thm:main_tube01}, around the idealized system identified in 
Proposition \ref{prop:quad_approx}, for future work.
\begin{rmk}
  As a benefit of working with increments, the proof of Proposition \ref{prop:approx_dyn_set} does not require to estimate derivatives of the quantity $h(z_{\varepsilon}(t))$, for which we only have the control $h(z_{\varepsilon}(t)) \leq u(\varepsilon)$; these derivatives may display singular behaviour. 
\end{rmk}
\begin{rmk}
  We note that a Lagrange multiplier appears in the matched asymptotic expansion argument of \cite{berthier2024learning} when describing 
  the learning dynamics of the coefficient $\varphi_{1}$, to enforce the constraint $\overline{a} = \frac{\varphi_{0}}{\sigma_{0}}$. While the proof of Theorem \ref{thm:main_tube01} can also be carried out using the approximation result 
  from Proposition \ref{prop:approx_dyn_set} and the iteration argument outlined in Remark \ref{rmk:quad_approx2}, the decomposition $a = \overline{a}+a_{\perp}$ and 
  $s = \overline{s}+s_{\perp}$ naturally leads to the same auxiliary dynamics, allowing for a more direct proof. This, however, is only possible for the 
  coefficients $\varphi_{0},\varphi_{1}$, while Proposition \ref{prop:approx_dyn_set} allows one to identify the correct auxiliary system at any order. 
\end{rmk}
\section{Proofs}
\label{sec:proofs}
We begin with the constant and linear components, based on the construction of tubes outlined in Section \ref{subsec:sketch01}.
\subsection{Learning the constant and linear components}
\label{subsec:proofs_linear}
In this section, the definitions of $b,g,\gamma,\beta$ are those from Appendix \ref{sec:app_A}, and $\overline{b}$ is given by \eqref{eq:b_bar}.
\begin{thm}
  \label{thm:harmo_0+1_tube}
  There exists a constant $D<+\infty$ such that, for any $c \in [0,\frac{1}{8D}]$ and time $(t_{\varepsilon})_{\varepsilon>0}$ of the form 
  \begin{equation}
    t_{\varepsilon} = \left(\frac{1}{4\sigma_{1}\varphi_{1}}+c\right)\sqrt{\varepsilon}\log(\frac{1}{\varepsilon}),
  \end{equation}
  there exists a constant $C<+\infty$ such that the following holds for every $\varepsilon$ sufficiently small. For every $t\leq t_{\varepsilon}$, we have 
  \begin{align}
    \label{eq:bound_as_bar_proof}
    \vert \overline{a}-\overline{b} \vert(t) \leq Ct+C\varepsilon\beta^{2}(t), \quad \vert \overline{s}(t) \vert \leq Ct, \\
    \norm{(a_{\perp}-b)(t,.)}+\varepsilon^{-1/2}\norm{(s_{\perp}-g)(t,.)} \leq C\varepsilon^{\frac{1}{4}-cD}\beta(t).
    \label{eq:bound_as_perp_proof}
  \end{align}
\end{thm}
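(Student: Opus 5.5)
We follow the tube strategy of Subsection \ref{subsec:sketch01}: a bootstrap (continuity) argument with a stopping time. Write $a = \overline{a}+a_{\perp}$ and $s=\overline{s}+s_{\perp}$, and set $e_0 = \overline{a}-\overline{b}$, $e_a = a_{\perp}-b$, $e_s = s_{\perp}-g$. Fix a large constant $C$ (to be chosen from the data) and define the stopping time $\tau$ as the first time $t\le t_\varepsilon$ at which one of the bounds \eqref{eq:bound_as_bar_proof}--\eqref{eq:bound_as_perp_proof} holds with $2C$ in place of $C$ fails. Since $e_0(0)=e_a(0)=e_s(0)=0$ and $\overline{s}(0)=0$, we have $\tau>0$ by continuity. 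The goal is to show that on $[0,\tau]$ the bounds actually hold with $C$, which forces $\tau = t_\varepsilon$.

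\textbf{Step 1: a priori sizes on $[0,\tau]$.} From Appendix \ref{sec:app_A} and Remark \ref{rmk:bg_comp}, and using $\varepsilon^{1/4-cD}\le 1$ for $c\le\frac{1}{8D}$, we get $\norm{a_\perp}\lesssim \beta$ and $\norm{s_\perp}\lesssim\sqrt{\varepsilon}\beta$. We also get $|\overline{a}|\lesssim 1+t+\varepsilon\beta^2$ and $|\overline{s}|\lesssim t$. On the time window considered, $\beta\lesssim\varepsilon^{-1/4}$ and $\gamma\lesssim\varepsilon^{1/4}$. Hence $\norm{s}\lesssim \varepsilon^{1/4}+t$, which is small, so all the series in \eqref{eq:main_mf_dyn1}--\eqref{eq:main_mf_dyn2} with $k\ge2$ are dominated by geometric tails in $\norm{s}$. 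Here the $L^2(\mu)$ assumptions on $\sigma,\sigma',\varphi$ control $\sum k\sigma_k\norm{s}^{k-1}$.

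\textbf{Step 2: error equations.} For the mean, $\partial_t e_0 = -\frac{\sigma_0^2}{\varepsilon}e_0 + \frac{1}{\varepsilon}R_0$, where $R_0$ contains $\sigma_1\overline{s}(\varphi_1-\sigma_1\overline{as})$ and the $k\ge2$ terms $\overline{s^k}$. Since $\overline{s^2}\lesssim \norm{s_\perp}^2+\overline{s}^2\lesssim\varepsilon\beta^2+t^2$, Duhamel with the kernel $e^{-\sigma_0^2(t-r)/\varepsilon}$ converts $\frac{1}{\varepsilon}R_0$ into a bound $\lesssim \sup_{r\le t}|R_0(r)|$. Using that $\beta^2$ is increasing and varies slowly relative to $\varepsilon$, this gives $|e_0|\lesssim t+\varepsilon\beta^2$, with constants independent of $C$. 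For $\overline{s}$, note $\partial_t\overline{s}=\overline{a(1-s^2)(\dots)}$ is $O(1)$ because $\overline{a}$ is bounded and $\overline{a_\perp\cdot(\text{bounded})}$ involves products $\overline{a_\perp s_\perp^{k-1}}$ that are small. Thus $|\overline{s}|\le C' t$. For the fluctuations, we linearise around $(b,g)$. The true $\overline{as}=\overline{a}\,\overline{s}+\overline{a_\perp s_\perp}$ differs from $\overline{bg}$ by $\lesssim t+\beta(\norm{e_s}+\sqrt\varepsilon\norm{e_a})$. This gives the system
\begin{equation*}
\partial_t e_a = \tfrac{\sigma_1}{\varepsilon}\big(e_s\Phi - \sigma_1 g\,\delta\big)+\tfrac{1}{\varepsilon}R_a,\qquad \partial_t e_s = \sigma_1\big(e_a\Phi-\sigma_1 b\,\delta\big)+R_s,
\end{equation*}
with $\Phi=\varphi_1-\sigma_1\overline{bg}$ and $\delta=\overline{as}-\overline{bg}$. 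The remainders $R_a,R_s$ come from $\overline{a}s$-cross terms, from $1-s^2$, and from $k\ge 2$.

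\textbf{Step 3: Gronwall in the natural norm (main obstacle).} We use the weighted quantity $N(t)=\norm{e_a}+\varepsilon^{-1/2}\norm{e_s}$, for which the linear part has growth rate $\frac{\sigma_1}{\sqrt\varepsilon}|\Phi|+$(coupling via $\delta$). The unperturbed $\beta$ itself grows at rate $\frac{\sigma_1\varphi_1}{\sqrt\varepsilon}$ initially, so we compare $N/\beta$. The difficulty is that the linearised error grows at most like $\beta$ times an extra factor $\exp(D(t-t^*)/\sqrt\varepsilon)$ beyond the saturation time $t^*\approx\frac{1}{4\sigma_1\varphi_1}\sqrt\varepsilon\log\frac1\varepsilon$. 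This happens because after saturation $\Phi\to0$ while the $\delta$-feedback and the nonlinear terms still act on the $1/\sqrt\varepsilon$ scale. Over the extra window $c\sqrt\varepsilon\log\frac1\varepsilon$, that factor equals $\varepsilon^{-cD}$, which is exactly the loss appearing in the statement. The forcing sources, computed by plugging $(\overline{b},b,g)$ into the remainders, are of relative size $\varepsilon^{1/4}$: for instance $\varepsilon^{-1}\overline{s}\cdot\norm{s_\perp}$ relative to $\varepsilon^{-1/2}\beta$, and $g^2$ terms of size $\varepsilon\beta^2\le\varepsilon^{1/2}$. Feeding these into Duhamel for $N/\beta$ then yields $N\le C''\varepsilon^{1/4-cD}\beta$.

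\textbf{Step 4: closing.} The care needed is to choose $D$ from the Lipschitz constants of the linearised $1/\sqrt\varepsilon$-rate system, and then $C$ larger than the constants $C',C''$, which depend on $2C$ only through terms multiplied by positive powers of $\varepsilon$. Then for $\varepsilon$ small the improved bounds hold with $C$ on $[0,\tau]$, so $\tau=t_\varepsilon$.
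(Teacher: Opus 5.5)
Your proposal is correct and follows essentially the same route as the paper. Both use a stopping-time argument, Duhamel against the fast kernel $e^{-\sigma_0^2 t/\varepsilon}$ for $\overline{a}-\overline{b}$, and an $O(1)$ derivative bound for $\overline{s}$. Both then close with a Gronwall inequality for $Z=\norm{a_\perp-b}+\varepsilon^{-1/2}\norm{s_\perp-g}$, with rate $\frac{\sigma_1}{\sqrt{\varepsilon}}(\varphi_1-\sigma_1\overline{bg})+D\beta^2$ (the $D\beta^2$ comes from the $\delta$-feedback) and forcing $O(\beta^2)$, split at $t^{\circ}_{\varepsilon}$, where $D\beta^2\lesssim\varepsilon^{-1/2}$ produces the $\varepsilon^{-cD}$ loss. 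Your bootstrap with $2C$ and your normalisation by $\beta$ are cosmetic variants of the paper's hitting time and its explicit bounds on $\Gamma_\varepsilon$.

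One harmless slip: in $\partial_t\overline{s}$, the $k=2$ contribution $\overline{a_\perp s_\perp}$ is of order one (it is the quantity converging to $\varphi_1/\sigma_1$), not small. Your argument only uses that it is bounded, so nothing breaks.
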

\begin{proof}
Let us define the hitting time 
\begin{align}
  T := \inf \left\{t \geq 0 : \norm{a}+\varepsilon^{-1/2}\norm{s} \geq C\beta \quad \mbox{or} \quad \norm{a_{\perp}-b}+\varepsilon^{-1/2}\norm{s_{\perp}-g} \geq \varepsilon^{\eta}\beta\right\},
\end{align}
where $C$ and $\eta$ are constants to be determined in the course of the proof. Throughout the argument, we write $D = D(\mathrm{data}) < +\infty$ for a large 
constant whose value may change from place to place; we stress in particular that the constant $D$ does not depend on the choice of the constant $C$. Although 
we will not always state it explicitly, all the derivations in this proof are performed for times $t \leq T \wedge t_{\varepsilon}$ only. \newline
\paragraph{\textbf{Controlling $\overline{a},\overline{s}$}}
Recall that 
\begin{equation}
  \partial_{t} \overline{a} = \frac{1}{\varepsilon}\sum_{k=0}^{\infty}\sigma_{k}\overline{s^{k}}\left(\varphi_{k}-\sigma_{k}\overline{as^{k}}\right).
\end{equation}
Then, note that, on the considered time interval 
\begin{equation}
  \norm{a}+\varepsilon^{-1/2}\norm{s} \leq C\beta \leq CD\varepsilon^{-1/4},
\end{equation}
where we used the fact that $\beta(t)$ is increasing and that $\beta(t_{\varepsilon}) \leq D\varepsilon^{-1/4}$.
Thus,
\begin{equation}
  \vert \partial_{t}\overline{a}-\left(\frac{\sigma_{0}}{\varepsilon}\left(\varphi_{0}-\sigma_{0}\overline{a}\right)+\frac{\sigma_{1}\overline{s}}{\varepsilon}\left(\varphi_{1}-\sigma_{1}\overline{as}\right)\right) \vert \leq \big \vert \frac{1}{\varepsilon}\sum_{k=2}^{\infty}\sigma_{k}\overline{s^{k}}\left(\varphi_{k}-\sigma_{k}\overline{as^{k}}\right) \big \vert.
\end{equation}
But then, for any $k \geq 2$, $\overline{as^{k}}\leq \overline{as^{2}} \leq C^{3}D^{3}\varepsilon^{1/4}$, so that, for $\varepsilon$ sufficiently small,
\begin{equation}
\frac{1}{\varepsilon}\sum_{k=2}^{\infty}\sigma_{k}\overline{s^{k}}\left(\varphi_{k}-\sigma_{k}\overline{as^{k}}\right) \leq C^{2}D\beta^{2}.
\end{equation}
Owing to the definition of $\overline{b}$, we may write 
\begin{equation}
  \vert \partial_{t}(\overline{a}-\overline{b})+\frac{\sigma_{0}}{\varepsilon}(\overline{a}-\overline{b}) \vert \leq \frac{\sigma_{1}\vert \overline{s}(t) \vert }{\varepsilon}\vert \varphi_{1}-\sigma_{1}\overline{as} \vert +C^{2}D\beta,
\end{equation}
which, upon noting that $\overline{as} \leq C^{2}\sqrt{\varepsilon}\beta^{2} \leq C^{2}D$, leads to 
\begin{equation}
  \vert \partial_{t}(\overline{a}-\overline{b})+\frac{\sigma_{0}}{\varepsilon}(\overline{a}-\overline{b}) \vert \leq \frac{C^{2}D \vert \overline{s} \vert}{\varepsilon}+C^{2}D\beta^{2}.
\end{equation}
Integrating the above and using the fact that $\beta$ is increasing, we find 
\begin{align}
  \vert \overline{a}(t)-\overline{b}(t) \vert &\leq e^{-\frac{\sigma_{0}}{\varepsilon}t}\int_{0}^{t}e^{\frac{\sigma_{0}}{\varepsilon}u}\left(\frac{C^{2}D \vert \overline{s}(u) \vert}{\varepsilon}+C^{2}D\beta^{2}(u)\right)du, \\
  & \leq e^{-\frac{\sigma_{0}}{\varepsilon}t}\left(\frac{C^{2}D \vert \overline{S}(t) \vert}{\varepsilon}+C^{2}D\beta^{2}(t)\right)\int_{0}^{t}e^{\frac{\sigma_{0}}{\varepsilon}u}du,
\end{align}
where we introduced $\overline{S}(t) = \sup_{r \leq t} \vert \overline{s}(r) \vert$. Finally, we obtain 
\begin{equation}
  \label{eq:bound_a_bar}
  \vert \overline{a}(t)-\overline{b}(t) \vert \leq C^{2}D\left(\overline{S}(t)+\varepsilon\beta^{2}(t)\right).
\end{equation}
Turning to $\overline{s}$, we write 
\begin{align}
  \partial_{t}\overline{s} = \overline{a}\sigma_{1}(\varphi_{1}-\sigma_{1}\overline{as})+\sum_{k=2}^{\infty}k\sigma_{k}\overline{as^{k-1}}\left(\varphi_{k}-\sigma_{k}\overline{as^{k}}\right)-\sum_{k=1}^{\infty}k\sigma_{k}\overline{as^{k+1}}\left(\varphi_{k}-\sigma_{k}\overline{as^{k}}\right).
\end{align}
Recalling that $\overline{as^{2}} \leq C^{3}D^{3}\varepsilon^{1/4}$, and that $\sum_{k=1}^{\infty}k\sigma_{k} < \infty$, we find that, for $\varepsilon$ sufficiently small 
\begin{equation}
  \vert \partial_{t}\overline{s}-\overline{a}\sigma_{1}\left(\varphi_{1}-\sigma_{1}\overline{as}\right) \vert \leq C^{2}D\sqrt{\varepsilon}\beta^{2}+C^{3}D\varepsilon^{1/4},
\end{equation}
so that 
\begin{equation}
  \vert \partial_{t} \overline{s} \vert \leq C^{2}D(|\overline{a}|+1).
\end{equation}
Using the decomposition $\vert \overline{a} \vert \leq \vert \overline{b} \vert +\vert \overline{a}-\overline{b} \vert$ along with the bound Eq.\eqref{eq:bound_a_bar}, we find
\begin{equation}
  \vert \overline{a} \vert \leq C^{2}D\left(\overline{S}+\varepsilon\beta^{2}+1\right),
\end{equation}
which, recalling that $\beta^{2} \leq \frac{D}{\sqrt{\varepsilon}}$, leads to
\begin{equation}
  \partial_{t}\overline{s} \leq C^{4}D\left(\overline{S}+1\right).
\end{equation}
This, along with Lemma \ref{lem:Lip_sup}, implies that
\begin{equation}
  \partial_{t}\overline{S} \leq C^{4}D\left(\overline{S}+1\right),
\end{equation}
and 
\begin{equation}
  \label{eq:bound_s_bar}
  \overline{S}(t) \leq C^{4}Dt.
\end{equation}
Finally, using the above bound in Eq.\eqref{eq:bound_a_bar}, we find 
\begin{equation}
  \vert \overline{a}(t)-\overline{b}(t) \vert \leq C^{4}D\left(t+\varepsilon\beta^{2}(t)\right).
\end{equation}
\paragraph{\textbf{Controlling $a_{\perp},s_{\perp}$}} We now write 
\begin{equation}
  \label{eq:dt_a_p}
  \partial_{t}a_{\perp} = \frac{1}{\varepsilon}\sigma_{1}s_{\perp}\left(\varphi_{1}-\sigma_{1}\overline{as}\right)+\frac{1}{\varepsilon}\sum_{k=2}^{\infty}\sigma_{k}\left(s^{k}-\overline{s^{k}}\right)\left(\varphi_{k}-\sigma_{k}\overline{as^{k}}\right),
\end{equation}
and 
\begin{align}
  \label{eq:dt_s_p}
  \partial_{t}s_{\perp} = \sigma_{1}a_{\perp}(\varphi_{1}-\sigma_{1}\overline{as})&+\sum_{k=2}^{\infty}k\sigma_{k}(as^{k-1}-\overline{as^{k-1}})(\varphi_{k}-\sigma_{k}\overline{as^{k}}) \notag \\
  &-\sum_{k=1}^{\infty}k\sigma_{k}(as^{k+1}-\overline{as^{k+1}})(\varphi_{k}-\sigma_{k}\overline{as^{k}}).
\end{align}
Now note that $\overline{as} = \overline{a_{\perp}s_{\perp}}+\bar{a}\bar{s}$, so that 
\begin{align}
  \partial_{t}a_{\perp} &= \frac{1}{\varepsilon}\sigma_{1}s_{\perp}(\varphi_{1}-\sigma_{1}\overline{a_{\perp}s_{\perp}})+e_{1}(t), \\
  \partial_{t}s_{\perp} &= \sigma_{1}a_{\perp}(\varphi_{1}-\sigma_{1}\overline{a_{\perp}s_{\perp}})+e_{2}(t),
\end{align}
where 
\begin{align}
  e_{1}(t) &= -\frac{1}{\varepsilon}\sigma^{2}_{1}s_{\perp}\bar{a}\bar{s}+\frac{1}{\varepsilon}\sum_{k=2}^{\infty}\sigma_{k}\left(s^{k}-\overline{s^{k}}\right)\left(\varphi_{k}-\sigma_{k}\overline{as^{k}}\right), \\
  e_{2}(t) &= -\sigma^{2}_{1}a_{\perp}\bar{a}\bar{s}+\sum_{k=2}^{\infty}k\sigma_{k}(as^{k-1}-\overline{as^{k-1}})(\varphi_{k}-\sigma_{k}\overline{as^{k}})
  -\sum_{k=1}^{\infty}k\sigma_{k}(as^{k+1}-\overline{as^{k+1}})(\varphi_{k}-\sigma_{k}\overline{as^{k}}).
\end{align}
To establish upper bounds on $e_{1},e_{2}$, we use \eqref{eq:bound_a_bar} and \eqref{eq:bound_s_bar} along with the definition of $T$ to find 
\begin{align}
  s_{\perp}\bar{a}\bar{s} \leq C\sqrt{\varepsilon}\beta C^{4}D\left(t+\varepsilon\beta^{2}(t)\right)C^{4}Dt, \\
  a_{\perp}\bar{a}\bar{s} \leq C\beta C^{4}D\left(t+\varepsilon\beta^{2}(t)\right)C^{4}Dt.
\end{align}
Similarly, for any $t \leq T$ and $\varepsilon$ sufficiently small,
\begin{align}
  \label{eq:up_bound_e1_e2_1}
  \vert \frac{1}{\varepsilon}\sum_{k=2}^{\infty}\sigma_{k}\left(s^{k}-\overline{s^{k}}\right)\left(\varphi_{k}-\sigma_{k}\overline{as^{k}}\right) \vert &\leq C^{2}D\beta^{2}, \\
  \vert \sum_{k=2}^{\infty}k\sigma_{k}(as^{k-1}-\overline{as^{k-1}})(\varphi_{k}-\sigma_{k}\overline{as^{k}}) \vert &\leq C^{2}D\sqrt{\varepsilon}\beta^{2}, \\
  \vert \sum_{k=1}^{\infty}k\sigma_{k}(as^{k+1}-\overline{as^{k+1}})(\varphi_{k}-\sigma_{k}\overline{as^{k}}) \vert &\leq C^{3}D\varepsilon\beta^{3}.
  \label{eq:up_bound_e1_e2_2}
\end{align}
Now, since $t \leq D\sqrt{\varepsilon}\log(\frac{1}{\varepsilon})$ and $\beta(t) \leq D\varepsilon^{-1/4}$, we reach that, for $\varepsilon$ sufficiently small, 
\begin{equation}
  \vert e_{1}(t) \vert+\varepsilon^{-1/2}\vert e_{2}(t) \vert \leq C^{2}D\beta^{2}(t).
\end{equation}
To linearize $a_{\perp},s_{\perp}$ around $b,g$, we introduce $u:= a_{\perp}-b$ and $v = s_{\perp}-g$, so that 
\begin{equation}
  \overline{a_{\perp}s_{\perp}} = \overline{bg}+\overline{ug}+\overline{bv}+\overline{uv},
\end{equation}
and 
\begin{align}
  \partial_{t}a_{\perp} &= \frac{1}{\varepsilon}\sigma_{1}s_{\perp}(\varphi_{1}-\sigma_{1}\overline{bg})-\frac{1}{\varepsilon}\sigma_{1}^{2}s_{\perp}(\overline{ug}+\overline{bv}+\overline{uv})+e_{1}(t), \\
  \partial_{t}s_{\perp} &= \sigma_{1}a_{\perp}(\varphi_{1}-\sigma_{1}\overline{bg})-\sigma_{1}^{2}a_{\perp}(\overline{ug}+\overline{bv}+\overline{uv})+e_{2}(t).
\end{align}
Owing to Proposition \ref{prop:app_bg}, we know that $\norm{g} \leq D\sqrt{\varepsilon}\beta$ and $\norm{b} \leq D\beta$, so that 
\begin{equation}
  \vert \overline{ug}+\overline{bv} \vert \leq D\beta \left(\norm{u}\sqrt{\varepsilon}+\norm{v}\right),
\end{equation}
and, letting $Z = \norm{u}+\varepsilon^{-1/2}\norm{v}$, 
\begin{equation}
  \label{eq:bound_ug_bv_bar}
  \vert \overline{ug}+\overline{bv} \vert \leq D\beta \sqrt{\varepsilon}Z.
\end{equation}
Now,
\begin{align}
  \partial_{t} (u+\varepsilon^{-1/2}v) &= \frac{1}{\varepsilon}\sigma_{1}v(\varphi_{1}-\sigma_{1}\overline{bg})-\frac{1}{\varepsilon}\sigma_{1}^{2}s_{\perp}(\overline{ug}+\overline{bv}+\overline{uv})+e_{1}(t) \\
  &+\varepsilon^{-1/2}\left(\sigma_{1}u(\varphi_{1}-\sigma_{1}\overline{bg})-\sigma_{1}^{2}a_{\perp}(\overline{ug}+\overline{bv}+\overline{uv})+e_{2}(t)\right).
\end{align}
or equivalently
\begin{align}
  \partial_{t}\left(u+\varepsilon^{-1/2}v\right) \leq \frac{\sigma_{1}}{\sqrt{\varepsilon}}(u+\varepsilon^{-1/2}v)(\varphi_{1}-\sigma_{1}\overline{bg})-\sigma_{1}^{2}\left(\frac{1}{\sqrt{\varepsilon}}a_{\perp}+\frac{1}{\varepsilon}s_{\perp}\right)(\overline{ug}+\overline{bv}+\overline{uv})+e_{1}(t)+\varepsilon^{-1/2}e_{2}(t).
\end{align}
Decomposing $a_{\perp},s_{\perp}$ further (to eliminate the constant $C$ in leading order terms), we note that $\norm{a_{\perp}} \leq \norm{b}+\norm{u}$ and $\norm{s_{\perp}} \leq \norm{g}+\norm{v}$.
Using the bound Eq.\eqref{eq:bound_ug_bv_bar}, along with the fact that $\frac{\norm{b}}{\sqrt{\varepsilon}}+\frac{\norm{g}}{\varepsilon} \leq \frac{D\beta}{\sqrt{\varepsilon}}$,
we reach
\begin{align}
  \partial_{t}Z \leq \frac{\sigma_{1}}{\sqrt{\varepsilon}}Z(\varphi_{1}-\sigma_{1}\overline{bg})&+\sigma_{1}^{2}D\beta^{2}Z+\sigma_{1}^{2}D\beta\sqrt{\varepsilon} Z\left(\frac{\norm{u}}{\sqrt{\varepsilon}}+\frac{\norm{v}}{\varepsilon}\right) \\
  &+\sigma_{1}^{2}\varepsilon^{-1/2}C\beta(\norm{u}\norm{v})+e_{1}(t)+\varepsilon^{-1/2}e_{2}(t).
\end{align}
Using the threshold on $u,v$ defined in $T$, and the bounds obtained on $e_{1}(t),e_{2}(t)$, we find 
\begin{align}
  \partial_{t}Z &\leq \frac{\sigma_{1}}{\sqrt{\varepsilon}}(\varphi_{1}-\sigma_{1}\overline{bg})Z+\sigma_{1}^{2}D\beta^{2}Z+\sigma_{1}^{2}D\beta^{2}Z\varepsilon^{\eta}+\sigma_{1}^{2}\varepsilon^{2\eta}C\beta^{3}+C^{2}D\beta^{2}.
\end{align}
Thus, as long as $\eta \geq \frac{1}{8}$, we find 
\begin{equation}
  \label{eq:Z_IDE}
  \partial_{t} Z \leq \frac{\sigma_{1}}{\sqrt{\varepsilon}}(\varphi_{1}-\sigma_{1}\overline{bg})Z+D\beta^{2}Z+C^{2}D\beta^{2}.
\end{equation}
Letting $t^{\circ}_{\varepsilon} = \frac{1}{4\sigma_{1}\varphi_{1}}\sqrt{\varepsilon}\log(\frac{1}{\varepsilon})$, we begin by considering 
the interval $[0,t^{\circ}_{\varepsilon}]$, for which it will be sufficient to study the differential inequality
\begin{equation}
  \label{eq:Z_IDE_cut}
  \partial_{t} Z \leq \frac{\sigma_{1}\varphi_{1}}{\sqrt{\varepsilon}}Z+D\beta^{2}Z+C^{2}D\beta^{2}.
\end{equation}
Defining
\begin{equation}
\lambda_{\varepsilon}(t) = \int_{0}^{t}\left(\frac{\sigma_{1}\varphi_{1}}{\sqrt{\varepsilon}}+D\beta^{2}(u)\right)du,
\end{equation}
we thus have 
\begin{equation}
  Z(t) \leq \exp(\lambda_{\varepsilon}(t))\int_{0}^{t}\exp(-\lambda_{\varepsilon}(u))C^{2}D\beta^{2}(u)du.
\end{equation}
Then, 
\begin{align}
  \lambda_{\varepsilon}(t)-\frac{\sigma_{1}\varphi_{1}}{\sqrt{\varepsilon}}t &\leq D\int_{0}^{t}\exp\left(\frac{2\sigma_{1}\varphi_{1}u}{\sqrt{\varepsilon}}\right)du \\
  & \leq D\sqrt{\varepsilon}\exp\left(\frac{2\sigma_{1}\varphi_{1}t}{\sqrt{\varepsilon}}\right).
\end{align}
Now, 
\begin{align}
  \int_{0}^{t}\exp\left(-\lambda_{\varepsilon}(u)\right)\beta^{2}(u)du &\leq \int_{0}^{t}\exp(-\lambda_{\varepsilon}(u))\exp\left(\frac{2\sigma_{1}\varphi_{1}u}{\sqrt{\varepsilon}}\right)du \\
  &\leq \int_{0}^{t}\exp\left(\frac{\sigma_{1}\varphi_{1}u}{\sqrt{\varepsilon}}\right)du \\
  &\leq D\sqrt{\varepsilon}\exp\left(\frac{\sigma_{1}\varphi_{1}t}{\sqrt{\varepsilon}}\right).
\end{align}
We thus reach 
\begin{equation}
  Z(t) \leq C^{2}D\sqrt{\varepsilon}\exp\left(\frac{2\sigma_{1}\varphi_{1}}{\sqrt{\varepsilon}}t+D\sqrt{\varepsilon}\exp\left(\frac{2\sigma_{1}\varphi_{1}}{\sqrt{\varepsilon}}t\right)\right).
\end{equation}
Noting that, on the considered time interval, $\frac{1}{D}\exp\left(\frac{\sigma_{1}\varphi_{1}t}{\sqrt{\varepsilon}}\right) \leq \beta(t) \leq D\exp\left(\frac{\sigma_{1}\varphi_{1}t}{\sqrt{\varepsilon}}\right)$,
we finally reach
\begin{align}
Z(t) &\leq C^{2}D\beta(t)\sqrt{\varepsilon}\exp\left(\frac{\sigma_{1}\varphi_{1}t}{\sqrt{\varepsilon}}\right) \\
&\leq C^{2}D\varepsilon^{1/4}\beta(t). 
\end{align}
Thus, as long as $\eta \in [\frac{1}{8},\frac{1}{4}]$, the proof is complete on this time interval. We now turn to the time interval 
$[t^{\circ}_{\varepsilon},\tilde{t}^{\circ}_{\varepsilon}]$ for some $\tilde{t}^{\circ}_{\varepsilon} \geq t^{\circ}_{\varepsilon}$ to be set later. Going back to the differential inequality Eq.\eqref{eq:Z_IDE}, we introduce 
\begin{equation}
  \Gamma_{\varepsilon}(t) = \int_{0}^{t}\left(\frac{\sigma_{1}}{\sqrt{\varepsilon}}(\varphi_{1}-\sigma_{1}\overline{bg})(u)+D\beta^{2}(u)\right)du,
\end{equation}
so that
\begin{equation}
  \label{eq:Z_bound_G}
  Z(t) \leq \exp(\Gamma_{\varepsilon}(t))\int_{0}^{t}\exp(-\Gamma_{\varepsilon}(u))C^{2}D\beta^{2}(u)du.
\end{equation}
Beginning with an upper bound on $\Gamma_{\varepsilon}(t)$, we write  
\begin{align}
  \Gamma_{\varepsilon}(t) &= \int_{0}^{t^{\circ}_{\varepsilon}}\left(\frac{\sigma_{1}}{\sqrt{\varepsilon}}(\varphi_{1}-\sigma_{1}\overline{bg})(u)+D\beta^{2}(u)\right)du+\int_{t^{\circ}_{\varepsilon}}^{t}\left(\frac{\sigma_{1}}{\sqrt{\varepsilon}}(\varphi_{1}-\sigma_{1}\overline{bg})(u)+D\beta^{2}(u)\right)du \\
  &= \Gamma_{\varepsilon}(t^{\circ}_{\varepsilon})+\Gamma_{\varepsilon}(t)-\Gamma_{\varepsilon}(t^{\circ}_{\varepsilon}).
\end{align}
Using the results proven for the interval $[0,t^{\circ}_{\varepsilon}]$, we have 
\begin{equation}
  \Gamma_{\varepsilon}(t^{\circ}_{\varepsilon}) \leq \frac{1}{4}\log(\frac{1}{\varepsilon})+D.
\end{equation}
Then, note that for any $t \in [t^{\circ}_{\varepsilon},\infty)$, $\beta(t) \leq D \varepsilon^{-1/4}$. Furthermore,
Corollary \ref{cor:bet_gam_bounds} gives that 
\begin{equation}
  0 \leq \varphi_{1}-\sigma_{1}\overline{bg} \leq \frac{D}{\sqrt{\varepsilon}}\exp\left(-\frac{2\sigma_{1}\varphi_{1}}{\sqrt{\varepsilon}}t\right)+D\varepsilon.
\end{equation}
Thus,
\begin{align}
  \Gamma_{\varepsilon}(t)-\Gamma_{\varepsilon}(t^{\circ}_{\varepsilon}) &\leq \frac{D}{\varepsilon}\int_{t^{\circ}_{\varepsilon}}^{t}\exp\left(-\frac{2\sigma_{1}\varphi_{1}}{\sqrt{\varepsilon}}u\right)du+\frac{D}{\sqrt{\varepsilon}}(t-t^{\circ}_{\varepsilon}) \\
  &\leq D+\frac{D}{\sqrt{\varepsilon}}(t-t^{\circ}_{\varepsilon}),
\end{align}
and we reach the upper bound 
\begin{equation}
  \label{eq:bound_G_1}
  \exp(\Gamma_{\varepsilon}(t)) \leq D\varepsilon^{-1/4}\exp\left(\frac{D}{\sqrt{\varepsilon}}(t-t^{\circ}_{\varepsilon})\right).
\end{equation}
Turning to the integral term in Eq.\eqref{eq:Z_bound_G}, we begin with the term
\begin{align}
  \int_{0}^{t^{\circ}_{\varepsilon}}\exp(-\Gamma_{\varepsilon}(u))C^{2}D\beta^{2}(u)du &\leq C^{2}D\int_{0}^{t^{\circ}_{\varepsilon}}\exp\left(\int_{0}^{u}\frac{\sigma_{1}}{\sqrt{\varepsilon}}\left(\varphi_{1}+\sigma_{1}\overline{bg}\right)\right)(u)du \\
  &\leq C^{2}D\int_{0}^{t^{\circ}_{\varepsilon}}\exp\left(\frac{\sigma_{1}\varphi_{1}}{\sqrt{\varepsilon}}u+\frac{\sigma_{1}\sigma_{1}'}{\sqrt{\varepsilon}}\int_{0}^{u}\beta(r)\gamma(r)dr\right) \\
  &\leq C^{2}D\int_{0}^{t^{\circ}_{\varepsilon}}\exp\left(\frac{\sigma_{1}\varphi_{1}}{\sqrt{\varepsilon}}u+\frac{\sigma_{1}\sigma_{1}'}{4}\int_{0}^{u}\exp\left(\frac{2\sigma_{1}\varphi_{1}}{\sqrt{\varepsilon}}r\right)dr\right),
\end{align}
where we used Corollary \ref{cor:bet_gam_bounds} in the third line.
Since $u \leq t^{\circ}_{\varepsilon}$, it holds that $\int_{0}^{u}\exp\left(\frac{2\sigma_{1}\varphi_{1}}{\sqrt{\varepsilon}}r\right)dr \leq D$,
and we reach 
\begin{equation}
  \label{eq:bound_G_2}
  \int_{0}^{t^{\circ}_{\varepsilon}}\exp(-\Gamma_{\varepsilon}(u))C^{2}D\beta^{2}(u)du \leq C^{2}D\varepsilon^{1/4}.
\end{equation}
Continuing with the term $\int_{t^{\circ}_{\varepsilon}}^{t}\exp(-\Gamma_{\varepsilon}(u))C^{2}D\beta^{2}(u)du$,  we require a lower bound on $\Gamma_{\varepsilon}(u)$, for \\
$t \in [t^{\circ}_{\varepsilon},\infty)$. We then write
\begin{align}
  \Gamma_{\varepsilon}(t^{\circ}_{\varepsilon}) &\geq \frac{\sigma_{1}\varphi_{1}}{\sqrt{\varepsilon}}t^{\circ}_{\varepsilon}-\frac{\sigma_{1}}{\sqrt{\varepsilon}}\sigma'_{1}\int_{0}^{t^{\circ}_{\varepsilon}}\beta(u)\gamma(u)du \\
  &\geq \frac{1}{4}\log(\frac{1}{\varepsilon})-D\int_{0}^{t^{\circ}_{\varepsilon}}\exp(\frac{2\sigma_{1}\varphi_{1}}{\sqrt{\varepsilon}}u)du \\
  &\geq \frac{1}{4}\log(\frac{1}{\varepsilon})-D,
\end{align}
where we used Corollary \ref{cor:bet_gam_bounds} again in the second line.
Then, noting that, for any $t \in [t^{\circ}_{\varepsilon},\infty)$, $\beta(t) \geq \frac{1}{D}\varepsilon^{-1/4}$, we may write
\begin{align}
  \Gamma_{\varepsilon}(t)-\Gamma_{\varepsilon}(t^{\circ}_{\varepsilon}) &\geq \frac{1}{D}\int_{t^{\circ}_{\varepsilon}}^{t}\beta^{2}(r)dr \\
  &\geq \frac{1}{D}\varepsilon^{-1/2}(t-t^{\circ}_{\varepsilon}).
\end{align}
We thus reach 
\begin{align}
  \label{eq:bound_G_3}
  \int_{t^{\circ}_{\varepsilon}}^{t}\exp(-\Gamma_{\varepsilon}(u))C^{2}D\beta^{2}(u)du &\leq C^{2}D\varepsilon^{-1/2}\exp\left(-\Gamma_{\varepsilon}(t_{\varepsilon}^{\circ})\right)\int_{t^{\circ}_{\varepsilon}}^{t}\exp\left(-\frac{1}{D\sqrt{\varepsilon}}(u-t^{\circ}_{\varepsilon})\right)du \\
  &\leq C^{2}D\varepsilon^{1/4}.
\end{align}
Combining the bounds obtained at Eq.\eqref{eq:bound_G_1},\eqref{eq:bound_G_2},\eqref{eq:bound_G_3}, and recalling that, for $t \in [t^{\circ}_{\varepsilon},\infty)$, $\beta(t) \geq \frac{1}{D}\varepsilon^{-1/4}$, we obtain
\begin{equation}
  Z(t) \leq C^{2}D\beta(t)\varepsilon^{1/4}\exp\left(\frac{D}{\sqrt{\varepsilon}}(t-t^{\circ}_{\varepsilon})\right).
\end{equation}
We now set $\tilde{t}^{\circ}_{\varepsilon} = t^{\circ}_{\varepsilon}+c\sqrt{\varepsilon}\log(\frac{1}{\varepsilon})$ for some small constant $c$ to be determined in such a way that the above bound becomes 
\begin{equation}
  Z(t) \leq C^{2}D\beta(t)\varepsilon^{1/4-cD}.
\end{equation}
Thus, as long as $cD \leq \frac{1}{8}$, the proof is complete for the time interval $[0,\tilde{t}^{\circ}_{\varepsilon}]$.
\end{proof}
We now turn to a stability result regarding the approximation guarantees established in Theorem \ref{thm:harmo_0+1_tube}. A core limitation in the proof of the latter 
comes from the term $D\beta^{2}Z$ in the differential inequality \eqref{eq:Z_IDE}, which comes from controlling $\vert \overline{as}-\overline{bg} \vert$. This term grows unbounded when, in fact, the difference $\vert \overline{as}-\overline{bg} \vert$ becomes arbitrarily small at times of order $K\sqrt{\varepsilon}\log(\frac{1}{\varepsilon})$, for $K$ sufficiently large.
\begin{thm}
  \label{thm:stable_01}
  Let $t = t^{\circ}_{\varepsilon} = \frac{1}{4\sigma_{1}\varphi_{1}}\sqrt{\varepsilon}\log(\frac{1}{\varepsilon})$. There exists a positive constant $\delta$ depending only on the problem data such that, for any $K\geq\frac{1}{4\sigma_{1}\varphi_{1}}$ independent of $\varepsilon$ and time horizon $t_{\varepsilon} = K\sqrt{\varepsilon}\log(\frac{1}{\varepsilon})$, there exists a constant $C$ depending only on the problem data and $K$ such that, for $\varepsilon$ sufficiently small and 
  any $t\in [t^{\circ}_{\varepsilon},t_{\varepsilon}]$,
   \begin{align}
    \label{eq:stable_01_bounds_0}
    \vert \overline{a}-\overline{b} \vert(t) \leq Ct+C\varepsilon\beta^{2}(t), \quad \vert \overline{s}(t) \vert \leq Ct, \\
    \norm{(a_{\perp}-b)(t,.)}+\varepsilon^{-1/2}\norm{(s_{\perp}-g)(t,.)} \leq C\varepsilon^{\delta}\beta(t).
    \label{eq:stable_01_bounds_1}
  \end{align}
\end{thm}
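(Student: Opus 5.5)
The plan is to redo the bootstrap of Theorem \ref{thm:harmo_0+1_tube} on $[t^{\circ}_{\varepsilon},t_{\varepsilon}]$, started from the bounds that theorem gives at $t^{\circ}_{\varepsilon}$ (the case $c=0$). The only change is how the linearised term coming from $\overline{as}-\overline{bg}$ is treated. We reuse the hitting time
\begin{equation*}
T := \inf \left\{t \geq t^{\circ}_{\varepsilon} : \norm{a}+\varepsilon^{-1/2}\norm{s} \geq C\beta \ \mbox{or} \ \norm{a_{\perp}-b}+\varepsilon^{-1/2}\norm{s_{\perp}-g} \geq \varepsilon^{\eta}\beta\right\},
\end{equation*}
now with a small $\eta=\delta$. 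The mean bounds \eqref{eq:stable_01_bounds_0} then follow verbatim from the first part of that proof. Those estimates only used the thresholds in $T$ and $\beta\le D\varepsilon^{-1/4}$, which holds for all times by Appendix \ref{sec:app_A}. They give $|\overline a-\overline b|\le C(t+\varepsilon\beta^2)$ and $|\overline s|\le Ct$. Since $t\le K\sqrt\varepsilon\log(1/\varepsilon)$, the error terms $e_1,\varepsilon^{-1/2}e_2$ are again bounded by $C^2D\beta^2$. For $t\ge t^{\circ}_\varepsilon$, where $\beta\asymp\varepsilon^{-1/4}$, this is $C^2D\varepsilon^{-1/2}$.

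The key step is to avoid bounding $\overline{ug}+\overline{bv}$ crudely by $D\beta\sqrt\varepsilon Z$, since that produces the harmful $D\beta^2 Z$ growth. Instead we keep the linearised system for $(u,v)$ exactly:
\begin{align*}
\partial_t u &= \tfrac{\sigma_1}{\varepsilon} v(\varphi_1-\sigma_1\overline{bg}) - \tfrac{\sigma_1^2}{\varepsilon} g(\overline{ug}+\overline{bv}) + r_1,\\
\partial_t v &= \sigma_1 u(\varphi_1-\sigma_1\overline{bg}) - \sigma_1^2 b(\overline{ug}+\overline{bv}) + r_2 .
\end{align*}
Here $r_1,r_2$ collect the quadratic terms $\overline{uv}$ and the cross terms $u(\cdot)$, $v(\cdot)$ multiplied by $\overline{ug}+\overline{bv}$, together with $e_1,e_2$. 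On $t\le T$ these are of size $\varepsilon^{\eta}\beta^2 Z$ plus the source.

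Since $b=\beta a_{init,\perp}$ and $g=\gamma a_{init,\perp}$, we project $(u,v)$ onto $a_{init,\perp}$ and its orthogonal complement in $L^2(\rho)$. On the orthogonal component $(u^\perp,v^\perp)$ the rank-one term disappears. That component then obeys a linear system with growth rate at most $\sigma_1\varepsilon^{-1/2}(\varphi_1-\sigma_1\overline{bg})$, whose time integral after $t^{\circ}_{\varepsilon}$ is bounded by $D$ by Corollary \ref{cor:bet_gam_bounds}. This gives bounded amplification over the whole interval.

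The component along $a_{init,\perp}$ is a two-dimensional system. It is the linearisation of \eqref{eq:main_bg1}-\eqref{eq:main_bg2} around $(\beta,\gamma)$. That system conserves $\varepsilon^{-1}\gamma^2$ and $\beta^2$ up to a constant, since $\varepsilon\beta^2-\gamma^2$ is invariant. It also relaxes $\beta\gamma\to\varphi_1/\sigma_1'$ at rate $\asymp\sigma_1\sigma_1'(\varepsilon^{-1}\gamma^2+\beta^2)\asymp\varepsilon^{-1/2}$. So in suitable coordinates, namely the conserved combination and $\beta\gamma$, the linearisation has one neutral direction and one strongly contracting direction. The sup-norm bound on $(u,v)$ then comes from combining this with the $L^2$ projection: the $a_{init,\perp}$-coefficient is scalar, so its contribution is controlled by $\norm{a_{init,\perp}}$.

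Duhamel over an interval of length $(K-\tfrac{1}{4\sigma_1\varphi_1})\sqrt\varepsilon\log(1/\varepsilon)$ then gives two contributions:
\begin{itemize}
\item the initial error $C\varepsilon^{1/4}\beta$ from Theorem \ref{thm:harmo_0+1_tube}, times a bounded amplification;
\item the source $C^2D\varepsilon^{-1/2}$, integrated against the contracting and neutral propagators. This yields at most $C^2D\log(1/\varepsilon)\le \varepsilon^{1/4-\delta'}\beta$-type bounds.
\end{itemize}
The residual $\varepsilon^{\eta}\beta^2 Z$ terms give a Gronwall factor $\exp(DK\varepsilon^{\eta}\log(1/\varepsilon))\le D$.

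Choosing $\delta<\tfrac18$ and $\eta=\delta$ closes the bootstrap: $T\ge t_\varepsilon$ and $Z\le C\varepsilon^{\delta}\beta$. The constant $C$ depends on $K$ only through the bounded factors. Corollary-level statements (e.g.\ Corollary \ref{cor:stable_01}) would then follow by evaluating these bounds at $t_\varepsilon$.

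The main obstacle is the neutral direction along $a_{init,\perp}$. Its source term $-\sigma_1^2 g(\overline{ug}+\overline{bv})$ appears with a large prefactor, and I must check that it feeds only the contracting mode. If it does not, the source accumulates linearly in time, at order $\varepsilon^{-1/2}\cdot\sqrt\varepsilon\log(1/\varepsilon)$, which is acceptable but costs a logarithm. If direct diagonalisation is messy, I would instead use an energy functional $E=\varepsilon^{-1}\norm{v}^2+\norm{u}^2$ adapted to the invariant $\varepsilon\beta^2-\gamma^2$, and show $\partial_tE\le 2\sigma_1\varepsilon^{-1/2}(\varphi_1-\sigma_1\overline{bg})E+\text{source}$. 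In that energy the rank-one term contributes a nonpositive multiple of $(\overline{ug}+\overline{bv})^2$.
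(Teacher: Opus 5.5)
Your argument is correct, but it takes a genuinely different route from the paper's. The paper never linearizes $\overline{as}-\overline{bg}$ in $(u,v)$. Its steps are:
\begin{itemize}
\item It replaces the threshold on $Z$ by a threshold on $|\varphi_1-\sigma_1\overline{as}|$.
\item From the nonlinear system it derives the scalar inequality $\partial_t(\varphi_1-\sigma_1\overline{as})^2\le -\frac{1}{D\sqrt{\varepsilon}}(\varphi_1-\sigma_1\overline{as})^2+C^3D\varepsilon^{-1/4}|\varphi_1-\sigma_1\overline{as}|$. This uses $\overline{a^2}+\varepsilon^{-1}\overline{s^2}\gtrsim\varepsilon^{-1/2}$ after $t^\circ_\varepsilon$, maintained by an auxiliary hitting time $T'$.
\item It integrates this with Lemma \ref{lem:Ber_1/2}.
\item It then inserts the resulting bound $|\overline{as}-\overline{bg}|\le D\varepsilon^{\delta}$ as a \emph{known} small coefficient into a crude Gronwall inequality for $Z$. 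This Gronwall step is restarted at $t^\circ_\varepsilon+c\sqrt{\varepsilon}\log(1/\varepsilon)$ from the bound $Z\le C\varepsilon^{1/8}\beta$.
\end{itemize}

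You instead keep the exact linearization and exploit its structure. The harmful coupling is rank one along $a_{init,\perp}$. The orthogonal complement evolves with rate $\sigma_1\kappa/\sqrt{\varepsilon}$, where $\kappa=\varphi_1-\sigma_1'\beta\gamma$, and its time integral past $t^\circ_\varepsilon$ is $O(1)$. On the span of $a_{init,\perp}$ you have the linearized $(\beta,\gamma)$ flow. The mechanism is the same in both proofs: relaxation of $\overline{as}$ (respectively $\beta\gamma$) at rate $\asymp\varepsilon^{-1/2}$. The paper uses it at the nonlinear level; you use it at the linear level.

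\textbf{What each route buys.} Yours is sharper: it gives $Z\lesssim C^2DK\log(1/\varepsilon)$, so any $\delta<1/4$ works. It also explains why the $D\beta^2 Z$ term in \eqref{eq:Z_IDE} is spurious. The price is:
\begin{itemize}
\item it relies on $b,g\propto a_{init,\perp}$ and on the integrability of the $(\beta,\gamma)$ system, neither of which is available at the quadratic stage;
\item it needs $L^2(\rho)$-projection bookkeeping to get back to sup norms, with constants involving $\norm{a_{init,\perp}}$ and $\overline{a_{init,\perp}^2}>0$.
\end{itemize}
The paper's route is cruder but structure-free. It also directly outputs \eqref{eq:stable_phi1}, which Corollary \ref{cor:stable_01} and the quadratic analysis reuse. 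You would recover that estimate from $\overline{as}-\overline{bg}=\overline{a_{init,\perp}^2}\,J+\overline{uv}+\overline{a}\,\overline{s}$ together with the contraction of $J$.

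\textbf{Your flagged obstacle resolves in your favour.} Write $u=p\,a_{init,\perp}+u^\perp$, $v=q\,a_{init,\perp}+v^\perp$, $I=\beta p-\varepsilon^{-1}\gamma q$ and $J=\gamma p+\beta q$. The rank-one term contributes
\[ \beta\Big(-\tfrac{\sigma_1\sigma_1'}{\varepsilon}\gamma J\Big)-\varepsilon^{-1}\gamma\big(-\sigma_1\sigma_1'\beta J\big)=0 \]
to $\partial_t I$. So $I$ is exactly conserved by the whole linear part; it is the linearization of $\beta^2-\varepsilon^{-1}\gamma^2=1$. Meanwhile
\[ \partial_t J=-\sigma_1\sigma_1'(\beta^2+\varepsilon^{-1}\gamma^2)J+2\sigma_1\kappa\,\frac{I+2\varepsilon^{-1}\beta\gamma J}{\beta^2+\varepsilon^{-1}\gamma^2}. \]
In the weights $\beta^{-1}|I|$ and $\beta|J|$, which are equivalent to $|p|+\varepsilon^{-1/2}|q|$ for $t\ge t^\circ_\varepsilon$, the coupling terms are $O(\kappa/\sqrt{\varepsilon})$, hence integrable. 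The sources add at most $C^2D\beta^3$ to $\partial_t I$. After integration and the map $p\approx\beta^{-1}I$, this gives the $C^2DK\log(1/\varepsilon)$ above.

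\textbf{The energy fallback needs one adjustment.} The dissipative contribution $-\frac{2\sigma_1^2}{\varepsilon}(\overline{ug}+\overline{bv})^2$ holds for the $L^2(\rho)$ energy $\overline{u^2}+\varepsilon^{-1}\overline{v^2}$, not for sup norms. Using it would therefore require an extra pointwise Duhamel step to return to $\norm{\cdot}$, costing one more harmless logarithm.
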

\begin{proof}
  Using the notations of Theorem \ref{thm:harmo_0+1_tube}, we consider the hitting time
\begin{align}
  T := \inf \bigg\{t \geq 0 : \norm{a}+\varepsilon^{-1/2}\norm{s} \geq C\beta \quad \mbox{or} \quad \vert \varphi_{1}-\sigma_{1}\overline{as} \vert \geq C\bigg\},
\end{align}
for some constant $C$ to be set later. First, note that the argument used to establish the inequalities
\begin{align}
  \vert \overline{a}-\overline{b} \vert(t) &\leq Ct+C\varepsilon\beta^{2}(t), \\
  \vert \overline{s}(t) \vert &\leq Ct,
\end{align}
in the proof of Theorem \ref{thm:harmo_0+1_tube} is valid for any $0 \leq t \leq T$.
Thus, for any $0 \leq t \leq t_{\varepsilon} \wedge T$, we find 
\begin{equation}
  \label{eq:approx_phi0}
  \vert \frac{\varphi_{0}}{\sigma_{0}}- \overline{a}(t)\vert \leq KC\sqrt{\varepsilon}\log(\frac{1}{\varepsilon})+C\sqrt{\varepsilon}.
\end{equation}
We then write 
\begin{align}
  a\partial_{t}s &= a^{2}(1-s^{2})\sum_{k=1}^{\infty}k\sigma_{k}s^{k-1}\left(\varphi_{k}-\sigma_{k}\overline{as^{k}}\right) \\
  s\partial_{t}a &= \frac{1}{\varepsilon}\sum_{k=0}^{\infty}\sigma_{k}s^{k+1}\left(\varphi_{k}-\sigma_{k}\overline{as^{k}}\right),
\end{align}
so that 
\begin{align}
  \big \vert \partial_{t} \overline{as}-\frac{1}{\varepsilon}\sigma_{0}\overline{s}\left(\varphi_{0}-\sigma_{0}\overline{a}\right)-\left(\overline{a^{2}}+\frac{1}{\varepsilon}\overline{s^{2}}\right)\sigma_{1}\left(\varphi_{1}-\sigma_{1}\overline{as}\right) \big \vert \leq C^{3}D\sqrt{\varepsilon}\beta^{3},
\end{align}
and thus
\begin{align}
  \big \vert \partial_{t} \overline{as}-\left(\overline{a^{2}}+\frac{1}{\varepsilon}\overline{s^{2}}\right)\sigma_{1}\left(\varphi_{1}-\sigma_{1}\overline{as}\right) \big \vert \leq C^{3}D\sqrt{\varepsilon}\beta^{3}+DKC^{2}\log^{2}(\frac{1}{\varepsilon}).
\end{align}
Recalling that there exists a constant $D$ depending only on the problem data such that, for any $t \geq 0$, $\beta(t) \leq D\varepsilon^{-1/4}$, we find 
\begin{equation}
  \big \vert \partial_{t}\left(\varphi_{1}-\sigma_{1}\overline{as}\right)^{2}+\sigma_{1}^{2}\left(\overline{a^{2}}+\frac{1}{\varepsilon}\overline{s^{2}}\right)\left(\varphi_{1}-\sigma_{1}\overline{as}\right)^{2} \big \vert \leq C^{3}D \varepsilon^{-1/4}\vert \varphi_{1}-\sigma_{1}\overline{as} \vert.
\end{equation}
Allowing the exact value of $D$ to change from line to line in what follows, recall from Proposition \ref{prop:app_bg} that $\beta(t^{\circ}_{\varepsilon}) \geq \frac{1}{D}\varepsilon^{-1/4}$.
This, along with Theorem \ref{thm:harmo_0+1_tube}, gives the existence of a (strictly positive) constant $c$ such that, for any $t$ in the interval 
$[t^{\circ}_{\varepsilon},t^{\circ}_{\varepsilon}+c\sqrt{\varepsilon}\log(\frac{1}{\varepsilon})]$ and $\varepsilon$ sufficiently small,
\begin{equation}
  \overline{a^{2}}+\frac{1}{\varepsilon}\overline{s^{2}} \geq \frac{1}{D}\varepsilon^{-1/2},
\end{equation}
giving, on the same time interval, 
\begin{equation}
   \partial_{t}\left(\varphi_{1}-\sigma_{1}\overline{as}\right)^{2} \leq -\frac{1}{D\sqrt{\varepsilon}}\left(\varphi_{1}-\sigma_{1}\overline{as}\right)^{2}+C^{3}D\varepsilon^{-1/4}\big \vert \varphi_{1}-\sigma_{1}\overline{as} \big \vert.
\end{equation}
Lemma \ref{lem:Ber_1/2} started at $t = t^{\circ}_{\varepsilon}$ then gives 
\begin{equation}
  \label{eq:stable_phi1}
  \big \vert \varphi_{1}-\sigma_{1}\overline{as} \big \vert \leq \varphi_{1}\exp\left(-\frac{(t-t^{\circ}_{\varepsilon})}{D\sqrt{\varepsilon}}\right)+C^{3}D^{2}\varepsilon^{1/4}\left(1-\exp\left(-\frac{(t-t^{\circ}_{\varepsilon})}{D\sqrt{\varepsilon}}\right)\right).
\end{equation}
Now, note that Eq.\eqref{eq:dt_a_p},\eqref{eq:dt_s_p} and the bounds at \eqref{eq:up_bound_e1_e2_1}-\eqref{eq:up_bound_e1_e2_2} are valid for any $0 \leq t \leq T$, so that we may write 
\begin{align}
  \partial_{t}a_{\perp} &= \frac{1}{\varepsilon}\sigma_{1}s_{\perp}\left(\varphi_{1}-\sigma_{1}\overline{as}\right)+e_{1}(t) \\
  \partial_{t}s_{\perp} &= \sigma_{1}a_{\perp}\left(\varphi_{1}-\sigma_{1}\overline{as}\right)+e_{2}(t),
\end{align}
for some continuous functions $e_{1}(t),e_{2}(t)$ which verify $\sup_{t \leq T\wedge t_{\varepsilon}} \vert e_{1}(t) \vert +\varepsilon^{-1/2} \vert e_{2}(t) \vert \leq C^{2}D\beta^{2}$.
Then, adding and subtracting $\overline{bg}$,
\begin{align}
  \partial_{t}a_{\perp} &= \frac{\sigma_{1}}{\varepsilon}s_{\perp}\left(\varphi_{1}-\sigma_{1}\overline{bg}\right)-\frac{\sigma_{1}^{2}}{\varepsilon}s_{\perp}\left(\overline{as}-\overline{bg}\right)+e_{1}(t), \\
  \partial_{t}s_{\perp} &= \sigma_{1}a_{\perp}\left(\varphi_{1}-\sigma_{1}\overline{bg}\right)-\sigma_{1}^{2}a_{\perp}\left(\overline{as}-\overline{bg}\right)+e_{2}(t).
\end{align}
Recalling the definition $Z = \norm{u}+\varepsilon^{-1/2}\norm{v}$, with $u = a_{\perp}-b$ and $v = s_{\perp}-g$, we have, for any $0 \leq t \leq T$,
\begin{equation}
  \label{eq:partial_Z_stable}
  \partial_{t}Z \leq \frac{\sigma_{1}}{\sqrt{\varepsilon}}\left(\left(\varphi_{1}-\sigma_{1}\overline{bg}\right)+\sigma_{1}\vert \overline{as}-\overline{bg}\vert\right)Z+\frac{D}{\sqrt{\varepsilon}}\beta\vert \overline{as}-\overline{bg}\vert+e_{1}+\varepsilon^{-1/2}e_{2}.
\end{equation}
At this point, we have established that the bounds \eqref{eq:stable_phi1} and \eqref{eq:partial_Z_stable} are valid on the time interval \\
$[t^{\circ}_{\varepsilon},t^{\circ}_{\varepsilon}+c\sqrt{\varepsilon}\log(\frac{1}{\varepsilon})]$. Thus, at time 
$\overline{t}^{\circ}_{\varepsilon} = t^{\circ}_{\varepsilon}+c\sqrt{\varepsilon}\log(\frac{1}{\varepsilon})$, the following estimate holds:
\begin{align}
  \vert \varphi_{1}-\sigma_{1}\overline{as}\vert &\leq \varphi_{1}\varepsilon^{c/D}+C^{3}D\varepsilon^{1/4},
\end{align}
and, recalling that Corollary \ref{cor:bet_gam_bounds} implies the bound $0 \leq \varphi_{1}-\sigma_{1}\overline{bg} \leq \frac{D}{\sqrt{\varepsilon}}\exp(-\frac{2\sigma_{1}\varphi_{1}}{\sqrt{\varepsilon}}t)+D\varepsilon$,
we find that
\begin{equation}
  \vert \varphi_{1}-\sigma_{1}\overline{bg} \vert \leq D\varepsilon^{2\sigma_{1}\varphi_{1}c \wedge 1}.
\end{equation}
We conclude that there exists an exponent $\delta>0$, depending only on the problem data, such that, at $t=\overline{t}^{\circ}_{\varepsilon}$, 
\begin{equation}
  \vert \overline{as}-\overline{bg} \vert \leq D\varepsilon^{\delta}.
\end{equation}
Let us now introduce the hitting time 
\begin{equation}
  T' = \inf\left\{t \geq \overline{t}^{\circ}_{\varepsilon} : \overline{a^2}(t)+\frac{1}{\varepsilon}\overline{s^2}(t) \leq \frac{1}{D}\varepsilon^{-1/2}\right\}. 
\end{equation}
Clearly $T'>\overline{t}^{\circ}_{\varepsilon}$, and the inequality \eqref{eq:stable_phi1} remains valid on the time interval $[\overline{t}^{\circ}_{\varepsilon},T' \wedge T\wedge t_{\varepsilon}]$.
Noting that the r.h.s. of the inequality \eqref{eq:stable_phi1} is decreasing for any $t \geq \overline{t}^{\circ}_{\varepsilon}$, we find that, for any $t$ in the time interval $[\overline{t}^{\circ}_{\varepsilon},T' \wedge T\wedge t_{\varepsilon}]$ and 
$\varepsilon$ sufficiently small,
\begin{equation}
  \partial_{t}Z \leq D\varepsilon^{-1/2+\delta}Z+\frac{D}{\varepsilon^{3/4}}\left(\varphi_{1}\exp\left(-\frac{t}{D\sqrt{\varepsilon}}\right)+C^{3}D^{2}\varepsilon^{1/4}\right).
\end{equation}
Integrating the above, we find that, for any $t \in [\overline{t}^{\circ}_{\varepsilon},T' \wedge T\wedge t_{\varepsilon}]$,
\begin{align*}
  Z(t) &\leq Z(\overline{t}^{\circ}_{\varepsilon})\exp\left(\varepsilon^{-1/2+\delta}(t-\overline{t}^{\circ}_{\varepsilon})\right)+\frac{D}{\varepsilon^{3/4}}\int_{\overline{t}^{\circ}_{\varepsilon}}^{t}\exp\left(\varepsilon^{-1/2+\delta}(t-s)\right)\left(D\exp\left(-\frac{s}{D\sqrt{\varepsilon}}\right)+C^{3}D^{2}\varepsilon^{1/4}\right)ds,
\end{align*}
and, recalling from Theorem \ref{thm:harmo_0+1_tube} that $Z(\overline{t}^{\circ}_{\varepsilon}) \leq C\varepsilon^{1/8}\beta(\overline{t}^{\circ}_{\varepsilon})$,
\begin{align*}
  Z(t) &\leq C\varepsilon^{1/8}\beta(\overline{t}^{\circ}_{\varepsilon})\exp\left(\frac{\varepsilon^{\delta}}{\sqrt{\varepsilon}}t\right)+\frac{D}{\varepsilon^{3/4}}\exp\left(\frac{\varepsilon^{\delta}}{\sqrt{\varepsilon}}t\right)\left(D^{2}\sqrt{\varepsilon}\left[\exp\left(-\frac{\overline{t}_{\varepsilon}^{\circ}}{D\sqrt{\varepsilon}}\right)-\exp\left(-\frac{t}{D\sqrt{\varepsilon}}\right)\right]\right), \\
  &+\frac{C^{3}D^{3}}{\varepsilon^{1/2}}\exp\left(\frac{\varepsilon^{\delta}}{\sqrt{\varepsilon}}t\right)\left(\frac{\sqrt{\varepsilon}}{\varepsilon^{\delta}}\left[\exp\left(-\frac{\varepsilon^{\delta}\overline{t}_{\varepsilon}^{\circ}}{\sqrt{\varepsilon}}\right)-\exp\left(-\frac{\varepsilon^{\delta}t}{\sqrt{\varepsilon}}\right)\right]\right).
\end{align*}
Thus, for any $K>0$ independent of $\varepsilon$ and $\varepsilon$ sufficiently small, we find
\begin{align}
  Z(t_{\varepsilon}) &\leq C\varepsilon^{1/8}\beta(\overline{t}^{\circ}_{\varepsilon})+D\varepsilon^{-1/4+\delta}+C^{3}D^{3}\varepsilon^{-\delta}, \\
  &\leq D\varepsilon^{\delta}\beta(t_{\varepsilon}),
\end{align}
where we have used that $ \frac{1}{D}\varepsilon^{-1/4} \leq \beta(t) \leq D\varepsilon^{-1/4}$ for any $t \in [t^{\circ}_{\varepsilon},\infty)$. 
From this bound, we deduce that the desired bounds hold for any $t$ in $[\overline{t}^{\circ}_{\varepsilon},T' \wedge T\wedge t_{\varepsilon}]$, and, by definition of $Z$, 
that $T \wedge T' > K\sqrt{\varepsilon}\log(\frac{1}{\varepsilon})$, concluding the proof.
\end{proof}
As a corollary, we obtain the following result on the stability of the learned components $\varphi_{0},\varphi_{1}$, which also removes the $\log(\frac{1}{\varepsilon})$ factor in 
the bound \eqref{eq:approx_phi0}.
\begin{cor}
  \label{cor:stable_01}
  There exist constants $C,\delta$ and $K^{\circ}>\frac{1}{4\sigma_{1}\varphi_{1}}$ independent of $\varepsilon$, such that, for any $K>K^{\circ}$, $t_{\varepsilon} = K\sqrt{\varepsilon}\log(\frac{1}{\varepsilon})$, $t \in [K^{\circ}\sqrt{\varepsilon}\log(\frac{1}{\varepsilon}),t_{\varepsilon}]$ and $\varepsilon$ sufficiently small,
  \begin{align}
    &\vert \varphi_{0}-\sigma_{0}\overline{a}(t) \vert \leq C\sqrt{\varepsilon}, \quad \vert \varphi_{1}-\sigma_{1}\overline{as}(t) \vert \leq C\varepsilon^{1/4}, \quad \vert \overline{s}(t) \vert \leq Ct, \\
    &\norm{a_{\perp}(t,.)-\left(\frac{\varphi_{1}}{\sigma_{1}'}\right)^{1/2}\varepsilon^{-1/4}a_{init,\perp}(.)}+\varepsilon^{-1/2}\norm{s_{\perp}(t,.)-\left(\frac{\varphi_{1}}{\sigma_{1}'}\right)^{1/2}\varepsilon^{1/4}a_{init,\perp}(.)} \leq C\varepsilon^{-1/4+\delta}.
  \end{align}
\end{cor}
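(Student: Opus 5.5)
The plan is to read off Corollary \ref{cor:stable_01} from Theorem \ref{thm:stable_01}, applied with the given $K$, together with the explicit asymptotics of $\beta,\gamma$ from Appendix \ref{sec:app_A} (Proposition \ref{prop:app_bg} and Corollary \ref{cor:bet_gam_bounds}). We choose $K^{\circ}$ large enough that, for $t\geq K^{\circ}\sqrt{\varepsilon}\log(\frac{1}{\varepsilon})$, two things hold. First, the exponentially decaying transient $\frac{D}{\sqrt{\varepsilon}}\exp(-\frac{2\sigma_{1}\varphi_{1}}{\sqrt{\varepsilon}}t)$ in $\varphi_{1}-\sigma_{1}\overline{bg}$ is at most $D\varepsilon$. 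Second, the transient in \eqref{eq:stable_phi1} is at most $D\varepsilon^{1/4}$. For instance, $K^{\circ}\geq\frac{1}{4\sigma_1\varphi_1}+\frac{3}{4\sigma_{1}\varphi_{1}}+\frac{D}{4}$ works.

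\textbf{The linear constraint.} This comes directly from \eqref{eq:stable_phi1}, which is established in the proof of Theorem \ref{thm:stable_01} on the whole interval up to $T\wedge T'\wedge t_\varepsilon$. That proof also shows $T\wedge T'>t_\varepsilon$. At $t-t^{\circ}_{\varepsilon}\geq (K^{\circ}-\frac{1}{4\sigma_1\varphi_1})\sqrt{\varepsilon}\log(\frac1\varepsilon)$, the exponential term is bounded by $\varepsilon^{1/4}$, so $|\varphi_1-\sigma_1\overline{as}|\leq C\varepsilon^{1/4}$.

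\textbf{The mean $\overline{s}$.} The bound $|\overline{s}|\leq Ct$ is stated directly in \eqref{eq:stable_01_bounds_0}.

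\textbf{The constant constraint.} Here the theorem only gives the lossy bound \eqref{eq:approx_phi0}, which carries a $\log(\frac{1}{\varepsilon})$ factor, so I would redo the mean equation with the improved input. Write $\partial_t\overline{a}=\frac{\sigma_0}{\varepsilon}(\varphi_0-\sigma_0\overline{a})+\frac{\sigma_1\overline{s}}{\varepsilon}(\varphi_1-\sigma_1\overline{as})+R$, where $|R|\leq C\beta^2\leq C\varepsilon^{-1/2}$, as in the proof of Theorem \ref{thm:harmo_0+1_tube}. Once $t\geq K^\circ\sqrt\varepsilon\log\frac1\varepsilon$, the forcing term is bounded by $\frac{C}{\varepsilon}\cdot t\cdot\varepsilon^{1/4}$. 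This is $\frac{C}{\varepsilon}\varepsilon^{3/4}\log\frac1\varepsilon$, which is worse than $\varepsilon^{-1/2}$, so the naive estimate fails.

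The main obstacle is therefore the bound on $\overline{s}$, and I would first improve it to $|\overline{s}|\leq C\sqrt{\varepsilon}$. The driving term of $\partial_t\overline{s}$ is $\overline{a}\sigma_1(\varphi_1-\sigma_1\overline{as})$. Before $t^\circ_\varepsilon$, its integral is $O(\sqrt\varepsilon\log\frac1\varepsilon)$ only because $\varphi_1-\sigma_1\overline{as}=O(1)$ there. Integrating \eqref{eq:stable_phi1} after $t^\circ_\varepsilon$ gives only $O(\sqrt\varepsilon)+O(\varepsilon^{1/4}t)$. So I would not try to make $\overline{s}$ small. Instead I would solve the linear ODE for $e=\varphi_0-\sigma_0\overline{a}$ with relaxation rate $\sigma_0^2/\varepsilon$. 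This gives $|e(t)|\leq e^{-\sigma_0^2(t-t_1)/\varepsilon}|e(t_1)|+\sup_{[t_1,t]}\varepsilon\,|\text{forcing}|$. The relevant quantity is then $\sup|\overline{s}|\,|\varphi_1-\sigma_1\overline{as}|+\varepsilon\beta^2$. This is at most $Ct\varepsilon^{1/4}+C\sqrt\varepsilon$, which has order $\varepsilon^{3/4}\log\frac1\varepsilon+\sqrt\varepsilon\leq C\sqrt\varepsilon$. The transient decays to negligible size after time $O(\varepsilon\log\frac1\varepsilon)$.

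\textbf{The profile bound.} This follows from the triangle inequality using \eqref{eq:stable_01_bounds_1}: $\|a_\perp-b\|\leq C\varepsilon^{\delta}\beta\leq C\varepsilon^{-1/4+\delta}$. It then remains to show $|\beta(t)-(\varphi_1/\sigma_1')^{1/2}\varepsilon^{-1/4}|\leq C\varepsilon^{-1/4+\delta}$, and similarly for $\varepsilon^{-1/2}\gamma$. For this I would use the near-conserved quantity of \eqref{eq:main_bg1}-\eqref{eq:main_bg2}, namely $\varepsilon\partial_t\beta\cdot\beta=\gamma\partial_t\gamma$, so that $\beta^2-\gamma^2/\varepsilon=1$. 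Combined with $\sigma_1'\beta\gamma\to\varphi_1$ up to $D\varepsilon$, this yields $\beta^2\approx\varepsilon^{-1/2}\varphi_1/\sigma_1'$ up to $O(1)$. Hence the error is $O(\varepsilon^{1/4})$, which is absorbed, and similarly for $\gamma$, using Appendix \ref{sec:app_A}. If needed we shrink $\delta$ to at most $1/4$.
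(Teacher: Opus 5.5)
Your argument is essentially the paper's. The linear constraint is read off from \eqref{eq:stable_phi1}, and $\overline{s}$ comes directly from the theorem. The constant constraint is obtained by rerunning the fast relaxation of $\varphi_{0}-\sigma_{0}\overline{a}$ from a time $K_{1}\sqrt{\varepsilon}\log(\frac{1}{\varepsilon})<K^{\circ}\sqrt{\varepsilon}\log(\frac{1}{\varepsilon})$ at which the $\varepsilon^{1/4}$ linear bound already holds; the paper uses Lemma \ref{lem:Ber_1/2} on the square, you use variation of constants. The profile bound follows from \eqref{eq:stable_01_bounds_1} together with the asymptotics of $\beta,\gamma$ in Appendix \ref{sec:app_A}.

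One correction: $\frac{1}{\varepsilon}\cdot t\cdot\varepsilon^{1/4}=O(\varepsilon^{-1/4}\log(\frac{1}{\varepsilon}))$ is \emph{smaller} than $\varepsilon^{-1/2}$, not larger. So the ``naive estimate'' does not fail. It is exactly the estimate you (and the paper) end up using, and the detour about improving $\overline{s}$ is unnecessary.
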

\begin{proof}
  With $T$ the same stopping time as the one introduced in the proof of Theorem \ref{thm:harmo_0+1_tube}, we have the upper bound \eqref{eq:stable_phi1},
  valid for any $t_{\varepsilon}^{\circ} \leq t \leq T \wedge t_{\varepsilon}$. Thus, the part of the statement regarding $\vert \varphi_{1}-\sigma_{1}\overline{as} \vert$ is true for some time $K_{1}\sqrt{\varepsilon}\log(\frac{1}{\varepsilon})$, with $\frac{1}{4\sigma_{1}\varphi_{1}} < K_{1} < K$. Now, for any $t \in [K_{1}\sqrt{\varepsilon}\log(\frac{1}{\varepsilon}),t_{\varepsilon}]$, $\vert \varphi_{1}-\sigma_{1}\overline{as} \vert \leq C\varepsilon^{1/4}$ and 
  $\vert \overline{s}(t) \vert \leq CK_{1}\sqrt{\varepsilon}\log(\frac{1}{\varepsilon})$, we find  
  \begin{align}
   \vert \partial_{t}\overline{a} - \frac{\sigma_{0}}{\varepsilon}(\varphi_{0}-\sigma_{0}\overline{a}(t))\vert  \leq CDK_{1}\varepsilon^{-1/4}\log(\frac{1}{\varepsilon})+C^{2}D\varepsilon^{-1/2},
  \end{align}
  so that, for $\varepsilon$ sufficiently small,
  \begin{align}
    \partial_{t} \left(\varphi_{0}-\sigma_{0}\overline{a}\right)^{2} \leq -\frac{\sigma_{0}}{\varepsilon}\left(\varphi_{0}-\sigma_{0}\overline{a}\right)^{2}+\frac{C^{2}D}{\sqrt{\varepsilon}}\vert \varphi_{0}-\sigma_{0}\overline{a}\vert.
  \end{align}
  Lemma \ref{lem:Ber_1/2} started at $t = K_{1}\sqrt{\varepsilon}\log(\frac{1}{\varepsilon})$ then gives 
  \begin{align}
    \vert \varphi_{0}-\sigma_{0}\overline{a}(t) \vert &\leq C\sqrt{\varepsilon}\log(\frac{1}{\varepsilon})\exp\left(-\frac{\sigma_{0}}{2\varepsilon}\left(t-K_{1}\sqrt{\varepsilon}\log(\frac{1}{\varepsilon})\right)\right) \notag \\
    &\hspace{1cm}+C^{2}D\sqrt{\varepsilon}\left(1-\exp\left(-\frac{\sigma_{0}}{2\varepsilon}\left(t-K_{1}\sqrt{\varepsilon}\log(\frac{1}{\varepsilon})\right)\right)\right),
  \end{align}
  implying the result for $\vert \varphi_{0}-\sigma_{0}\overline{a} \vert$, for any $K^{\circ}>K_{1}$ independent of $\varepsilon$. Finally, choosing any $K^{\circ} \geq \frac{1}{2\sigma_{1}\varphi_{1}}$,
  Corollary \ref{cor:bet_gam_bounds} implies that there exists a constant $D$ depending only on $\varphi_{1},\sigma_{1},\sigma_{1}',M$ such that, for any $t \geq K^{\circ}\sqrt{\varepsilon}\log(\frac{1}{\varepsilon})$,
  \begin{equation}
    \norm{b(t,.)-\left(\frac{\varphi_{1}}{\sigma_{1}'}\right)^{1/2}\varepsilon^{-1/4}a_{init,\perp}(.)}+\varepsilon^{-1/2}\norm{g(t,.)-\left(\frac{\varphi_{1}}{\sigma_{1}'}\right)^{1/2}\varepsilon^{1/4}a_{init,\perp}(.)} \leq D\varepsilon^{1/4},
  \end{equation}
  implying the last part of the statement.
\end{proof}
Theorem \ref{thm:main_tube01} is a direct consequence of Theorem \ref{thm:harmo_0+1_tube} and \ref{thm:stable_01}. We now give the proofs of Proposition \ref{prop:main_0} and \ref{prop:main_1}.
\begin{proof}[Proof of Proposition \ref{prop:main_0}]
  We begin by noting that, for any $\eta \in (0,1)$, 
\begin{equation}
  \vert \overline{b}(t)-\frac{\varphi_{0}}{\sigma_{0}} \vert \leq \varepsilon^{\eta} \iff \exp\left(\frac{\sigma_{0}^{2}}{\varepsilon}t\right) \geq \frac{\vert \overline{a}(0)-\frac{\varphi_{0}}{\sigma_{0}} \vert}{\varepsilon^{\eta}},
\end{equation}
so that 
\begin{equation}
  \inf\left\{t \geq 0 : \vert \overline{b}(t)-\frac{\varphi_{0}}{\sigma_{0}} \vert \leq \varepsilon^{\eta} \right\} \leq \frac{\left(\eta\log(\frac{1}{\varepsilon})+\log (\vert \overline{a}(0)-\frac{\varphi_{0}}{\sigma_{0}} \vert )\right)_{+}}{\sigma_{0}^{2}}\varepsilon.
\end{equation}
Letting $t_{0}$ denote the right-hand-side of the above inequality, this time horizon falls within the interval of validity of Theorem \ref{thm:harmo_0+1_tube}, so there exists a constant $C$ depending only on the problem data such that 
\begin{equation}
  \vert \overline{a}-\overline{b} \vert (t_{0}) \leq Ct_{0}+C\varepsilon\beta^{2}(t_{0}).
\end{equation}
Using the bounds on $\beta(t)$ given at Proposition \ref{prop:app_bg} and keeping the leading order terms in $\varepsilon$ concludes the proof.
\end{proof}
\begin{proof}[Proof of Proposition \ref{prop:main_1}]
  First note that, for all $t \geq 0$,
  \begin{align}
  \vert \overline{as}(t)-\overline{bg}(t) \vert &\leq \vert \overline{a_{\perp}s_{\perp}}(t) -\overline{bg}(t) \vert +\vert \bar{a}(t)\bar{s}(t) \vert, \notag \\
  &\leq \norm{a_{\perp}(t,.)}\norm{s_{\perp}(t,.)-g(t,.)}+\norm{g(t,.)}\norm{a_{\perp}(t,.)-b(t,.)}+\vert \overline{a}(t) \vert\vert \overline{s}(t) \vert.
  \end{align}
  Owing to Theorem \ref{thm:harmo_0+1_tube} and \ref{thm:stable_01}, for any constant $K$ and $0 \leq t \leq K\sqrt{\varepsilon}\log(\frac{1}{\varepsilon})$, for $\varepsilon$ sufficiently small,
  there exists a constant $C$ depending only on the problem data and $K$, and a constant $\delta \in (0,\frac{1}{4})$ depending on the problem data, such that 
  \begin{align}
    &\vert \overline{a}(t) \vert \leq C, \quad \vert \overline{s}(t) \vert \leq C\sqrt{\varepsilon}\log(\frac{1}{\varepsilon}), \\
    &\norm{a_{\perp}(t,.)} \leq C\beta(t), \quad \norm{g(t,.)} \leq C\sqrt{\varepsilon}\beta(t), \\
    &\norm{s_{\perp}(t,.)-g(t,.)} \leq C\varepsilon{1/2+\delta}\beta(t), \quad \norm{a_{\perp}(t,.)-b(t,.)} \leq C\varepsilon^{\delta}\beta(t).
  \end{align}
  In what follows, we will allow $C$ to change from line to line (as long as it only depends on the problem data and $K$).
  Now, recall from Proposition \ref{prop:app_bg} that there exists a constant $D$ depending only on the problem data such that, for all $t \geq 0$, $\beta(t) \leq D\varepsilon^{-1/4}$, so that we reach, 
  for any $0 \leq t \leq K\sqrt{\varepsilon}\log(\frac{1}{\varepsilon})$, 
  \begin{equation}
    \vert \overline{as}(t)-\overline{bg}(t) \vert \leq C\varepsilon^{\delta}.
  \end{equation}
  Using Corollary \ref{cor:bet_gam_bounds} at time $t_{1} = \left(\frac{1}{4\sigma_{1}\varphi_{1}}+c_{1}\right)\sqrt{\varepsilon}\log(\frac{1}{\varepsilon})$, we find that 
  \begin{equation}
    \vert \frac{\varphi_{1}}{\sigma_{1}}-\overline{bg}(t_{1}) \vert \leq C\varepsilon^{2\sigma_{1}\varphi_{1}c_{1}}.
  \end{equation}
  We conclude that 
  \begin{equation}
    \vert \overline{as}(t_{1}) -\frac{\varphi_{1}}{\sigma_{1}}\vert \leq C\varepsilon^{\delta \wedge 2\sigma_{1}\varphi_{1}c_{1}}, 
  \end{equation}
  giving the first part of Theorem \ref{thm:main_tube01}. The second part is a direct consequence of Corollary \ref{cor:stable_01}.
\end{proof}
\subsection{Learning the quadratic component}
\label{sec:proof_quad}
We now turn to the learning process for the term $\varphi_{2}$, with the initial condition prescribed by Theorem \ref{thm:stable_01} and Corollary \ref{cor:stable_01}. Recall that we study the following truncated version of the system \eqref{eq:main_mf_dyn1}-\eqref{eq:main_mf_dyn2}:
\begin{align}
  \label{eq:quad_dyn1}
  \partial_{t} a(t,\omega) &= \frac{\sigma_{0}}{\varepsilon}\left(\varphi_{0}-\sigma_{0}\overline{a}(t)\right)+\frac{\sigma_{1}}{\varepsilon}s(t,\omega)\left(\varphi_{1}-\sigma_{1}\overline{as}(t)\right)+\frac{\sigma_{2}\varphi_{2}}{\varepsilon}s^{2}(t,\omega), \\
  \partial_{t} s(t,\omega) &= a(t,\omega)\sigma_{1}\left(\varphi_{1}-\sigma_{1}\overline{as}(t)\right)+2\sigma_{2}\varphi_{2}a(t,\omega)s(t,\omega).
  \label{eq:quad_dyn2}
\end{align}
Choosing our starting time at $t_{0} = K^{\circ}\sqrt{\varepsilon}\log(\frac{1}{\varepsilon})$, Corollary \ref{cor:stable_01} implies that the first two terms in the r.h.s. of 
\eqref{eq:quad_dyn1} are of the same order of magnitude as the third, and the first term in the r.h.s. of \eqref{eq:quad_dyn2} is of the same order as the second. 
Thus, we cannot neglect them, and will use Proposition \ref{prop:approx_dyn_set} to determine the correct idealized system. To obtain sufficiently sharp bounds for the error terms inherent 
to the approximation result of Proposition \ref{prop:approx_dyn_set}, we will need to make changes in time and scale. 
In turn, to better understand the timescale on which we need to control the solution to \eqref{eq:quad_dyn1}-\eqref{eq:quad_dyn2}, we first study the following idealized dynamical system 
\begin{align}
  \label{eq:pure_quad_1}
  \partial_{t} a_{q}(t,\omega) &= \frac{\sigma_{2}\varphi_{2}}{\varepsilon}s_{q}^{2}(t,\omega), \\
  \partial_{t} s_{q}(t,\omega) &= 2\sigma_{2}\varphi_{2}a_{q}(t,\omega)s_{q}(t,\omega),
  \label{eq:pure_quad_2}
\end{align}
with any initial condition $a_{q}(t_{0},.),s_{q}(t_{0},.)$ such that, for some constant $C$, 
\begin{align}
    \label{eq:init_quad1}
    &\vert \varphi_{0}-\sigma_{0}\overline{a_{q}}(t_{0}) \vert \leq C\sqrt{\varepsilon}, \quad \vert \varphi_{1}-\sigma_{1}\overline{a_{q}s_{q}}(t_{0}) \vert \leq C\varepsilon^{1/4}, \quad \vert \overline{s_{q}}(t_{0}) \vert \leq C\sqrt{\varepsilon}\log(\frac{1}{\varepsilon}), \\
    &\norm{a_{q,\perp}(t_{0},.)-\left(\frac{\varphi_{1}}{\sigma_{1}'}\right)^{1/2}\varepsilon^{-1/4}a_{init,\perp}(.)}+\varepsilon^{-1/2}\norm{s_{q,\perp}(t_{0},.)-\left(\frac{\varphi_{1}}{\sigma_{1}'}\right)^{1/2}\varepsilon^{1/4}a_{init,\perp}(.)} \leq C\varepsilon^{-1/4+\delta}.
    \label{eq:init_quad2}
\end{align}
Note that the Cauchy-Lipschitz theorem ensures the existence of a time $T_{q} > t_{0}$ such that, for all $\omega \in \Omega$, 
there exists a continuously differentiable solution to \eqref{eq:pure_quad_1}-\eqref{eq:pure_quad_2} on $[t_0, T_q]$. The following result shows that 
a significant fraction of neurons change scale on a timescale of order $\varepsilon^{1/4}$.
\begin{lem}
  \label{lem:pure_quad_sol}
  Let $t_{0} = K^{\circ}\sqrt{\varepsilon}\log(\frac{1}{\varepsilon})$, with $K^{\circ}$ as in Corollary \ref{cor:stable_01}.
  Then, for any 
  $t_{0} \leq t \leq T_{q}$, the solution to \eqref{eq:pure_quad_1}-\eqref{eq:pure_quad_2} with initial condition verifying \eqref{eq:init_quad1}-\eqref{eq:init_quad2} verifies 
  \begin{equation}
    \label{eq:quad_link_clean}
    a_{q}^{2}(t,\omega)-a_{q}(t_{0},\omega)^{2} = \frac{1}{2\varepsilon}\left(s_{q}^{2}(t,\omega)-s_{q}(t_{0},\omega)^{2}\right),
  \end{equation}
  Furthermore, there exists a constant $K$ depending only on the problem data such that, for any $c \in (0,1)$, $\varepsilon$ sufficiently small and any $\omega$ in the set 
  \begin{equation}
    \mathcal{U}(c) = \left\{\omega : a_{init,\perp}(\omega) \geq \frac{K}{c}\varepsilon^{1/4} \right\},
  \end{equation}
  $a_{q}(t,\omega)$ verifies
  \begin{equation}
    \label{eq:quad_pure_a}
    \frac{a_{q}(t_{0},\omega)}{1-(1-c)\sigma_{2}\varphi_{2}a_{q}(t_{0},\omega)(t-t_{0})} \leq a_{q}(t,\omega) \leq \frac{a_{q}(t_{0},\omega)}{1-(2+c)\sigma_{2}\varphi_{2}a_{q}(t_{0},\omega)(t-t_{0})}.
  \end{equation}
\end{lem}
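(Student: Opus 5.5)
The proof rests on an exact conservation law for \eqref{eq:pure_quad_1}-\eqref{eq:pure_quad_2}, which turns the evolution of $a_{q}(\cdot,\omega)$ into a scalar Riccati equation for each fixed $\omega$. That equation is then sandwiched between two pure quadratic ODEs.

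\emph{Step 1: the conservation law \eqref{eq:quad_link_clean}.} For fixed $\omega$ and $t\in[t_0,T_q]$, the equations give
\[
\partial_t a_q^2=\frac{2\sigma_2\varphi_2}{\varepsilon}a_qs_q^2, \qquad \partial_t s_q^2=4\sigma_2\varphi_2a_qs_q^2 .
\]
Hence $\partial_t\bigl(a_q^2-\frac{1}{2\varepsilon}s_q^2\bigr)=0$, and integrating from $t_0$ gives \eqref{eq:quad_link_clean}.

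\emph{Step 2: reduction to a Riccati equation.} Substitute $s_q^2=s_q(t_0)^2+2\varepsilon(a_q^2-a_q(t_0)^2)$ into \eqref{eq:pure_quad_1}. This gives the closed scalar ODE
\[
\partial_t a_q=\sigma_2\varphi_2\bigl(2a_q^2-\kappa(\omega)\bigr),\qquad \kappa(\omega)=2a_q(t_0,\omega)^2-\varepsilon^{-1}s_q(t_0,\omega)^2 .
\]
The central estimate is a relative bound of the form
\[
\bigl|\varepsilon^{-1}s_q(t_0,\omega)^2-a_q(t_0,\omega)^2\bigr|\le c\,a_q(t_0,\omega)^2 \quad\text{for }\omega\in\mathcal{U}(c),
\]
which yields $(1-c)a_q(t_0)^2\le\kappa\le(1+c)a_q(t_0)^2$. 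To obtain it, write $A(\omega)=(\varphi_1/\sigma_1')^{1/2}\varepsilon^{-1/4}a_{init,\perp}(\omega)$. By \eqref{eq:init_quad1}-\eqref{eq:init_quad2}, both $a_q(t_0,\omega)$ and $\varepsilon^{-1/2}s_q(t_0,\omega)$ equal $A(\omega)$ up to the mean contributions ($\overline{a_q}=O(1)$ and $\varepsilon^{-1/2}\overline{s_q}=O(\log\frac1\varepsilon)$) and the perpendicular errors. On $\mathcal U(c)$ we have $A\ge (\varphi_1/\sigma_1')^{1/2}K/c>0$. So $a_q(t_0,\omega)>0$, and the difference of squares is at most $(2A+\mathrm{err})\,\mathrm{err}$. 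We then need $\mathrm{err}/A\lesssim c$ and choose $K$ accordingly.

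\emph{Step 3: comparison and integration.} Since $\kappa\le(1+c)a_q(t_0)^2$, the right-hand side is positive at $t_0$. Hence $a_q$ is increasing and stays $\ge a_q(t_0)>0$ on $[t_0,T_q]$.
\begin{itemize}
\item Lower bound: $2a_q^2-\kappa\ge(1-c)a_q^2$, so $\partial_t(1/a_q)\le-(1-c)\sigma_2\varphi_2$. Integrating gives the left inequality of \eqref{eq:quad_pure_a}.
\item Upper bound: $\kappa\ge(1-c)a_q(t_0)^2\ge -c\,a_q^2$, so $\partial_t a_q\le(2+c)\sigma_2\varphi_2a_q^2$. Integrating $1/a_q$ gives the right inequality, valid while the denominator stays positive; this is automatic on $[t_0,T_q]$ since the solution exists there.
\end{itemize}

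\emph{Main obstacle.} The delicate point is the relative-error estimate in Step 2. The $O(1)$ and $O(\log\frac1\varepsilon)$ mean terms are harmless once $a_{init,\perp}\gtrsim\varepsilon^{1/4}/c$, which is exactly what $\mathcal U(c)$ encodes. The perpendicular error of size $\varepsilon^{-1/4+\delta}$ from \eqref{eq:init_quad2} is another matter: compared with $A\sim\varepsilon^{-1/4}a_{init,\perp}$, it gives relative error $\varepsilon^{\delta}/a_{init,\perp}$. This is small only if $a_{init,\perp}\gg\varepsilon^{\delta}$, which the threshold $K\varepsilon^{1/4}/c$ does not guarantee when $\delta<1/4$.

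I would first try to show that this error affects $a_q(t_0)$ and $\varepsilon^{-1/2}s_q(t_0)$ coherently, so that it largely cancels in $\varepsilon^{-1}s_q^2-a_q^2$. The relevant $L^\infty$ errors come from the mismatch $\beta-\gamma/\sqrt\varepsilon$ and from the tube bounds of Theorem~\ref{thm:stable_01}; this cancellation should be checked via Corollary~\ref{cor:bet_gam_bounds}. Failing that, I would enlarge the threshold defining $\mathcal U(c)$ to $a_{init,\perp}\ge K\varepsilon^{\delta}/c$, which still captures a set of order-one measure of neurons.
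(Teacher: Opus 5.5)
You have not proved the lemma, but the step you flag is exactly where the paper's own proof breaks, and it cannot be closed under the stated hypotheses.

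\textbf{How your route compares with the paper's.} Your skeleton is the paper's: the conservation law, then a closed scalar equation for $a_q(\cdot,\omega)$, then comparison with a multiple of $\sigma_2\varphi_2a_q^2$ on a set where $a_q(t_0,\omega)$ is large and positive, then integration of $1/a_q$. Your exact Riccati form $\partial_t a_q=\sigma_2\varphi_2(2a_q^2-\kappa)$ with a relative bound on $\kappa$ is a cleaner version of the paper's $\pm C$ sandwich on $\mathcal{A}\cap\mathcal{B}$.

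\textbf{Where the paper fails at your obstacle.} The paper asserts $\|s_q(t_0,\cdot)-\sqrt{\varepsilon}\,a_q(t_0,\cdot)\|\le C\varepsilon^{3/4}$. That only accounts for the term $(\gamma(t_0)-\sqrt{\varepsilon}\beta(t_0))a_{init,\perp}=-\sqrt{\varepsilon}e^{-\theta(t_0)}a_{init,\perp}$. The bounds \eqref{eq:init_q_1}-\eqref{eq:init_q_2} leave further contributions $\sqrt{\varepsilon}\varphi_0/\sigma_0$, $O(\sqrt{\varepsilon}\log\frac{1}{\varepsilon})$ and $O(\varepsilon^{1/4+\delta})$. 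The paper then deduces $\mathcal{U}(c)\subset\mathcal{A}\cap\mathcal{B}$ from \eqref{eq:init_q_1}. But the error there, $C\varepsilon^{\delta}\beta(t_0)\asymp\varepsilon^{-1/4+\delta}$, swamps $\beta(t_0)a_{init,\perp}\gtrsim K/c$. So your diagnosis is correct.

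\textbf{The statement is false as written.} It fails when $\delta<\frac14$ and $a_{init,\perp}$ takes small positive values, e.g.\ $a_{init}(\omega)=-1+2\omega$. Your cancellation route therefore cannot succeed: \eqref{eq:init_quad1}-\eqref{eq:init_quad2} constrain the errors in $a_{q,\perp}$ and $\varepsilon^{-1/2}s_{q,\perp}$ only separately.

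Here is a counterexample. With $A$ as in your Step 2, take $s_q(t_0)=\sqrt{\varepsilon}A$ and $a_q(t_0)=\frac{\varphi_0}{\sigma_0}+A+\varepsilon^{-1/4+\delta}(\psi-\overline{\psi})$. Here $0\le\psi\le1$ is a narrow continuous bump at a point $\omega_0$ with $a_{init,\perp}(\omega_0)=2K\varepsilon^{1/4}/c$. All hypotheses hold, since the induced change of $\overline{a_qs_q}$ is $\varepsilon^{1/4+\delta}\,\overline{\psi A}=O(\varepsilon^{1/4+\delta})$. At $\omega_0\in\mathcal{U}(c)$ the true slope is $\partial_ta_q(t_0,\omega_0)=\sigma_2\varphi_2A(\omega_0)^2=O(1)$. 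The left side of \eqref{eq:quad_pure_a} starts at the same value with slope $(1-c)\sigma_2\varphi_2a_q(t_0,\omega_0)^2\gtrsim\varepsilon^{-1/2+2\delta}$. So the lower bound fails immediately after $t_0$.

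Your remark that the mean terms are harmless is also wrong for $\overline{s_q}$. The choice $a_q(t_0)=\frac{\varphi_0}{\sigma_0}+A$, $s_q(t_0)=\sqrt{\varepsilon}\,(A+C\log\frac{1}{\varepsilon})$ is admissible. At the same $\omega_0$ its initial slope $\sigma_2\varphi_2(A(\omega_0)+C\log\frac{1}{\varepsilon})^2$ exceeds the slope $(2+c)\sigma_2\varphi_2a_q(t_0,\omega_0)^2=O(1)$ of the right side. So the upper bound fails too.

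\textbf{The right repair.} Use your second option, with threshold $\frac{K}{c}\bigl(\varepsilon^{\delta}\vee\varepsilon^{1/4}\log\frac{1}{\varepsilon}\bigr)$. Keeping $\frac{K}{c}\varepsilon^{1/4}$ would require $O(1)$ sup-norm control of both $a_q(t_0)-A$ and $\varepsilon^{-1/2}s_q(t_0)-A$, which Theorem~\ref{thm:stable_01} and Corollary~\ref{cor:stable_01} do not provide. On the enlarged set, for $K$ large, one gets $a_q(t_0,\omega)\ge\frac12A(\omega)>0$ and $e:=|a_q(t_0,\omega)-\varepsilon^{-1/2}s_q(t_0,\omega)|\le\frac{c}{3}a_q(t_0,\omega)$. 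Hence $|\varepsilon^{-1}s_q(t_0)^2-a_q(t_0)^2|\le\bigl(2e/a_q(t_0)+e^2/a_q(t_0)^2\bigr)a_q(t_0)^2\le c\,a_q(t_0)^2$, and your Steps 2--3 go through.

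\textbf{One further correction.} Positivity of $1-(2+c)\sigma_2\varphi_2a_q(t_0,\omega)(t-t_0)$ is not automatic on $[t_0,T_q]$. Existence of the solution only keeps the lower bound's denominator positive. Because $\kappa>0$, the upper comparison function blows up strictly before $a_q$ does. The right inequality must therefore be restricted to $t-t_0<[(2+c)\sigma_2\varphi_2a_q(t_0,\omega)]^{-1}$, a restriction the paper also leaves implicit.
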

\begin{proof}
  For any $t_{0} \leq t \leq T_{q}$, 
  \begin{align}
    \partial_{t} a_{q}(t,\omega)^{2} = \frac{2\sigma_{2}\varphi_{2}}{\varepsilon}a_{q}(t,\omega)s_{q}^{2}(t,\omega), \\
    \partial_{t} s_{q}(t,\omega)^{2} = 4\sigma_{2}\varphi_{2}a_{q}(t,\omega)s_{q}^{2}(t,\omega),
  \end{align}
  so that,
  \begin{equation}
    \label{eq:quad_as_link}
    a_{q}(t,\omega)^{2} = \frac{1}{2\varepsilon}s_{q}(t,\omega)^{2}-\frac{1}{2\varepsilon}s_{q}(t_{0},\omega)^{2}+a_{q}(t_{0},\omega)^{2}.
  \end{equation}
  Now, recall that, owing to Theorem \ref{thm:stable_01} and Corollary \ref{cor:stable_01}, there exist constants $C,\delta$ independent of $\varepsilon$ such that 
  \begin{align}
    \vert \overline{a}_{q}(t_{0})-\frac{\varphi_{0}}{\sigma_{0}} \vert \leq C\sqrt{\varepsilon}, \quad \vert \overline{s}_{q}(t_{0}) \vert \leq CK^{\circ}\sqrt{\varepsilon}\log(\frac{1}{\varepsilon}), \\
    \norm{(a_{q,\perp}-b)(t_{0},.)}+\varepsilon^{-1/2}\norm{(s_{q,\perp}-g)(t_{0},.)} \leq C\varepsilon^{\delta}\beta(t_{0}).
  \end{align}
  To compare $s_{q}(t_{0},\omega)$ and $a_{q}(t_{0},\omega)$, we write 
  \begin{align}
    a_{q}(t_{0},\omega) = b(t_{0},\omega)+a_{q,\perp}(t_{0},\omega)-b(t_{0},\omega)+\overline{a_{q}}(t_{0}), \\
    s_{q}(t_{0},\omega) = g(t_{0},\omega)+s_{q,\perp}(t_{0},\omega)-g(t_{0},\omega)+\overline{s_{q}}(t_{0}),
  \end{align}
  so that, allowing $C$ to change from line to line, 
  \begin{align}
    \label{eq:init_q_1}
    \norm{a_{q}(t_{0},.)-\frac{\varphi_{0}}{\sigma_{0}}-\beta(t_{0})a_{init,\perp}(.)} &\leq C\varepsilon^{\delta}\beta(t_{0})+C\sqrt{\varepsilon}, \\
    \norm{s_{q}(t_{0},.)-\gamma(t_{0})a_{init,\perp}(.)} &\leq C\sqrt{\varepsilon}\log(\frac{1}{\varepsilon})+C\varepsilon^{1/2+\delta}\beta(t_{0}).
    \label{eq:init_q_2}
  \end{align}
  Choosing any $K^{\circ} > \frac{1}{4\sigma_{1}\varphi_{1}}$, Proposition \ref{prop:app_bg} then gives that 
  \begin{equation}
    \theta(t_{0}) = \frac{1}{2}\log(\frac{4\varphi_{1}}{\sigma_{1}'\sqrt{\varepsilon}}+o(\frac{1}{\sqrt{\varepsilon}})),
  \end{equation}
  so that there exists a constant $D$ depending only on $\sigma_{1},\varphi_{1},\sigma_{1}'$ such that 
  \begin{equation}
   \vert \beta(t_0) -\frac{1}{2}e^{\theta(t_{0})} \vert \leq D\varepsilon^{1/4}\quad \mbox{and} \quad \vert \gamma(t_0) - \frac{\sqrt{\varepsilon}}{2}e^{\theta(t_{0})} \vert \leq D\varepsilon^{3/4}.
  \end{equation}
  Combining the bounds \eqref{eq:init_q_1}, \eqref{eq:init_q_2} with the above, keeping the leading order in $\varepsilon$ and absorbing the constant $D$ in $C$, we find that 
  \begin{equation}
    \norm{s_{q}(t_{0},.)-\sqrt{\varepsilon}a_{q}(t_{0},.)} \leq C\varepsilon^{3/4},
  \end{equation}
  which, upon recalling Eq.\eqref{eq:quad_as_link}, implies in turn  
  \begin{align}
    \norm{a_{q}(t,.)^{2}-\frac{1}{2\varepsilon}s_{q}(t,.)^{2}-\frac{1}{2}a_{q}(t_{0},.)^{2}} &\leq C \norm{\frac{s_{q}(t_{0},.)}{\sqrt{\varepsilon}}-a_{q}(t_{0},.)}\norm{\frac{s_{q}(t_{0},.)}{\sqrt{\varepsilon}}+a_{q}(t_{0},.)}, \\
    &\leq C.
  \end{align}
  Then, for any $\omega \in \Omega$, we may bound the evolution of $a_{q}(t,\omega)$ as follows
  \begin{equation}
  2\sigma_{2}\varphi_{2}a_{q}^{2}(t,\omega)-\sigma_{2}\varphi_{2}a_{q}(t_{0},\omega)^{2}-C  \leq \partial_{t}a_{q}(t,\omega) \leq 2\sigma_{2}\varphi_{2}a_{q}^{2}(t,\omega)-\sigma_{2}\varphi_{2}a_{q}(t_{0},\omega)^{2}+C,
  \end{equation}
  and, since $a_{q}(t,\omega)$ is increasing for any $\omega$, $a_{q}(t,\omega)^{2}$ is increasing for any $\omega$ in the set 
  \begin{equation}
    \mathcal{A} = \left\{\omega : a_{q}(t_{0},\omega)\geq 0\right\}.
  \end{equation}
We may then write, for any $\omega \in \mathcal{A}$,
  \begin{equation}
  \sigma_{2}\varphi_{2}a_{q}^{2}(t,\omega)-C \leq \partial_{t}a_{q}(t,\omega) \leq 2\sigma_{2}\varphi_{2}a_{q}^{2}(t,\omega)+C.
  \end{equation}
  Thus, for $\varepsilon$ sufficiently small, the upper and lower bounds at \eqref{eq:quad_pure_a} hold for any $\omega$ in the set 
  \begin{equation}
    \mathcal{B} = \left\{\omega : a_{q}(t_{0},\omega)^{2} \geq \frac{C}{c\sigma_{2}\varphi_{2}}\right\}.
  \end{equation}
  The bound \eqref{eq:init_q_1} then ensures that $\mathcal{U}(c) \subset \mathcal{A} \cap \mathcal{B}$, concluding the proof.
\end{proof}
Now, note that, to learn the coefficient $\varphi_{2}$, we require the quantity $\overline{as^{2}}$ to be of order one, which, owing to Eq.\eqref{eq:quad_link_clean}, 
requires $a_{q}$ to be of order $\varepsilon^{-1/3}$ and $s_{q}$ of order $\varepsilon^{1/6}$. The bounds from Eq.\eqref{eq:quad_pure_a} then show that the required time 
for $a_{q}$ (or $s_q$) to make any significant change, e.g. reach a large multiple of its initial condition or the required scale $\varepsilon^{-1/3}$ will be of order $\varepsilon^{1/4}$.
Hence, we would like to control the solution to \eqref{eq:quad_dyn1}-\eqref{eq:quad_dyn2} on a timescale of order $\varepsilon^{1/4}$. The following lemma shows 
that the appropriate idealized system to study is the one introduced in Section \ref{sec:main_results} at Eq.\eqref{eq:main_quad_ideal_dyn1}-\eqref{eq:main_quad_ideal_dyn2}:
  \begin{align}
    \label{eq:quad_ideal_dyn1}
    \partial_{t}z_{1}(t,\omega) &= \sigma_{0}\alpha_{0}(t)+\sigma_{1}\alpha_{1}(t)z_{2}(t,\omega)+\sigma_{2}\varphi_{2}z_{2}^{2}(t,\omega), \\
    \partial_{t}z_{2}(t,\omega) &= z_{1}(t,\omega)\left(\sigma_{1}\alpha_{1}(t)+2\sigma_{2}\varphi_{2}z_{2}(t,\omega)\right),
    \label{eq:quad_ideal_dyn2}
\end{align}
with  
\begin{align}
  \alpha_{0}(t) &= -\frac{1}{\sigma_{0}}\left(\sigma_{1}\alpha_1(t)\overline{z_{2}}(t)+\sigma_{2}\varphi_{2}\overline{z_{2}^{2}}(t)\right), \\
  \alpha_{1}(t) &= -\frac{2\sigma_{2}\varphi_{2}\overline{z_{1}^{2}z_{2}}+\sigma_{2}\varphi_{2}\overline{z_{2}^{3}}-\sigma_{2}\varphi_{2}\overline{z_{2}^{2}}\overline{z_{2}}}{\sigma_{1}\overline{z_{1}^{2}}+\sigma_{1}(\overline{z_{2}^{2}}-\overline{z_{2}}^{2})}. 
\end{align}
We also recall that, by definition of $\alpha_{0},\alpha_{1}$, any solution to the above verifies 
\begin{equation}
  \partial_{t} \overline{z_{1}}(t) = 0, \quad \mbox{and} \quad \partial_{t} \overline{z_{1}z_{2}}(t) = 0.
\end{equation}
To obtain a sharper control on the error incurred by the application of Proposition \ref{prop:approx_dyn_set}, we recall the following change in time and scale 
introduced in Section \ref{sec:main_results}:
 \begin{align}
    \label{eq:change_scale_a}
    \hat{a}(t,\omega) = \varepsilon^{1/4}a(\varepsilon^{1/4} t,\omega), \\
    \hat{s}(t,\omega) = \varepsilon^{-1/4}s(\varepsilon^{1/4} t,\omega),
    \label{eq:change_scale_s}
  \end{align}
which leads us to study the evolution of $\hat{a},\hat{s}$ on a timescale of order one.
\begin{lem}
  \label{lem:quad_approx}
  Let $(a(t,.),s(t,.))$ denote the solution to \eqref{eq:quad_dyn1}-\eqref{eq:quad_dyn2} with any initial condition $a(t_{0},.)$, \\$s(t_{0},.)$ verifying the bounds \eqref{eq:init_quad1}-\eqref{eq:init_quad2}. We 
  then have the following results.
  \begin{itemize}
  \item There exist continuous functions $z_{1}(t_{0},.),z_{2}(t_{0},.)$ and constants $C_{1},C_{2}$ such that
  \begin{align}
    &\overline{z_{1}}(t_{0}) = \varepsilon^{1/4}\frac{\varphi_{0}}{\sigma_{0}}, \quad \overline{z_{2}}(t_{0}) = 0, \quad \overline{z_{1}z_{2}}(t_{0}) = \frac{\varphi_{1}}{\sigma_{1}}, \\
    &\norm{\hat{a}(t_{0},.)-z_{1}(t_{0},.)} \leq C_{1}\varepsilon^{3/4}, \quad \norm{\hat{s}(t_{0},.)-z_{2}(t_{0},.)} \leq C_{2}\varepsilon^{1/4}\log(\frac{1}{\varepsilon}).
  \end{align}
  Furthermore, there exists a constant $D$ depending only on the problem data such that 
  \begin{equation}
    (\norm{\hat{a}(t_{0},.)}+\norm{\hat{s}(t_{0},.)} ) \vee(\norm{z_{1}(t_{0},.)}+\norm{z_{2}(t_{0},.)}) \leq D.
  \end{equation}
  \item Let $(z_{1}(t,.),z_{2}(t,.))$ be the solution to \eqref{eq:quad_ideal_dyn1}-\eqref{eq:quad_ideal_dyn2} with initial condition $(z_{1}(t_{0},.),z_{2}(t_{0},.))$.
  For any constant $C>D$, consider the hitting time 
  \begin{equation}
    T_{1} = \inf\{t \geq t_{0} : (\norm{\hat{a}(t,.)}+\norm{\hat{s}(t,.)} ) \vee(\norm{z_{1}(t,.)}+\norm{z_{2}(t,.)}) \geq C\}.
  \end{equation}
Then there exist constants $C_{5},C_{6}$ depending only on the problem data and $C$ such that, for any fixed time horizon $T$ independent of $\varepsilon$ and any $t_{0} \leq t \leq T_{1} \wedge T$,
\begin{equation}
  \norm{z_{\varepsilon}(t,.)-z(t,.)} \leq \left(\norm{z_{\varepsilon}(t_{0},.)-z(t_{0},.)}+C_{5}\varepsilon^{1/8}(t-t_{0})\right)\exp(C_{6}(t-t_{0})).
\end{equation}
\end{itemize}
\end{lem}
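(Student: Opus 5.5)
The argument has two parts: an explicit construction of the projected initial condition, and an application of Proposition \ref{prop:approx_dyn_set} through Corollary \ref{cor:nn_approx_dyn} to the rescaled system.

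\textbf{Initial condition.} In the rescaled variables, $\hat a(t_0,.)=\varepsilon^{1/4}a(t_0,.)$ and $\hat s(t_0,.)=\varepsilon^{-1/4}s(t_0,.)$. By \eqref{eq:init_quad1}-\eqref{eq:init_quad2}, both functions are of order one, close to $(\varphi_1/\sigma_1')^{1/2}a_{init,\perp}$. Moreover $\overline{\hat a}(t_0)=\varepsilon^{1/4}\overline a(t_0)$ and $\overline{\hat a\hat s}(t_0)=\overline{as}(t_0)$.

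We correct $(\hat a,\hat s)$ in three steps to enforce the constraints exactly.
\begin{enumerate}[label=(\roman*)]
\item Set $z_2=\hat s_\perp+\lambda\hat a_\perp$. The shift by $\overline{\hat s}$ has size $O(\varepsilon^{1/4}\log(1/\varepsilon))$, since $|\overline s(t_0)|\le C\sqrt\varepsilon\log(1/\varepsilon)$.
\item Set $z_1=\hat a-\overline{\hat a}(t_0)+\varepsilon^{1/4}\varphi_0/\sigma_0$. This shift is $O(\varepsilon^{3/4})$, because $|\varphi_0-\sigma_0\overline a|\le C\sqrt\varepsilon$.
\item Choose the scalar $\lambda$ so that $\overline{z_1z_2}=\varphi_1/\sigma_1$. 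This is a linear equation in $\lambda$ with coefficient $\overline{\hat a_\perp^2}$, which is bounded below by a constant times $\overline{a_{init,\perp}^2}>0$. The required correction is $|\varphi_1/\sigma_1-\overline{\hat a\hat s}|$ plus the effect of removing $\overline{\hat s}$, so $|\lambda|=O(\varepsilon^{1/4}\log(1/\varepsilon))$.
\end{enumerate}
Continuity of $z_1,z_2$ is inherited from the solution and $a_{init}$, up to piecewise continuity. The uniform bound $D$ follows from the triangle inequality together with $\norm{a_{init}}\le M$.

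\textbf{Dynamics.} In the rescaled time, $(\hat a,\hat s)$ solves the system displayed in the simulations section. It has the form \eqref{eq:toy_flow} with fast constraint functions
\[
h_0=\varphi_0-\sigma_0\varepsilon^{-1/4}\overline{\hat a}, \qquad h_1=\varphi_1-\sigma_1\overline{\hat a\hat s},
\]
multiplied by $\varepsilon^{-1/2}$ and $\varepsilon^{-1/4}$ respectively. The slow field is the $\varphi_2$ part, and the fast directions are $g_0=(\sigma_0,0)$ and $g_1=(\sigma_1\hat s,\sigma_1\hat a)$.

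We then invoke Corollary \ref{cor:nn_approx_dyn}, the specialization of Proposition \ref{prop:approx_dyn_set}. Its hypotheses require a uniform-in-time smallness $u(\varepsilon)$ of the constraints on $[t_0,T_1\wedge T]$, and this must be supplied separately. We obtain it by redoing the Lyapunov argument of Corollary \ref{cor:stable_01} in the new variables, so as to propagate the initial bounds.
\begin{itemize}
\item For $h_1$: compute $\partial_t h_1^2\le -\varepsilon^{-1/4}c\,(\overline{\hat a^2}+\overline{\hat s^2})h_1^2+C|h_1|$. Before $T_1$ the forcing from the $\varphi_2$ terms is $O(1)$, and $\overline{\hat a^2}$ stays bounded below. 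Lemma \ref{lem:Ber_1/2} then gives $|h_1|\le C\varepsilon^{1/4}$.
\item For $h_0$: the analogous computation, with rate $\varepsilon^{-1/2}$, gives $|h_0|\le C\varepsilon^{1/2}\cdot\varepsilon^{-1/4}$ in the appropriate normalisation, that is, $|h_0|$ small at a positive power of $\varepsilon$.
\end{itemize}

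With these bounds, the proposition yields an increment estimate: the dynamics stays close to the skew projection with the multipliers $\alpha_0,\alpha_1$. The denominators of $\alpha_0,\alpha_1$ are bounded below on $[t_0,T_1]$ thanks to $\overline{z_1^2}\gtrsim1$. A Grönwall argument with Lipschitz constant $C_6(C)$ on the ball of radius $C$ then produces the stated estimate. The $\varepsilon^{1/8}$ rate arises from optimizing the increment length $\delta$: the error per unit time is roughly $u(\varepsilon)/\delta+\delta$, and balancing the terms with $u(\varepsilon)\sim\varepsilon^{1/4}$ gives $\varepsilon^{1/8}$.

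\textbf{Main obstacle.} The hardest step is the lower bound for $\overline{\hat a^2}+\overline{\hat s^2}$, together with the uniform smallness of $h_1$ up to $T_1$. Singular neurons may grow while others rearrange, so I would add the condition $\overline{z_1^2}\ge 1/D$ to the stopping time. One would then check that it is implied by the constraint $\overline{z_1z_2}=\varphi_1/\sigma_1$ via Cauchy--Schwarz, using $\norm{z_2}\le C$.
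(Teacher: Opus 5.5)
Your route is the paper's: an explicit projected initial datum, propagation of the constraint violations via Lemma~\ref{lem:Ber_1/2}, then Corollary~\ref{cor:nn_approx_dyn} with $u^*(\varepsilon)\sim\varepsilon^{1/4}$. Your additive correction $z_2=\hat s_\perp+\lambda\hat a_\perp$ works as well as the paper's rescaling of $\hat s_\perp$. Your Cauchy--Schwarz remark ($\varphi_1/\sigma_1=\overline{z_1z_2}\le C(\overline{z_1^2})^{1/2}$, and the same for $\overline{\hat a^2}$ once $|h_1|\le\varphi_1/2$ is part of a bootstrap) is a sound way to obtain the non-degeneracy the paper leaves implicit.

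The genuine problem is your $h_0$ step. In rescaled time,
\[
\partial_t h_0=-\sigma_0\varepsilon^{-1/4}\partial_t\overline{\hat a}=-\frac{\sigma_0^2}{\varepsilon^{3/4}}\,h_0-\frac{\sigma_0\sigma_1}{\varepsilon^{1/2}}\,\overline{\hat s}\,h_1-\frac{\sigma_0\sigma_2\varphi_2}{\varepsilon^{1/4}}\,\overline{\hat s^{2}},
\]
so the contraction rate is $\varepsilon^{-3/4}$, not $\varepsilon^{-1/2}$. With $|h_1|\le C\varepsilon^{1/4}$, the forcing is $O(\varepsilon^{-1/4})$ before $T_1$. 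Lemma~\ref{lem:Ber_1/2} then gives $|h_0|\le C\sqrt{\varepsilon}$, which is the order in \eqref{eq:init_quad1} and the bound the paper uses.

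Your weaker $O(\varepsilon^{1/4})$, offered as ``small at a positive power of $\varepsilon$'', does not close the argument. Corollary~\ref{cor:nn_approx_dyn} needs $u_k(\varepsilon)\le C_0v_k(\varepsilon)$, and here $v_0=\sqrt{\varepsilon}$, so the fast drift $\sigma_0\varepsilon^{-1/2}h_0$ must stay $O(1)$. If $|h_0|\sim\varepsilon^{1/4}$, that drift is $O(\varepsilon^{-1/4})$. Then $L_{z_\varepsilon}$ and $K_{v_z}$ in Proposition~\ref{prop:approx_dyn_set} blow up ($K_{v_z}\sim\delta^2\varepsilon^{-1/2}$), and optimizing over $\delta$ no longer gives a vanishing error.

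Moreover, $h_0$ feeds back into $\partial_t h_1$ through the term $-\sigma_0\sigma_1\varepsilon^{-1/2}h_0\,\overline{\hat s}$, which your $h_1$ estimate leaves out. Since $\partial_t\overline{\hat s}=\sigma_1\varepsilon^{-1/4}\overline{\hat a}\,h_1+2\sigma_2\varphi_2\overline{\hat a\hat s}\approx 2\sigma_2\varphi_2\varphi_1/\sigma_1$, $\overline{\hat s}$ becomes of order one on the horizon $T$. With $|h_0|\sim\varepsilon^{1/4}$, this term is then $O(\varepsilon^{-1/4})$ against a contraction rate of order $\varepsilon^{-1/4}$, and $|h_1|\le C\varepsilon^{1/4}$ is lost. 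The two estimates must therefore be closed jointly under a stopping time, using the sharp $\sqrt{\varepsilon}$ bound for $h_0$.

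Relatedly, normalize the first constraint as $\varepsilon^{1/4}\varphi_0-\sigma_0 z_1$ with $v_0=\varepsilon^{3/4}$. Your $h_0=\varphi_0-\sigma_0\varepsilon^{-1/4}\overline{\hat a}$ has a gradient of size $\varepsilon^{-1/4}$, which inflates $C'_{h_0}$ and $C_\alpha$ in the proposition. In the rescaled normalization the requirement is again exactly $|\varphi_0-\sigma_0\overline a|\le C\sqrt{\varepsilon}$.
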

\begin{proof}
  We begin by constructing an appropriate initial condition $z(t_{0},.) = (z_{1}(t_{0},.),z_{2}(t_{0},.))$. Recalling that 
  $\hat{a}(t_{0},.) = \varepsilon^{1/4}a(t_{0},.)$ and $\hat{s}(t_{0},.) = \varepsilon^{-1/4}s(t_{0},.)$, we have
\begin{align}
  \vert \overline{\hat{a}}(t_{0})-\varepsilon^{1/4}\frac{\varphi_{0}}{\sigma_{0}} \vert \leq C_{2}\varepsilon^{3/4}, \\
  \vert \overline{\hat{a}\hat{s}}(t_{0})-\frac{\varphi_{1}}{\sigma_{1}} \vert \leq C_{2}\varepsilon^{1/4}.
\end{align}
Then, we choose 
\begin{align}
  z_{1}(t_{0},.) = \hat{a}_{\perp}(t_{0},.)+\varepsilon^{1/4}\frac{\varphi_{0}}{\sigma_{0}}, \\
  z_{2}(t_{0},.) = \frac{\varphi_{1}}{\sigma_{1}\overline{\hat{a}_{\perp}\hat{s}_{\perp}}(t_{0})}\hat{s}_{\perp}(0,.),
\end{align}
so that 
\begin{align}
  \overline{z_{1}}(t_{0}) &= \varepsilon^{1/4}\frac{\varphi_{0}}{\sigma_{0}}, \\
  \overline{z_{1}z_{2}}(t_{0}) &= \frac{\varphi_{1}}{\sigma_{1}},
\end{align}
and 
\begin{align}
  \norm{z_{1}(t_{0},.)-\hat{a}(t_{0},.)} &\leq C_{2}\varepsilon^{3/4}.
\end{align}
Also,
\begin{align}
  \norm{z_{2}(t_{0},.)-\hat{s}(t_{0},.)} \leq \frac{\vert \varphi_{1}-\sigma_{1}\overline{\hat{a}\hat{s}}(t_{0})\vert}{\sigma_{1}\overline{\hat{a}\hat{s}}(t_{0})}\norm{\hat{s}_{\perp}(t_{0},.)}+ \vert \overline{\hat{s}}(t_{0}) \vert,
\end{align}
which, upon recalling that there exists $\delta>0$ independent of $\varepsilon$ such that,
\begin{align}
  \norm{\hat{s}_{\perp}(t_{0},.)-\left(\frac{\varphi_{1}}{\sigma_{1}'}\right)^{1/2}a_{init,\perp}(.)} \leq C\varepsilon^{\delta} \quad \mbox{and} \quad \vert \overline{\hat{s}}(t_{0}) \vert \leq C\varepsilon^{1/4}\log(\frac{1}{\varepsilon}),
\end{align}
implies that there exists a constant $C'$ such that 
\begin{equation}
  \norm{z_{2}(t_{0},.)-\hat{s}(t_{0},.)} \leq C'\varepsilon^{1/4}\log(\frac{1}{\varepsilon}).
\end{equation}
  Now, note that, owing to Theorem \ref{thm:stable_01}, there exists a constant $D$ depending only on $\sigma_{1},\varphi_{1},\sigma_{1}',M$ such that, for $\varepsilon$ sufficiently small,
  \begin{equation}
    \norm{a(t_{0},.)}+\varepsilon^{-1/2}\norm{s(t_{0},.)} \leq D\varepsilon^{-1/4}.
  \end{equation}
  Let $T_{1}$ be the hitting time 
  \begin{equation}
    T_{1} = \inf\{t \geq t_{0} : \norm{a(t,.)}+\varepsilon^{-1/2}\norm{s(t,.)} \geq C_{1}\varepsilon^{-1/4}\},
  \end{equation}
  for any constant $C_{1} >D$ (that does not depend on $\varepsilon$), and let $T$ be a time horizon independent of $\varepsilon$.
  Then, for any $t_{0} \leq t \leq T\varepsilon^{1/4} \wedge T_{1}$ consider the following change in time and scale in the dynamical system \eqref{eq:quad_dyn1}-\eqref{eq:quad_dyn2}:
  \begin{align}
    \hat{a}(t,\omega) = \varepsilon^{1/4}a(\varepsilon^{1/4} t,\omega), \\
    \hat{s}(t,\omega) = \varepsilon^{-1/4}s(\varepsilon^{1/4} t,\omega).
  \end{align}
  Thus, we are now interested in controlling the evolution of the pair $\hat{a},\hat{s}$ on a timescale of order one, i.e. independent of $\varepsilon$.
  The evolution \eqref{eq:quad_dyn1}-\eqref{eq:quad_dyn2} then becomes 
  \begin{align}
    \partial_{t} \hat{a}(t,\omega) &= \frac{\sigma_{0}}{\sqrt{\varepsilon}}\left(\varphi_{0}-\sigma_{0}\overline{a}(t)\right)+\frac{\sigma_{1}}{\varepsilon^{1/4}}\hat{s}(t,\omega)\left(\varphi_{1}-\sigma_{1}\overline{as}(t)\right)+\sigma_{2}\varphi_{2}\hat{s}^{2}(t,\omega), \\
    \partial_{t} \hat{s}(t,\omega) &= \hat{a}(t,\omega)\frac{\sigma_{1}}{\varepsilon^{1/4}}\left(\varphi_{1}-\sigma_{1}\overline{as}(t)\right)+2\sigma_{2}\varphi_{2}\hat{a}(t,\omega)\hat{s}(t,\omega),
  \end{align}
  and we may rewrite $T_{1}$ as 
  \begin{equation}
    T_{1} = \inf\{t \geq t_{0} : \norm{\hat{a}(t,.)}+\norm{\hat{s}(t,.)} \geq C_{1}\}.
  \end{equation}
Now, an argument similar to the one employed to prove the upper bound Eq.\eqref{eq:stable_phi1} in the proof of Theorem \ref{thm:stable_01} and Corollary \ref{cor:stable_01} shows that 
there exist constants $C_{3},C_{4}$ depending only on the problem data and $C_{1}$ such that, for any $t_{0} \leq t \leq T_{1}$,
\begin{align}
  \vert \varphi_{0}-\sigma_{0}\overline{a}(t) \vert \leq C_{3}\sqrt{\varepsilon}, \quad \mbox{and} \quad \vert \varphi_{1}-\sigma_{1}\overline{as}(t) \vert \leq C_{4}\varepsilon^{1/4}.
\end{align}
We now note that, by construction,  
  \begin{equation}
    \norm{z_{1}(t_{0},.)}+ \norm{z_{2}(t_{0},.)} \leq D,
  \end{equation}
and introduce the hitting time
\begin{equation}
    T_{2} = \inf\{t \geq t_{0} : \norm{z_{1}(t,.)}+\norm{z_{2}(t,.)} \geq C_{1}\}.
\end{equation}
Letting 
\begin{align}
z(t,\omega) &= (z_{1}(t,\omega),z_{2}(t,\omega)), \\
z_{\varepsilon}(t,\omega) &= (\hat{a}(t,\omega),\hat{s}(t,\omega)),
\end{align}
we apply Corollary \ref{cor:nn_approx_dyn} to obtain that there exist constants $C_{5},C_{6}$ independent of $\varepsilon$ such that, for any $t_{0} \leq t \leq T_{1} \wedge T_{2} \wedge T$,
\begin{equation}
  \norm{\norm{z_{\varepsilon}(t,.)-z(t,.)}_{2}} \leq \left(\norm{\norm{z_{\varepsilon}(t_{0},.)-z(t_{0},.)}_{2}}+C_{5}\varepsilon^{1/8}(t-t_{0})\right)\exp(C_{6}(t-t_{0})),
\end{equation}
concluding the proof. Proposition \ref{prop:quad_approx} is an immediate consequence of Lemma \ref{lem:quad_approx}.
\end{proof}
\appendix
\section{Auxiliary results on idealized systems}
\label{sec:app_A}
\subsection{Idealized systems for the constant and linear components}
Consider the dynamical system 
\begin{align}
  \partial_{t}b(t,\omega) &= \frac{\sigma_{1}}{\varepsilon}g(t,\omega)\left(\varphi_{1}-\sigma_{1}\overline{bg}(t)\right), \\
  \partial_{t}g(t,\omega) &= \sigma_{1}b(t,\omega)\left(\varphi_{1}-\sigma_{1}\overline{bg}(t)\right),
\end{align}
with initial condition $b(0,.) = a_{init,\perp}(.)$ and $g(0,.) = 0$. For these dynamics, we have that $b$ and $g$ will remain proportional 
to $a_{init,\perp}(.)$ for all times, so that $b(t,.) = \beta(t)a_{init,\perp}(.)$ and $g(t,.) = \gamma(t)a_{init,\perp}(.)$, where $\beta(t),\gamma(t)$ are the solutions to the 
dynamical system 
\begin{align}
  \label{eq:beta_gamma1}
  \partial_{t}\beta(t) &= \frac{\sigma_{1}}{\varepsilon}\gamma(t)\left(\varphi_{1}-\sigma'_{1}\beta(t)\gamma(t)\right), \\
  \partial_{t}\gamma(t) &= \sigma_{1}\beta(t)\left(\varphi_{1}-\sigma'_{1}\beta(t)\gamma(t)\right),
  \label{eq:beta_gamma2}
\end{align}
where we have introduced $\sigma'_{1} = \sigma_{1}\int_{0}^{1}a^{2}_{init,\perp}(\nu)d\nu$, and with $\beta(0) = 1$ and $\gamma(0) = 0$. The solution 
to the system \eqref{eq:beta_gamma1}-\eqref{eq:beta_gamma2} is characterized in the following proposition. 
\begin{prop}
  \label{prop:app_bg}
For every $t \geq 0$, we have the representation 
\begin{equation}
  \beta(t) = \cosh \theta(t), \quad \gamma(t) = \sqrt{\varepsilon}\sinh \theta(t),
\end{equation}
where $\theta(t)$ is the solution to the ODE 
\begin{equation}
  \label{eq:theta_ode}
  \partial_{t}\theta = \frac{\sigma_{1}\varphi_{1}}{\sqrt{\varepsilon}}-\frac{\sigma_{1}\sigma'_{1}}{2}\sinh 2\theta(t),
\end{equation}
with $\theta(0) =0$.
Furthermore, for every $t\geq 0$, we have the upper and lower bounds
\begin{equation}
l(t) \leq \theta(t) \leq u(t),
\end{equation}
where 
\begin{align}
  l(t) &= \frac{\sigma_{1}\varphi_{1}}{\sqrt{\varepsilon}}t-\frac{1}{2}\log\left(1+\frac{\sigma'_{1}\sqrt{\varepsilon}}{4\varphi_{1}}\left(e^{\frac{2\sigma_{1}\varphi_{1}}{\sqrt{\varepsilon}}t}-1\right)\right), \\
  u(t) &= \left(\frac{\sigma_{1}\varphi_{1}}{\sqrt{\varepsilon}}+\frac{\sigma_{1}\sigma'_{1}}{4}\right)t-\frac{1}{2}\log\left(1+\frac{\sigma'_{1}\sqrt{\varepsilon}}{4\varphi_{1}}\left(\frac{1}{1+\frac{\sigma'_{1}\sqrt{\varepsilon}}{4\varphi_{1}}}\right)\left(e^{\left(\frac{2\sigma_{1}\varphi_{1}}{\sqrt{\varepsilon}}+\frac{\sigma_{1}\sigma'_{1}}{2}\right)t}-1\right)\right).
\end{align}
\end{prop}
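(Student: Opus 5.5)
The plan is to first find a conserved quantity for \eqref{eq:beta_gamma1}-\eqref{eq:beta_gamma2}, which gives the hyperbolic parametrization, and then to bound $\theta$ by linearizing the ODE \eqref{eq:theta_ode} in the variable $w=e^{-2\theta}$ and using comparison.

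\textbf{Step 1: conserved quantity.} Write $R(t)=\varphi_{1}-\sigma'_{1}\beta\gamma$. Then
\begin{equation*}
\partial_{t}\beta^{2}=\frac{2\sigma_{1}}{\varepsilon}\beta\gamma R, \qquad \partial_{t}\gamma^{2}=2\sigma_{1}\beta\gamma R .
\end{equation*}
Hence $\beta^{2}-\varepsilon^{-1}\gamma^{2}$ is constant, equal to $1$ by the initial condition. In particular $\beta^{2}\geq 1$. Since $\beta(0)=1$ and $\beta$ is continuous, we get $\beta\geq 1>0$ for all times. Set $\theta(t)=\operatorname{arsinh}(\gamma(t)/\sqrt{\varepsilon})$, which is $C^{1}$ with $\theta(0)=0$. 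Then $\gamma=\sqrt{\varepsilon}\sinh\theta$, and $\beta=\sqrt{1+\sinh^{2}\theta}=\cosh\theta$.

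\textbf{Step 2: ODE for $\theta$.} Differentiate $\gamma=\sqrt{\varepsilon}\sinh\theta$ and use \eqref{eq:beta_gamma2}:
\begin{equation*}
\sqrt{\varepsilon}\cosh\theta\,\partial_{t}\theta=\sigma_{1}\cosh\theta\bigl(\varphi_{1}-\sigma'_{1}\sqrt{\varepsilon}\cosh\theta\sinh\theta\bigr).
\end{equation*}
Dividing by $\sqrt{\varepsilon}\cosh\theta>0$ and using $2\cosh\theta\sinh\theta=\sinh 2\theta$ gives \eqref{eq:theta_ode}. Global existence holds because the right-hand side is locally Lipschitz and $\theta$ stays bounded (see below).

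\textbf{Step 3: sign and comparison bounds.} Set $A=\sigma_{1}\varphi_{1}/\sqrt{\varepsilon}$ and $B=\sigma_{1}\sigma'_{1}$.

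First, $\theta\geq 0$ for all times. At $\theta=0$ the right-hand side of \eqref{eq:theta_ode} equals $A>0$, so $\theta$ can never cross zero from above.

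Next, let $w=e^{-2\theta}$, so that $\partial_{t}w=-2w\,\partial_{t}\theta$.

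For the lower bound, use $\sinh 2\theta\leq e^{2\theta}/2$. This gives $\partial_{t}\theta\geq A-\tfrac{B}{4}e^{2\theta}$, hence $\partial_{t}w\leq -2Aw+\tfrac{B}{2}$. Grönwall's lemma yields
\begin{equation*}
w(t)\leq e^{-2At}\Bigl(1+\tfrac{B}{4A}(e^{2At}-1)\Bigr), \qquad \tfrac{B}{4A}=\tfrac{\sigma'_{1}\sqrt{\varepsilon}}{4\varphi_{1}} .
\end{equation*}
Taking $-\tfrac12\log$ gives $\theta\geq l$.

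For the upper bound, use $\theta\geq 0$, which gives $\sinh 2\theta\geq (e^{2\theta}-1)/2$. Then $\partial_{t}\theta\leq A'-\tfrac{B}{4}e^{2\theta}$ with $A'=A+B/4$, hence $\partial_{t}w\geq -2A'w+\tfrac{B}{2}$. This yields
\begin{equation*}
w(t)\geq e^{-2A't}\Bigl(1+\tfrac{B}{4A'}(e^{2A't}-1)\Bigr), \qquad \tfrac{B}{4A'}=\frac{\sigma'_{1}\sqrt{\varepsilon}/(4\varphi_{1})}{1+\sigma'_{1}\sqrt{\varepsilon}/(4\varphi_{1})} .
\end{equation*}
Taking $-\tfrac12\log$ gives exactly $\theta\leq u$.

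The only delicate point is checking that the constants match the stated forms of $l$ and $u$. The sign argument ensuring $\theta\geq 0$, which is needed for the upper bound, is simple.
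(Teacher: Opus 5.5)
Your proposal is correct and follows the paper's proof: the conserved quantity $\beta^{2}-\varepsilon^{-1}\gamma^{2}=1$, the hyperbolic parametrization, and the same two exponential bounds on $\sinh 2\theta$; your substitution $w=e^{-2\theta}$ with a linear Gr\"onwall argument is the paper's comparison-and-separation-of-variables step (where $F(x)=\frac{1}{2}(e^{-2x}-1)$) applied directly to the inequalities. Using the $\gamma$-equation, which lets you divide by $\cosh\theta>0$ rather than $\sinh\theta$, and checking explicitly that $\theta\geq 0$, which the upper bound needs, settles two points the paper leaves implicit.
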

\begin{proof}
  A direct calculation using \eqref{eq:beta_gamma1}-\eqref{eq:beta_gamma2} shows that 
  \begin{align}
    \partial_{t}\big(\beta^{2}(t)-\varepsilon^{-1} \gamma^{2}(t)\big) &= 2\beta(t)\partial_{t}\beta(t)-2\varepsilon^{-1}\gamma(t)\partial_{t}\gamma(t) \\
    &=0,
  \end{align}
  and, since $\beta(0) = 1$ and $\gamma(0) = 0$, we find that $\beta^{2}-\varepsilon^{-1}\gamma^{2} = 1$ for all times. This allows us 
  to reparametrize the solution using a hyperbolic angle $\theta(t)$ such that $\beta(t) = \cosh \theta(t)$ and $\gamma(t) = \sqrt{\varepsilon}\sinh \theta(t)$.
  Substituting this into \eqref{eq:beta_gamma1}, we find 
  \begin{equation}
    \left(\sinh \theta(t)\right)\partial_{t}\theta = \varepsilon^{-1}\sigma_{1}(\sqrt{\varepsilon}\sinh \theta(t))\left(\varphi_{1}-\sigma'_{1} \cosh \theta(t)\sqrt{\varepsilon}\sinh \theta(t)\right).
  \end{equation}
  Dividing by $\sinh(\theta)$ (for $t>0$) and using the identity $2\sinh \theta \cosh \theta = \sinh 2\theta$, we obtain the scalar ODE \eqref{eq:theta_ode}. \newline 
  \indent 
  Now, the latter implies that, for all times, 
  \begin{equation}
  \frac{\sigma_{1}\varphi_{1}}{\sqrt{\varepsilon}}-\frac{\sigma_{1}\sigma'_{1}}{4}e^{2\theta(t)}  \leq \partial_{t} \theta(t) \leq \frac{\sigma_{1}\varphi_{1}}{\sqrt{\varepsilon}}-\frac{\sigma_{1}\sigma'_{1}}{4}\left(e^{2\theta(t)}-1\right),
  \end{equation}
  so that 
  \begin{equation}
  l(t) \leq \theta(t) \leq u(t),
  \end{equation}
  where $l,u$ are respectively the solutions of 
  \begin{align}
    \partial_{t} l(t) &= \frac{\sigma_{1}\varphi_{1}}{\sqrt{\varepsilon}}-\frac{\sigma_{1}\sigma'_{1}}{4}e^{2l(t)}, \quad l(0) = 0, \\
    \partial_{t} u(t) &= \frac{\sigma_{1}\varphi_{1}}{\sqrt{\varepsilon}}-\frac{\sigma_{1}\sigma'_{1}}{4}\left(e^{2u(t)}-1\right), \quad u(0) = 0.
  \end{align}
  Focusing on $l(t)$, let us introduce $\tilde{l}(t) = l(t)-\frac{\sigma_{1}\varphi_{1}}{\sqrt{\varepsilon}}t$, so that $\tilde{l}(t)$ solves
  \begin{equation}
    \partial_{t}\tilde{l}(t) = -\frac{\sigma_{1}\sigma'_{1}}{4}e^{2\tilde{l}(t)}e^{\frac{2\sigma_{1}\varphi_{1}}{\sqrt{\varepsilon}}t}, \quad \tilde{l}(0) = 0.
  \end{equation}
  To solve this ODE, we define
  \begin{align}
    F(x) &= -\int_{0}^{x}e^{-2y}dy \\
          &= \frac{1}{2}\left(e^{-2x}-1\right), \\
    G(t) &= \int_{0}^{t}\frac{\sigma_{1}\sigma'_{1}}{4}e^{\frac{2\sigma_{1}\varphi_{1}}{\sqrt{\varepsilon}}s} ds \\
    &= \frac{\sigma'_{1}\sqrt{\varepsilon}}{8\varphi_{1}}\left(e^{\frac{2\sigma_{1}\varphi_{1}}{\sqrt{\varepsilon}}t}-1\right),
  \end{align}
so that 
\begin{align}
  \tilde{l}(t) = F^{-1}(G(t)) = -\frac{1}{2}\log\left(1+\frac{\sigma'_{1}\sqrt{\varepsilon}}{4\varphi_{1}}\left(e^{\frac{2\sigma_{1}\varphi_{1}}{\sqrt{\varepsilon}}t}-1\right)\right),
\end{align}
yielding the expression for $l(t)$.
A similar procedure applied to $\tilde{u}(t) = u(t)-\left(\frac{\sigma_{1}\varphi_{1}}{\sqrt{\varepsilon}}+\frac{\sigma_{1}\sigma'_{1}}{4}\right)t$ gives the expression for $u(t)$.
\end{proof}
\begin{cor}
  \label{cor:bet_gam_bounds}
  For every $t \geq 0$, we have 
  \begin{equation}
    \vert \beta(t)\gamma(t) \vert \leq \frac{\sqrt{\varepsilon}}{4}\exp\left(\frac{2\sigma_{1}\varphi_{1}t}{\sqrt{\varepsilon}}\right),
  \end{equation}
  as well as 
  \begin{equation}
    0 \leq \varphi_{1}-\sigma'_{1}\beta(t)\gamma(t) \leq \varphi_{1}\left(\frac{4\varphi_{1}-\sigma_{1}'\sqrt{\varepsilon}}{4\varphi_{1}+\sigma_{1}'\sqrt{\varepsilon}\left(\exp\left(\frac{2\sigma_{1}\varphi_{1}t}{\sqrt{\varepsilon}}\right)-1\right)}\right)+\frac{\sigma_{1}'\sqrt{\varepsilon}}{4}\left(\frac{\sigma_{1}'\sqrt{\varepsilon}}{4\varphi_{1}}+\exp\left(-\frac{2\sigma_{1}\varphi_{1}t}{\sqrt{\varepsilon}}\right)\right).
  \end{equation}
\end{cor}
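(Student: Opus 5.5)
The plan is to work entirely with the hyperbolic parametrization of Proposition~\ref{prop:app_bg}, namely $\beta=\cosh\theta$ and $\gamma=\sqrt{\varepsilon}\sinh\theta$. With it,
\[
\beta\gamma=\tfrac{\sqrt{\varepsilon}}{2}\sinh 2\theta=\tfrac{\sqrt{\varepsilon}}{4}\left(e^{2\theta}-e^{-2\theta}\right).
\]
Both bounds then reduce to two-sided control of $\theta(t)$. Throughout, set $a=\sigma_1\varphi_1/\sqrt{\varepsilon}$ and $\kappa=\sigma_1'\sqrt{\varepsilon}/(4\varphi_1)$.

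\emph{Step 1: sign and monotonicity of $\theta$.} Let $\theta^*>0$ be the unique root of $\frac{\sigma_1'\sqrt{\varepsilon}}{2}\sinh 2\theta^*=\varphi_1$. It is an equilibrium of \eqref{eq:theta_ode}. Since $\theta(0)=0<\theta^*$, uniqueness of solutions gives $0\le\theta(t)<\theta^*$ for all $t$. Indeed, the right-hand side of \eqref{eq:theta_ode} is strictly positive on $[0,\theta^*)$, so $\theta$ increases from $0$ and cannot reach $\theta^*$. Two consequences follow:
\begin{itemize}
\item $\varphi_1-\sigma_1'\beta\gamma=\varphi_1-\frac{\sigma_1'\sqrt{\varepsilon}}{2}\sinh 2\theta\ge 0$, which is the lower bound;
\item since $\sinh 2\theta\ge0$, we have $\partial_t\theta\le a$, hence $\theta(t)\le at$.
\end{itemize}

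\emph{Step 2: first bound.} Dropping the negative term,
\[
|\beta\gamma|=\beta\gamma\le\tfrac{\sqrt{\varepsilon}}{4}e^{2\theta}\le\tfrac{\sqrt{\varepsilon}}{4}e^{2at},
\]
which is the first claim. I use the elementary bound $\theta\le at$ rather than the upper bound $u(t)$ of Proposition~\ref{prop:app_bg}, since $u(t)$ carries an extra drift term.

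\emph{Step 3: upper bound on $\varphi_1-\sigma_1'\beta\gamma$.} Write
\[
\varphi_1-\sigma_1'\beta\gamma=\varphi_1-\varphi_1\kappa e^{2\theta}+\tfrac{\sigma_1'\sqrt{\varepsilon}}{4}e^{-2\theta},
\]
and use the lower bound $\theta\ge l(t)$ from Proposition~\ref{prop:app_bg}, where $e^{2l}=e^{2at}/(1+\kappa(e^{2at}-1))$.
\begin{itemize}
\item For the first part, $\varphi_1(1-\kappa e^{2\theta})\le\varphi_1(1-\kappa e^{2l})$. A direct simplification gives $\varphi_1(1-\kappa)/(1+\kappa(e^{2at}-1))$, which after multiplying through by $4\varphi_1$ is exactly the first term of the claim.
\item For the second part, $e^{-2\theta}\le e^{-2l}=e^{-2at}\left(1+\kappa(e^{2at}-1)\right)\le e^{-2at}+\kappa$. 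This yields $\frac{\sigma_1'\sqrt{\varepsilon}}{4}\left(\kappa+e^{-2at}\right)$, the second term.
\end{itemize}

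The only point needing care is Step 1, namely the invariance argument that $\theta$ stays in $[0,\theta^*)$. It simultaneously gives nonnegativity of the residual and the crude bound $\theta\le at$. Everything else is algebra with the explicit function $l$.
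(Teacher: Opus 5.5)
Your proposal is correct and follows essentially the same route as the paper: the hyperbolic parametrization, the bound $\theta(t)\le \sigma_1\varphi_1 t/\sqrt{\varepsilon}$ read off from \eqref{eq:theta_ode} for the first estimate, and the comparison $\theta\ge l(t)$ from Proposition~\ref{prop:app_bg} for the upper bound on $\varphi_1-\sigma_1'\beta\gamma$ (you bound the two exponentials in $\sinh 2\theta$ separately, whereas the paper uses $\sinh 2\theta\ge\sinh 2l$ directly). Your invariance argument $0\le\theta<\theta^*$ is a useful addition: it supplies the sign condition $\theta\ge 0$ that the first bound implicitly needs, and it proves the lower bound $0\le\varphi_1-\sigma_1'\beta\gamma$, which the paper states without proof.
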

\begin{proof}
The first result is immediate from Eq.\eqref{eq:theta_ode}. For the second one, we write 
\begin{align}
    \varphi_{1}-\sigma'_{1}\beta(t)\gamma(t) &\leq \varphi_{1}-\sigma'_{1}\sqrt{\varepsilon}\cosh(l(t))\sinh(l(t)) \\
    &= \varphi_{1}-\frac{\sigma'_{1}\sqrt{\varepsilon}}{2}\sinh(2l(t)) \\
    &\leq \varphi_{1}-\frac{\sigma_{1}'\sqrt{\varepsilon}}{4}\left(\frac{\exp(\frac{2\sigma_{1}\varphi_{1}t}{\sqrt{\varepsilon}})}{1+\frac{\sigma_{1}'\sqrt{\varepsilon}}{4\varphi_{1}}\left(\exp(\frac{2\sigma_{1}\varphi_{1}t}{\sqrt{\varepsilon}})-1\right)}-\frac{1+\frac{\sigma_{1}'\sqrt{\varepsilon}}{4\varphi_{1}}\left(\exp(\frac{2\sigma_{1}\varphi_{1}t}{\sqrt{\varepsilon}})-1\right)}{\exp(\frac{2\sigma_{1}\varphi_{1}t}{\sqrt{\varepsilon}})}\right), \\
\end{align}
implying the desired.
\end{proof}
\section{An approximation result for flows near target sets}
\label{sec:app_B}
Let $\mathcal{O}$ be an open set in $\mathbb{R}^{d}$, $m$ be a fixed positive integer, $\{h_{i}\}_{i=1}^{m}$ be a collection of twice continuously differentiable functions on $\mathcal{O}$ valued in $\mathbb{R}_{+}$, and $\{g_{i}\}_{i=1}^{m},f$ be a collection of continuously differentiable vector fields on $\mathcal{O}$.
In what follows, we let $\mathcal{S}$ denote the set
\begin{equation}
  \mathcal{S} = \left\{ z \in \mathcal{C}(\Omega, \mathbb{R}^{d}) : \thickspace \forall \thickspace 1 \leq i \leq m, \int_{\Omega}h_{i}(z(\omega))d\rho(\omega) = 0 \right\}.
\end{equation}
For each $1 \leq i \leq m$, let $v_{i} : \mathbb{R}_{+} \to \mathbb{R}_{+}$ be a continuous, non-decreasing, positive function such that $v_{i}(0) = 0$.
For any $\varepsilon>0$, consider the (infinite dimensional) system of ordinary differential equations 
\begin{align}
  \label{eq:model_dyn}
  \partial_{t}{z}_{\varepsilon}(t,\omega) &= \sum_{i=1}^{m}\frac{g_{i}(z_{\varepsilon}(t,\omega))}{v_{i}(\varepsilon)}\int_{\Omega}h_{i}(z_{\varepsilon}(t,\nu))d\rho(\nu)+f(z_{\varepsilon}(t,\omega)), \\
  z_{\varepsilon}(0,\omega) &= z_{\varepsilon,0}(\omega) \in \mathcal{C}_{c}(\Omega,\mathbb{R}^{d}).
  \label{eq:model_dyn_init}
\end{align}
Since, for any $\omega \in \Omega$, the right-hand-side of \eqref{eq:model_dyn} is locally Lipschitz on $\mathcal{O}$, the Cauchy-Lipschitz theorem guarantees the 
existence and uniqueness of a local solution $z_{\varepsilon}(t,\omega) \in \mathcal{C}^{1}([0,T_{0}(\omega)],\mathbb{R}^{d})$ to the initial value problem \eqref{eq:model_dyn}-\eqref{eq:model_dyn_init},
for some time horizon $T_{0}(\omega)$. In what follows, we will denote $T_{0} = \inf_{\omega \in \Omega} T_{0}(\omega)$. For any $u \in \mathcal{C}(\Omega,\mathbb{R}^{d})$, 
we let $\boldsymbol{G}(u) \in \mathbb{R}^{m \times m}$ and $\boldsymbol{F}(u) \in \mathbb{R}^{m}$ respectively denote the matrix and vector whose entries are given by
\begin{align}
  &\left[\boldsymbol{G}_{ij}(u)\right]_{1 \leq i,j \leq m} = \int_{\Omega}\langle g_{j}(u(\omega)),\nabla h_{i}(u(\omega)) \rangle d\rho(\omega), \\
  &[\boldsymbol{F}_{i}(u)]_{1 \leq i \leq m} = \int_{\Omega} \langle f(u(\omega)),\nabla h_{i}(u(\omega)) \rangle d\rho(\omega).
\end{align}
Assume that there exists a non-empty subset $\mathcal{U}$ of $\mathcal{C}(\Omega,\mathbb{R}^{d})$ such that, for any $u \in \mathcal{U}$, the 
matrix $\boldsymbol{G}(u)$ is full-rank. Finally, for any $u \in \mathcal{U}$, we denote $\boldsymbol{\alpha}(u)$ the solution to the linear system
\begin{equation}
  \boldsymbol{G}(u)\boldsymbol{\alpha}(u)+\boldsymbol{F}(u) = 0.
\end{equation}
With these definitions in hand, consider the auxiliary dynamical system
\begin{align}
  \label{eq:model_dyn_app}
  \partial_{t}z(t,\omega) &= \sum_{i=1}^{m}\alpha_{i}(t)g_{i}(z(t,\omega))+f(z(t,\omega)), \\
  z(0,\omega) &= z_{0}(\omega) \in \mathcal{C}_{c}(\Omega,\mathbb{R}^{d}) \cap \mathcal{U},
  \label{eq:model_dyn_app_init}
\end{align}
where we have respectively abbreviated $\boldsymbol{G}(z(t,.)),\boldsymbol{\alpha}(z(t,.)),\boldsymbol{F}(z(t,.))$ with $\boldsymbol{G}(t),\boldsymbol{\alpha}(t),\boldsymbol{F}(t)$.
Letting $T_{\mathcal{U}} = \inf \{t \geq 0 : z(t,.) \notin \mathcal{U}\}$, the Cauchy-Lipschitz theorem once again guarantees the existence, for each $\omega$, of 
a time horizon $\overline{T}_{0}(\omega)$ such that there exists a unique local solution $z(t,\omega) \in \mathcal{C}^{1}([0,\overline{T}_{0}(\omega)\wedge T_{\mathcal{U}}],\mathbb{R}^{d})$.
In what follows, we will denote $\overline{T}_{0} = \inf_{\omega \in \Omega} \overline{T}_{0}(\omega)$.
A short computation then shows that the coefficients $\alpha_{i}(t)$ impose the constraints 
\begin{align}
  z(t,.) &\in S, \\
  \partial_{t}z(t,\omega) &\in f(z(t,\omega))+\mbox{Vec}\left(\left\{g_{i}(z(t,\omega))\right\}_{i=1}^{m}\right), \quad \forall \omega \in \Omega.
\end{align}
We seek to bound the $\ell_{2}$ distance $\norm{z_{\varepsilon}(t,\omega)-z(t,\omega)}_{2}$, for each $\omega \in \Omega$, on a time horizon 
$T_{\mathcal{S}}$ such that $z_{\varepsilon}(t,.)$ is close to the set $\mathcal{S}$ in the following sense: for each $1 \leq i \leq m$, we assume that there exists a continuous, non-decreasing, positive function $u_{i}$ such that $u_{i}(0) = 0$ and, 
for any $t \leq T_{\mathcal{S}}$:
\begin{equation}
  \int_{\Omega}h_{i}(z_{\varepsilon}(t,\omega))d\rho(\omega) \leq u_{i}(\varepsilon).
\end{equation}
For any $z_{0} \in \mathbb{R}^{d}$ and positive constant $R$, we denote 
\begin{align}
  C_{f}(z_{0},R) &= \sup_{z \in \mathcal{B}(z_{0},R)} \norm{f(z)}_{2}, \\
  C'_{f}(z_{0},R) &= \sup_{z \in \mathcal{B}(z_{0},R)} \norm{\nabla f(z)}_{op},
\end{align}
along with the corresponding quantities for the $\{g_{i}\}_{1 \leq i \leq nm}$ and $h$. We also let, for any $1 \leq i \leq m$,
\begin{equation}
  C''_{h,i}(z_{0},R) = \sup_{z \in \mathcal{B}(z_{0},R)} \norm{\nabla^{2}h_{i}(z)}_{op},
\end{equation}
and introduce the stopping times, for any positive constant $a$ and any $\omega \in \Omega$
\begin{align}
  T_{a} &= \inf\{0 \leq t \leq T_{0} \wedge \overline{T}_{0} \wedge T_{\mathcal{U}} : \sigma_{min}\left(G(z_{\varepsilon}(t,.))\right) \wedge \sigma_{min}\left(G(z_{\varepsilon}(t,.))\right)\leq a\}, \\
  T_{z_{0},R}(\omega) &= \inf\{0 \leq t \leq T_{0} \wedge \overline{T}_{0} \wedge T_{\mathcal{U}}  : z_{\varepsilon}(t,\omega) \notin \mathcal{B}(z_{0}(\omega),R) \thickspace \mbox{or} \thickspace z(t,\omega) \notin \mathcal{B}(z_{0}(\omega),R)\},
\end{align}
and we will denote $T_{z_{0},R} = \inf_{\omega \in \Omega}T_{z_{0},R}(\omega)$.
Using the above notations, for any $\delta >0$, consider the following quantities 
\begin{align}
  L_{z_{\varepsilon}} &= \sum_{i=1}^{m}\frac{u_{i}(\varepsilon)}{v_{i}(\varepsilon)}C_{g_{i}}(z_{0},R)+C_{f}(z_{0},R), \\
  C_{\alpha} &= \frac{C_{f}(z_{0},R)}{a}\left(\sum_{i=1}^{m}C'_{h_{i}}(z_{0},R)^{2}\right)^{1/2},\\
  L_{z} &=  C_{\alpha}\left(\sum_{i=1}^{m}C_{g_{i}}(z_{0},R)^{2}\right)^{1/2}+C_{f}(z_{0},R), \\
  C'_{\alpha} &= \left(\frac{1}{a}\left(\sum_{i=1}^{m}\left(C'_{f}C'_{h_{i}}+C_{f}C''_{h_{i}}\right)^{2}\right)^{1/2}+\frac{m}{a}\sup_{1 \leq i,j \leq n}\left(C_{g_{i}}C''_{h_{j}}+C'_{g_{i}}C'_{h_{j}}\right)C_{\alpha}\right), \\
  K_{v_{z}} &= \frac{\delta^{2}}{\sqrt{3}}L_{z_{\varepsilon}}(z_{0},R)\left(\sum_{i=1}^{m}\frac{u_{i}(\varepsilon)}{v_{i}(\varepsilon)}C'_{g_{i}}(z_{0},R)+C'_{f}(z_{0},R)\right), \\
  K_{v_{z_{\varepsilon}}} &= \frac{\delta^{2}}{\sqrt{3}}C'_{f}(z_{0},R)L_{z}(z_{0},R)+\frac{\delta^{2}\sqrt{2m}}{\sqrt{3}}\left(C_{\alpha}^{2}L_{z}^{2}\left(\sup_{1 \leq i \leq m}C'_{g_{i}}\right)^{2}+\left(\sup_{1 \leq i \leq m}C_{g_{i}}\right)^{2}(C'_{\alpha}L_{z})^{2}\right)^{1/2},
\end{align}
as well as 
\begin{align}
  K &= \left(C'_{f}+C_{\alpha}\left(\sum_{i=1}^{m}(C'_{g_{i}})^{2}\right)^{1/2}+C'_{\alpha}\left(\sum_{i=1}^{m}C_{g_{i}}^{2}\right)^{1/2}\right), \\
  \ell(\delta,\varepsilon) &= K_{v_{z}}+K_{v_{z_{\varepsilon}}}+\frac{\left(\sum_{i=1}^{m}C_{g_{i}}^{2}\right)^{1/2}}{a}\left(\left(\sum_{i=1}^{m}\left(2u_{i}(\varepsilon)+C''_{h_{i}}L^{2}_{z}\delta^{2}\right)^{2}\right)^{1/2}+K_{v_{z_{\varepsilon}}}\left(\sum_{i}^{n}(C'_{h_{i}}(z_{0},R))^{2}\right)^{1/2}\right).
\end{align}
We then have the following result:
\begin{prop}
  \label{prop:approx_dyn_set}
  Using the notations and under the assumptions outlined above, fix $a$ and $R$ such that $T_{a} \wedge T_{z_{0},R}>0$,
  and denote $S = T_{\mathcal{S}}\wedge T_{a}\wedge T_{z_{0},R} \wedge T_{0} \wedge \overline{T}_{0} \wedge T_{\mathcal{U}}$. 
  Then, for any $\delta >0$ and $n \in \mathbb{N}$ such that $n\delta \leq S$,
  \begin{equation}
  \norm{\norm{z_{\varepsilon}(n\delta,.)-z(n\delta,.)}_{2}} \leq \left(\norm{\norm{z_{\varepsilon}(0,.)-z(0,.)}_{2}}+n\ell(\delta,\varepsilon)\right)\exp\left(Kn\delta\right).
\end{equation}
\end{prop}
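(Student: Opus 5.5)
We will compare $z_\varepsilon$ and $z$ on a grid of step $\delta$ and prove a one-step recursion
\begin{equation*}
e_{k+1}\le (1+K\delta)\,e_k+\ell(\delta,\varepsilon),\qquad e_k:=\norm{\norm{z_\varepsilon(k\delta,.)-z(k\delta,.)}_{2}},
\end{equation*}
valid whenever $(k+1)\delta\le S$. Iterating gives $e_n\le (1+K\delta)^n e_0+\sum_{j<n}(1+K\delta)^j\ell\le (e_0+n\ell)\exp(Kn\delta)$, which is the claim. On $[0,S]$ all trajectories stay in $\mathcal{B}(z_0(\omega),R)$, so the sup constants $C_f,C'_f,C_{g_i},\dots$ apply. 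There the speed of $z_\varepsilon$ is at most $L_{z_\varepsilon}$, because the singular prefactor $\overline{h_i}(z_\varepsilon)/v_i(\varepsilon)$ is bounded by $u_i/v_i$ before $T_{\mathcal{S}}$. Similarly, $|\boldsymbol\alpha|\le C_\alpha$ (using $\sigma_{\min}(\boldsymbol G)\ge a$ and the bound on $\boldsymbol F$), so the speed of $z$ is at most $L_z$. Differentiating $\boldsymbol G\boldsymbol\alpha=-\boldsymbol F$ along the flow shows that $\boldsymbol\alpha$ is Lipschitz in time with constant $C'_\alpha L_z$.

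\textbf{Step 1 (second-order expansions over one step).} Write
\begin{equation*}
z(t+\delta)-z(t)=\delta\Big[\sum_i\alpha_i(t)g_i(z(t))+f(z(t))\Big]+r_z,
\end{equation*}
with $|r_z|\lesssim K_{v_{z_\varepsilon}}$; this error comes from the variation of $g_i(z)$, $f(z)$ and $\alpha_i$ over the interval, bounded by a Cauchy--Schwarz estimate that produces the $\delta^2/\sqrt3$ factors. For $z_\varepsilon$ we do not expand the singular coefficients. Instead we write
\begin{equation*}
z_\varepsilon(t+\delta)-z_\varepsilon(t)=\sum_i \lambda_i\, g_i(z_\varepsilon(t))+\delta f(z_\varepsilon(t))+r_{z_\varepsilon},\qquad \lambda_i:=\int_t^{t+\delta}\frac{\overline{h_i}(z_\varepsilon(r))}{v_i(\varepsilon)}\,dr,
\end{equation*}
where $|r_{z_\varepsilon}|\lesssim K_{v_z}$ uses only the Lipschitz bounds on $g_i,f$ and the speed bound $L_{z_\varepsilon}$. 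This step avoids differentiating $\overline{h_i}(z_\varepsilon)$.

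\textbf{Step 2 (identifying the multipliers).} Taylor-expand $h_i$ to second order and integrate over $\Omega$. The increment of $\overline{h_i}(z_\varepsilon)$ is bounded by $2u_i(\varepsilon)$, so
\begin{equation*}
\Big|\int\big\langle\nabla h_i(z_\varepsilon(t)),\,z_\varepsilon(t+\delta)-z_\varepsilon(t)\big\rangle\,d\rho\Big|\le 2u_i(\varepsilon)+C''_{h_i}L^2\delta^2 .
\end{equation*}
Substituting Step 1 yields
\begin{equation*}
\boldsymbol G(z_\varepsilon(t))\boldsymbol\lambda+\delta\,\boldsymbol F(z_\varepsilon(t))=O\big(2u+C''_hL^2\delta^2+C'_h\,|r|\big).
\end{equation*}
Comparing with $\boldsymbol G\,\delta\boldsymbol\alpha(z_\varepsilon(t))+\delta\boldsymbol F=0$ and inverting $\boldsymbol G$, whose inverse has norm at most $1/a$, gives $|\boldsymbol\lambda-\delta\boldsymbol\alpha(z_\varepsilon(t))|\le \frac1a(\cdots)$. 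Multiplying by $|g|$ reproduces the last group of terms in $\ell$. We then replace $\boldsymbol\alpha(z_\varepsilon(t))$ by $\boldsymbol\alpha(z(t))$, at a cost of $C'_\alpha$ times $e_k$. Here $C'_\alpha$ also serves as the Lipschitz constant of $u\mapsto\boldsymbol\alpha(u)$, obtained by differentiating $\boldsymbol G\boldsymbol\alpha+\boldsymbol F=0$ in $u$.

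\textbf{Step 3 (closing).} Subtract the two expansions pointwise in $\omega$. The drift difference
\begin{equation*}
\delta\Big[\sum_i\alpha_i\big(g_i(z_\varepsilon)-g_i(z)\big)+f(z_\varepsilon)-f(z)\Big]+\delta\sum_i\big(\alpha_i(z_\varepsilon)-\alpha_i(z)\big)g_i(z_\varepsilon)
\end{equation*}
is bounded by $\delta K e_k$, with $K=C'_f+C_\alpha|C'_g|+C'_\alpha|C_g|$. All remaining terms are collected into $\ell(\delta,\varepsilon)$. Taking the supremum over $\omega$ gives the recursion.

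The main obstacle is Step 2, namely controlling the integrated multiplier $\boldsymbol\lambda$ without any derivative information on $\overline{h_i}(z_\varepsilon)$. The integral-constraint structure makes $\boldsymbol G$ a global (averaged) matrix, so the inversion must be done at the level of $\Omega$-averages and then propagated back to pointwise bounds. The constant bookkeeping, especially the $\sqrt m$ and $1/a$ factors, also has to match the stated $\ell$.
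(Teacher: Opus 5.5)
Your proposal is correct and follows the paper's proof essentially step for step:
\begin{itemize}
\item the same one-step remainders (the paper's $v_{z_\varepsilon}$ and $v_z$);
\item the same integrated multiplier $\boldsymbol{\lambda}=\delta\boldsymbol{\alpha}_{\varepsilon}(t)$, identified through a second-order Taylor expansion of the $h_i$ averaged over $\Omega$ and inverted using $\sigma_{\min}(\boldsymbol{G})\geq a$;
\item the same constant $K$;
\item the same discrete Gronwall iteration.
\end{itemize}

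The only point to tighten is the bound $\norm{\boldsymbol{\alpha}(z_\varepsilon(t,.))-\boldsymbol{\alpha}(z(t,.))}_{2}\leq C'_{\alpha}e_k$. Obtain it, as the paper does, from the resolvent identity $\boldsymbol{G}(z)^{-1}-\boldsymbol{G}(z_\varepsilon)^{-1}=\boldsymbol{G}(z_\varepsilon)^{-1}(\boldsymbol{G}(z_\varepsilon)-\boldsymbol{G}(z))\boldsymbol{G}(z)^{-1}$. This needs invertibility only at the two endpoints, which $T_a$ guarantees. Differentiating $\boldsymbol{G}\boldsymbol{\alpha}+\boldsymbol{F}=0$ in $u$ instead would require $\boldsymbol{G}$ to be invertible along the whole path between $z$ and $z_\varepsilon$, which is not known.
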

\begin{proof}
  We begin by writing, for any $t <S$ and $\delta >0$ such that $t+\delta \leq S$,
  \begin{align}
    \label{eq:dev_z_e}
    z_{\varepsilon}(t+\delta,\omega) &= z_{\varepsilon}(t,\omega)+\int_{0}^{\delta}f(z_{\varepsilon}(t+s,\omega))ds+\sum_{i=1}^{m}\int_{0}^{\delta}g_{i}(z_{\varepsilon}(t+s,\omega))\int_{\Omega}\frac{h_{i}(z_{\varepsilon}(t+s,\nu))}{v_{i}(\varepsilon)}d\rho(\nu)ds, \\
    z(t+\delta,\omega) &= z(t,\omega)+\int_{0}^{\delta}f(z(t+s,\omega))ds+\sum_{i=1}^{m}\int_{0}^{\delta}\alpha_{i}(t+s)g_{i}(z(t+s,\omega))ds.
    \label{eq:dev_z}
  \end{align}
Let us introduce the vectors $v_{z_{\varepsilon}},v_{z},$ defined as 
\begin{align}
  \label{eq:def_z_e}
  v_{z_{\varepsilon}}(t,\omega) &= z_{\varepsilon}(t+\delta,\omega)-z_{\varepsilon}(t,\omega)-\delta f(z_{\varepsilon}(t,\omega))-\sum_{i=1}^{m}g_{i}(z_{\varepsilon}(t,\omega))\int_{0}^{\delta}\int_{\Omega}\frac{h_{i}(z_{\varepsilon}(t+s,\nu))}{v_{i}(\varepsilon)}d\rho(\nu)ds, \\
  v_{z}(t,\omega) &= z(t+\delta,\omega)-z(t,\omega)-\delta f(z(t,\omega))-\delta\sum_{i=1}^{m}\alpha_{i}(t)g_{i}(z(t,\omega)),
  \label{eq:def_z}
\end{align}
and turn to bounding their norms. 
\newline
\paragraph{\textbf{Bounding the norms of $v_{z_{\varepsilon}},v_{z}$}}
Combining Eq.\eqref{eq:dev_z_e} with Eq.\eqref{eq:def_z_e}, and Eq.\eqref{eq:dev_z} with Eq.\eqref{eq:def_z}, we find 
\begin{align}
  \label{eq:exp_vz_e}
  v_{z_{\varepsilon}}(t,\omega) &= \int_{0}^{\delta}f(z_{\varepsilon}(t+s,\omega))ds-\delta f(z_{\varepsilon}(t,\omega)) \notag \\ 
  &\hspace{-1cm}+\sum_{i=1}^{m}\left(\int_{0}^{\delta}g_{i}(z_{\varepsilon}(t+s,\omega))\int_{\Omega}\frac{h_{i}(z_{\varepsilon}(t+s,\nu))}{v_{i}(\varepsilon)}d\rho(\nu)ds-g_{i}(z_{\varepsilon}(t,\omega))\int_{0}^{\delta}\int_{\Omega}\frac{h_{i}(z_{\varepsilon}(t+s,\nu))}{v_{i}(\varepsilon)}d\rho(\nu)ds\right)
\end{align}
and 
\begin{align}
  \label{eq:exp_vz}
  v_{z}(t,\omega) &= \int_{0}^{\delta}f(z(t+s),\omega)ds-\delta f(z(t,\omega))+\sum_{i=1}^{m}\left(\int_{0}^{\delta}\alpha_{i}(t+s)g_{i}(z(t+s,\omega))ds-\delta\alpha_{i}(t)g_{i}(z(t,\omega))\right).
\end{align}
Starting with $v_{z_{\varepsilon}}$, a first order Taylor expansion then gives 
\begin{equation}
  f(z_{\varepsilon}(t+s),\omega) = f(z_{\varepsilon}(t,\omega))+sDf[z(\tilde{t},\omega)]\partial_{t}z(\tilde{t},\omega),
\end{equation}
for some $\tilde{t} \in [t,t+s]$. But then, owing to Eq.\eqref{eq:model_dyn} and the definition of $S$, 
\begin{equation}
  \label{eq:Lz_e}
  \sup_{0 \leq t \leq S, \thickspace \omega \in \Omega} \norm{\partial_{t}z_{\varepsilon}(t,\omega)}_{2} \leq \sum_{i=1}^{m}\frac{u_{i}(\varepsilon)}{v_{i}(\varepsilon)}C_{g_{i}}(z_{0},R)+C_{f}(z_{0},R),
\end{equation}
so that, letting $L_{z_{\varepsilon}}(z_{0},R)$ denote the right-hand-side of the above inequality, we find
\begin{align}
  \norm{\int_{0}^{\delta}f(z_{\varepsilon}(t+s,\omega))ds-\delta f(z_{\varepsilon}(t,\omega))}_{2} &\leq \sqrt{\delta}\left(\int_{0}^{\delta}s^{2}\norm{Df[z(\tilde{t}(s),\omega)]}_{op}^{2}\norm{\partial_{t}z(\tilde{t}(s),\omega)}_{2}^{2}ds\right)^{1/2} \\
  &\leq \frac{\delta^{2}}{\sqrt{3}}C'_{f}(z_{0},R)L_{z_{\varepsilon}}(z_{0},R).
  \label{eq:b1_vz_e}
\end{align}
Letting, for any $1 \leq i \leq m$, $\beta_{\varepsilon,i}(t) = \int_{\Omega}\frac{h_{i}(z_{\varepsilon}(t,\omega))}{v_{i}(\varepsilon)}d\rho(\omega)$, we turn to bounding the second term in Eq.\eqref{eq:exp_vz_e}. Starting again with a first order 
Taylor expansion, we find
\begin{equation}
  g_{i}(z_{\varepsilon}(t+s),\omega) = g_{i}(z_{\varepsilon}(t,\omega))+sDg_{i}[z_{\varepsilon}(\tilde{t},\omega)]\partial_{t}z_{\varepsilon}(\tilde{t},\omega),
\end{equation}
for some $\tilde{t}(s) \in [t,t+\delta]$. Then, 
\begin{align}
  &\norm{\int_{0}^{\delta}g_{i}(z_{\varepsilon}(t+s,\omega))\beta_{\varepsilon,i}(t+s)ds-g_{i}(z_{\varepsilon}(t,\omega))\int_{0}^{\delta}\beta_{\varepsilon,i}(t+s)ds}_{2}, \notag \\
  &\leq \norm{\int_{0}^{\delta}sDg_{i}[z_{\varepsilon}(\tilde{t}(s),\omega)]\partial_{t}z_{\varepsilon}(\tilde{t}(s),\omega)\beta_{\varepsilon,i}(t+s)ds}_{2}, \notag \\
  &\leq \frac{\delta^{2}}{\sqrt{3}}C'_{g_{i}}(z_{0},R)L_{z_{\varepsilon}}(z_{0},R)\frac{u_{i}(\varepsilon)}{v_{i}(\varepsilon)},
  \label{eq:b2_vz_e}
\end{align}
where we have used the upper bound \eqref{eq:Lz_e} and the fact that $\sup_{0 \leq t \leq S}\vert \beta_{\varepsilon,i}(t) \vert \leq \frac{u_{i}(\varepsilon)}{v_{i}(\varepsilon)}$.
Combining the bounds \eqref{eq:b1_vz_e} and \eqref{eq:b2_vz_e}, along with the triangle inequality, we reach  
\begin{equation}
  \label{eq:final_vz_e}
  \norm{v_{z_{\varepsilon}}(t,\omega)}_{2} \leq \frac{\delta^{2}}{\sqrt{3}}L_{z_{\varepsilon}}(z_{0},R)\left(\sum_{i=1}^{m}\frac{u_{i}(\varepsilon)}{v_{i}(\varepsilon)}C'_{g_{i}}(z_{0},R)+C'_{f}(z_{0},R)\right).
\end{equation}
Turning to $v_{z}$, we begin by noting that, for any $t \leq S$, 
\begin{align}
  \norm{\sum_{i=1}^{m}\alpha_{i}(t)g_{i}(z(t,\omega))}_{2} &\leq \sum_{i=1}^{m}\vert \alpha_{i}(t) \vert C_{g_{i}}(z_{0},R) \\
  &\leq \norm{\boldsymbol{\alpha}(t)}_{2} \left(\sum_{i=1}^{m}C_{g_{i}}(z_{0},R)^{2}\right)^{1/2},
\end{align} 
owing to the Cauchy-Schwarz inequality.
But then, recalling that $\boldsymbol{\alpha}(t) = -\boldsymbol{G}(t)^{-1}\boldsymbol{F}(t)$, we find 
\begin{equation}
  \label{eq:c_alpha}
  \norm{\boldsymbol{\alpha}(t)}_{2} \leq \frac{\norm{\boldsymbol{F}(t)}_{2}}{a},
\end{equation}
where, recalling the definition of $\boldsymbol{F}(t) := \boldsymbol{F}(z(t,.))$, we have the upper bound 
\begin{equation}
  \norm{\boldsymbol{F}(t)}_{2} \leq C_{f}(z_{0},R)\left(\sum_{i=1}^{m}C'_{h_{i}}(z_{0},R)^{2}\right)^{1/2}
\end{equation}
Letting $C_{F}$ denote the right-hand-side of the above inequality and $C_{\alpha} = \frac{C_{F}}{a}$, we reach
\begin{align}
  \label{eq:Lz}
  \sup_{0 \leq t \leq S, \omega \in \Omega} \norm{\partial_{t}z(t,\omega)}_{2} &\leq C_{\alpha}\left(\sum_{i=1}^{m}C_{g_{i}}(z_{0},R)^{2}\right)^{1/2}+C_{f}(z_{0},R).
\end{align}
Letting $L_{z}(z_{0},R)$ denote the right-hand-side of inequality \eqref{eq:Lz}, an argument similar to the one used to obtain the bound \eqref{eq:b1_vz_e} gives 
\begin{equation}
  \norm{\int_{0}^{\delta}f(z(t+s),\omega)ds-\delta f(z(t,\omega))}_{2} \leq \frac{\delta^{2}}{\sqrt{3}}C'_{f}(z_{0},R)L_{z}(z_{0},R).
\end{equation}
Now, for any $0 < t,t' \leq S$ , the triangle inequality gives 
\begin{align}
  \norm{\boldsymbol{\alpha}(t)-\boldsymbol{\alpha}(t')}_{2} \leq \norm{\boldsymbol{G}^{-1}(t)-\boldsymbol{G}^{-1}(t')}_{op}\norm{\boldsymbol{F}(t)}_{2}+\norm{\boldsymbol{G}^{-1}(t)}_{op}\norm{\boldsymbol{F}(t)-\boldsymbol{F}(t')}_{2}.
\end{align}
Then, for any $1 \leq i \leq m$, the fact that $f,\nabla h_{i}$ are bounded Lipschitz (on $\mathcal{B}(z_{0},R)$) gives,
\begin{align}
  \big\vert \left[\boldsymbol{F}(t)-\boldsymbol{F}(t')\right]_{i} \big \vert &\leq \int_{\Omega}\big \vert \left(\langle f(z(t,\omega)),\nabla h_{i}(z(t,\omega)) \rangle-\langle f(z(t',\omega)),\nabla h_{i}(z(t',\omega)) \rangle\right)\big \vert d\rho(\omega) \\
  &\leq \left(C'_{f}C'_{h_{i}}+C_{f}C''_{h_{i}}\right)L_{z}\vert t-t'\vert,
\end{align}
leading to 
\begin{equation}
  \norm{\boldsymbol{F}(t)-\boldsymbol{F}(t')}_{2} \leq \left(\sum_{i=1}^{m}\left(C'_{f}C'_{h_{i}}+C_{f}C''_{h_{i}}\right)^{2}\right)^{1/2}L_{z}\vert t-t' \vert.
\end{equation}
Then, noting that $\boldsymbol{G}^{-1}(t)-\boldsymbol{G}^{-1}(t') = \boldsymbol{G}^{-1}(t')\left(\boldsymbol{G}(t')-\boldsymbol{G}(t)\right)\boldsymbol{G}^{-1}(t)$
\begin{align}
  \norm{\boldsymbol{G}^{-1}(t)-\boldsymbol{G}^{-1}(t')}_{op} &\leq \frac{1}{a^{2}}\norm{\boldsymbol{G}(t)-\boldsymbol{G}(t')}_{op} \\
  &\leq \frac{m}{a^{2}}\sup_{1 \leq i,j \leq m}\vert\left[\boldsymbol{G}(t)-\boldsymbol{G}(t')\right]_{ij} \vert \\
  &\leq \frac{m}{a^{2}}\sup_{1 \leq i,j \leq m}\left(C_{g_{i}}C''_{h_{j}}+C'_{g_{i}}C'_{h_{j}}\right)L_{z}\vert t-t' \vert.
\end{align}
Letting
\begin{equation}
  \label{eq:bound_dalpha}
  C'_{\alpha} = \left(\frac{1}{a}\left(\sum_{i=1}^{m}\left(C'_{f}C'_{h_{i}}+C_{f}C''_{h_{i}}\right)^{2}\right)^{1/2}+\frac{m}{a^{2}}\sup_{1 \leq i,j \leq n}\left(C_{g_{i}}C''_{h_{j}}+C'_{g_{i}}C'_{h_{j}}\right)C_{F}\right),
\end{equation}
we reach
\begin{align}
  \label{eq:bound_dalpha_f}
\norm{\boldsymbol{\alpha}(t)-\boldsymbol{\alpha}(t')}_{2} \leq C'_{\alpha}L_{z}\vert t-t' \vert.
\end{align}
Then, for any $1 \leq i \leq m$, we write
\begin{align*}
  \alpha_{i}(t+s)g_{i}(z(t+s),\omega)-\alpha_{i}(t)g_{i}(z(t,\omega)) &= \alpha_{i}(t+s)g_{i}(z(t+s),\omega)-\alpha_{i}(t+s)g_{i}(z(t),\omega), \\
  &+\alpha_{i}(t+s)g_{i}(z(t,\omega))-\alpha_{i}(t)g_{i}(z(t,\omega)),
\end{align*}
so that 
\begin{equation}
\norm{\alpha_{i}(t+s)g_{i}(z(t+s),\omega)-\alpha_{i}(t)g_{i}(z(t,\omega))}_{2} \leq \vert \alpha_{i}(t+s) \vert C'_{g_{i}}L_{z}s+C_{g_{i}}\vert \alpha_{i}(t+s)-\alpha_{i}(t) \vert,
\end{equation}
and
\begin{align*}
  &\norm{\sum_{i=1}^{m}\left(\int_{0}^{\delta}\alpha_{i}(t+s)g_{i}(z(t+s,\omega))ds-\delta\alpha_{i}(t)g_{i}(z(t,\omega))\right)}_{2} \leq \\
  &\sqrt{2\delta}\sum_{i=1}^{m}\left(\int_{0}^{\delta}\left(\vert \alpha_{i}(t+s)|C'_{g_{i}}L_{z}s\right)^{2}ds +\int_{0}^{\delta}\left(C_{g_{i}}\vert \alpha_{i}(t+s)-\alpha_{i}(t) \vert\right)^{2}ds\right)^{1/2} \\
  &\leq \sqrt{2\delta m}\left(\frac{\delta^{3}}{3}L_{z}^{2}\left(\sup_{1 \leq i \leq n}C'_{g_{i}}\right)^{2}\sum_{i=1}^{m}\alpha_{i}(t+s)^{2}+\left(\sup_{1 \leq i \leq n}C_{g_{i}}\right)^{2}\int_{0}^{\delta}\norm{\boldsymbol{\alpha}(t+s)-\boldsymbol{\alpha}(t)}_{2}^{2}ds\right)^{1/2} \\
  &\leq \frac{\sqrt{2m}}{\sqrt{3}}\delta^{2}\left(C_{\alpha}^{2}L_{z}^{2}\left(\sup_{1 \leq i \leq n}C'_{g_{i}}\right)^{2}+\left(\sup_{1 \leq i \leq n}C_{g_{i}}\right)^{2}(C'_{\alpha})^{2}\right)^{1/2}
\end{align*}
We finally reach 
\begin{equation}
  \label{eq:final_vz}
  \norm{v_{z}(t,\omega)}_{2} \leq \frac{\delta^{2}}{\sqrt{3}}C'_{f}(z_{0},R)L_{z}(z_{0},R)+\frac{\delta^{2}\sqrt{2m}}{\sqrt{3}}\left(C_{\alpha}^{2}L_{z}^{2}\left(\sup_{1 \leq i \leq n}C'_{g_{i}}\right)^{2}+\left(\sup_{1 \leq i \leq n}C_{g_{i}}\right)^{2}(C'_{\alpha}L_{z})^{2}\right)^{1/2}.
\end{equation}
In what follows, we will respectively denote $K_{v_{z_{\varepsilon}}}$ and $K_{v_{z}}$ the bounds obtained at Eq.\eqref{eq:final_vz_e} and Eq.\eqref{eq:final_vz}. We now turn to showing that the vector $\boldsymbol{\alpha}(t)$ is close 
to the vector $\boldsymbol{\alpha}_{\varepsilon}(t)$ whose entries are given by
\begin{equation}
\alpha_{\varepsilon,i}(t) = \frac{1}{\delta}\int_{0}^{\delta}\int_{\Omega}\frac{h_{i}(z_{\varepsilon}(t+s),\omega)}{v_{i}(\varepsilon)}d\rho(\omega)ds, \quad \forall \thickspace 1 \leq i \leq m.
\end{equation}
\paragraph{\textbf{Approximating} \textbf{the} $\alpha_{\varepsilon,i}(t)$ \textbf{with the} $\boldsymbol{\alpha_{i}}(t)$}
Owing to the definitions of $v_{z_{\varepsilon}},v_{z}$, we may write
\begin{align}
  \label{eq:exp_z_e}
  z_{\varepsilon}(t+\delta,\omega) &= z_{\varepsilon}(t,\omega)+\delta f(z_{\varepsilon}(t,\omega))+\delta\sum_{i=1}^{m} \alpha_{\varepsilon,i}(t)g_{i}(z_{\varepsilon}(t,\omega))+v_{z_{\varepsilon}}(t,\omega), \\
  z(t+\delta,\omega) &= z(t,\omega)+\delta f(z(t,\omega))+\delta\sum_{i=1}^{m}\alpha_{i}(t)g_{i}(z(t,\omega))+v_{z}(t,\omega).
  \label{eq:exp_z}
\end{align}
Then, for any $1 \leq i \leq m$, an order two Taylor expansion gives that 
\begin{align}
    \label{eq:taylor_h_e}
    h_{i}(z_{\varepsilon}(t+\delta,\omega)) = h_{i}(z_{\varepsilon}(t,\omega))&+\langle \nabla h_{i}(z_{\varepsilon}(t,\omega)),z_{\varepsilon}(t+\delta,\omega)-z_{\varepsilon}(t,\omega) \rangle, \notag \\
    &+\frac{1}{2}\nabla^{2}h_{i}(\tilde{z}_{\varepsilon})\left[(z_{\varepsilon}(t+\delta,\omega)-z_{\varepsilon}(t,\omega))^{\otimes 2}\right],
  \end{align}
for some $\tilde{z}_{\varepsilon} \in \mathcal{B}(z_{\varepsilon}(t,\omega),\norm{z_{\varepsilon}(t+\delta,\omega)-z_{\varepsilon}(t,\omega)}_{2}) \subset \mathcal{B}(z_{\varepsilon,0}(\omega),3R)$.
Letting
\begin{equation}
  \label{eq:s}
  s_{i}(t,\omega) =  h_{i}(z_{\varepsilon}(t+\delta,\omega)) - h_{i}(z_{\varepsilon}(t,\omega)) - \frac{1}{2}\nabla^{2}h_{i}(\tilde{z}_{\varepsilon})\left[(z_{\varepsilon}(t+\delta,\omega)-z_{\varepsilon}(t,\omega))^{\otimes 2}\right]
\end{equation}
and replacing the increment $z_{\varepsilon}(t+\delta,\omega)-z_{\varepsilon}(t,\omega)$ using Eq.\eqref{eq:exp_z_e} in Eq.\eqref{eq:taylor_h_e}, and integrating with respect to $\omega$, we find, for any 
$1 \leq j \leq m$,
\begin{align}
  \label{eq:def_alpha_eps}
  \int_{\Omega}\langle \nabla h_{j}(z_{\varepsilon}(t,\omega)), \delta f(z_{\varepsilon}(t,\omega))+\delta \sum_{i=1}^{m}\alpha_{\varepsilon,i}(t)g_{i}(z_{\varepsilon}(t,\omega))+v_{z_{\varepsilon}}(t,\omega) \rangle d\rho(\omega) = \int_{\Omega} s_{j}(t,\omega)d\rho(\omega).
\end{align}
Now, defining the vectors $\boldsymbol{s}(t)$ and $\boldsymbol{V}_{z_{\varepsilon}}(t)$, whose entries are respectively given by 
\begin{align}
  &[\boldsymbol{s}(t)]_{1 \leq i \leq m} = \int_{\Omega}s_{i}(t,\omega)d\rho(\omega), \\
  &[\boldsymbol{V}_{z_{\varepsilon}}(t)]_{1 \leq i \leq m} = \int_{\Omega}\langle \nabla h_{i}(z_{\varepsilon}(t,\omega)),v_{z_{\varepsilon}}(t,\omega) \rangle d\rho(\omega),
\end{align}
we may rewrite Eq.\eqref{eq:def_alpha_eps} as 
\begin{equation}
  \delta \left(\boldsymbol{G}(z_{\varepsilon}(t,.))\boldsymbol{\alpha}_{\varepsilon}(t)+\boldsymbol{F}(z_{\varepsilon}(t,.))\right) = -\boldsymbol{V}_{z_{\varepsilon}}(t)+\boldsymbol{s}(t). 
\end{equation}
Abbreviating $\boldsymbol{G}(z_{\varepsilon}(t,.))$ with $\boldsymbol{G}^{\varepsilon}(t)$ and $\boldsymbol{F}(z_{\varepsilon}(t,.))$ with $\boldsymbol{F}^{\varepsilon}(t)$, and letting $\tilde{\boldsymbol{\alpha}}_{\varepsilon}(t) = - \boldsymbol{G}^{\varepsilon}(t)^{-1}\boldsymbol{F}^{\varepsilon}(t)$, we may write 
\begin{align}
\norm{\boldsymbol{\alpha}(t)-\boldsymbol{\alpha}_{\varepsilon}(t)}_{2} &\leq \norm{\boldsymbol{\alpha}(t)-\tilde{\boldsymbol{\alpha}}_{\varepsilon}(t)}_{2}+\frac{1}{\delta}\norm{\boldsymbol{G}^{\varepsilon}(t)^{-1}\left(-\boldsymbol{V}_{z_{\varepsilon}}(t)+\boldsymbol{s}(t)\right)}\notag \\
&\leq \norm{\boldsymbol{\alpha}(t)-\tilde{\boldsymbol{\alpha}}_{\varepsilon}(t)}_{2}+\frac{1}{\delta a}\left(\norm{\boldsymbol{V}_{z_{\varepsilon}}(t)}_{2}+\norm{\boldsymbol{s}(t)}_{2}\right).
\end{align}
Now, using the definition of $s_{i}(t)$ at Eq.\eqref{eq:s}, we find
\begin{equation}
  \sup_{0 \leq t \leq S} \vert \int_{\Omega}s_{i}(t,\omega)d\rho(\omega) \vert \leq 2u_{i}(\varepsilon)+C''_{h_{i}}L^{2}_{z}\delta^{2},
\end{equation}
and thus 
\begin{equation}
  \norm{\boldsymbol{s}(t)}_{2} \leq \left(\sum_{i=1}^{m}\left(2u_{i}(\varepsilon)+C''_{h_{i}}L^{2}_{z}\delta^{2}\right)^{2}\right)^{1/2}.
\end{equation}
Also,
\begin{equation}
  \norm{\boldsymbol{V}_{z_{\varepsilon}}(t)}_{2} \leq K_{v_{z_{\varepsilon}}}\left(\sum_{i}^{m}(C'_{h_{i}}(z_{0},R))^{2}\right)^{1/2}.
\end{equation}
Furthermore, the argument employed to establish Eq.\eqref{eq:bound_dalpha_f} also implies that
\begin{equation}
  \norm{\boldsymbol{\alpha}(t)-\tilde{\boldsymbol{\alpha}}_{\varepsilon}(t)}_{2} \leq C'_{\alpha}\norm{z(t)-z_{\varepsilon}(t)}_{2}.
\end{equation}
In summary, we reach 
\begin{align}
  \norm{\boldsymbol{\alpha}(t)-\boldsymbol{\alpha}_{\varepsilon}(t)}_{2} \leq &\frac{1}{\delta a}\left(\left(\sum_{i=1}^{m}\left(2u_{i}(\varepsilon)+C''_{h_{i}}L^{2}_{z}\delta^{2}\right)^{2}\right)^{1/2}+K_{v_{z_{\varepsilon}}}\left(\sum_{i}^{m}(C'_{h_{i}}(z_{0},R))^{2}\right)^{1/2}\right) \notag\\
  &+C'_{\alpha}\norm{z(t,\omega)-z_{\varepsilon}(t,\omega)}_{2}.
\end{align}
\paragraph{\textbf{Concluding the proof}}
To conclude the proof, we use Eq.\eqref{eq:exp_z_e},\eqref{eq:exp_z} to obtain that, for all $\omega \in \Omega$,
\begin{align}
  \norm{z_{\varepsilon}(t+\delta,\omega)-z(t+\delta,\omega)}_{2} &\leq \left(1+\delta C'_{f}\right)\norm{z_{\varepsilon}(t,\omega)-z(t,\omega)}_{2}+K_{v_{z}}+K_{v_{z_{\varepsilon}}},\notag \\
  &\hspace{2cm}+\delta \norm{\sum_{i=1}^{m}\left(\alpha_{\varepsilon,i}(t)g_{i}(z_{\varepsilon}(t,\omega))-\alpha_{i}(t)g_{i}(z(t,\omega))\right)}_{2}.
\end{align}
But then,
\begin{align*}
 &\norm{\sum_{i=1}^{m}\left(\alpha_{\varepsilon,i}(t)g_{i}(z_{\varepsilon}(t,\omega))-\alpha_{i}(t)g_{i}(z(t,\omega))\right)}_{2} \leq \sum_{i=1}^{m} \norm{\vert \alpha_{\varepsilon,i}(t)-\alpha_{i}(t)\vert g_{i}(z_{\varepsilon}(t,\omega))}_{2} \\
  &\hspace{7cm}+\sum_{i=1}^{m} \norm{\alpha_{i}(t)\left(g_{i}(z_{\varepsilon}(t,\omega))-g_{i}(z(t,\omega))\right)}_{2} \\
  &\leq \left(\sum_{i=1}^{m}C_{g_{i}}^{2}\right)^{1/2}\norm{\boldsymbol{\alpha}(t)-\boldsymbol{\alpha}_{\varepsilon}(t)}_{2}+C_{\alpha}\left(\sum_{i=1}^{m}(C'_{g_{i}})^{2}\right)^{1/2}\norm{z(t,\omega)-z_{\varepsilon}(t,\omega)}_{2}.
\end{align*}
Gathering the above bounds, we introduce the quantities 
\begin{align}
  K &= \left(C'_{f}+C_{\alpha}\left(\sum_{i=1}^{m}(C'_{g_{i}})^{2}\right)^{1/2}+C'_{\alpha}\left(\sum_{i=1}^{m}C_{g_{i}}^{2}\right)^{1/2}\right), \\
  \ell(\delta,\varepsilon) &= K_{v_{z}}+K_{v_{z_{\varepsilon}}}+\frac{\left(\sum_{i=1}^{m}C_{g_{i}}^{2}\right)^{1/2}}{a}\left(\left(\sum_{i=1}^{m}\left(2u_{i}(\varepsilon)+C''_{h_{i}}L^{2}_{z}\delta^{2}\right)^{2}\right)^{1/2}+K_{v_{z_{\varepsilon}}}\left(\sum_{i}^{m}(C'_{h_{i}}(z_{0},R))^{2}\right)^{1/2}\right),
\end{align}
so that we finally obtain the following control:
\begin{align}
  &\norm{z_{\varepsilon}(t+\delta,\omega)-z(t+\delta,\omega)}_{2} \leq \left(1+K\delta\right)\norm{z_{\varepsilon}(t,\omega)-z(t,\omega)}_{2}+\ell(\delta,\varepsilon).
\end{align}
Applying the discrete Gronwall inequality then gives that, for any $n \in \mathbb{N}$ such that $n\delta \leq S$, 
\begin{equation}
  \norm{z_{\varepsilon}(n\delta,\omega)-z(n\delta,\omega)}_{2} \leq \left(\norm{z_{\varepsilon}(0,\omega)-z(0,\omega)}_{2}+nl(\delta,\varepsilon)\right)\exp\left(Kn\delta\right).
\end{equation}
\end{proof}
We now specialize this result to a family of dynamical systems related to \eqref{eq:main_mf_dyn1}-\eqref{eq:main_mf_dyn2}, using the same notations. For any integer $D$, consider the following evolution:
  \begin{align}
    \label{eq:gen_dyn1}
    \partial_{t}a(t,\omega) &= \sum_{k=0}^{D-1}\sigma_{k}s^{k}(t,\omega)\frac{\left(\varphi_{k}-\sigma_{k}\overline{as^{k}}(t)\right)}{v_{k}(\varepsilon)}+\sigma_{D}\varphi_{D}s^{D}(t,\omega), \\
    \partial_{t}s(t,\omega) &= \sum_{k=1}^{D-1}k\sigma_{k}a(t,\omega)s^{k-1}(t,\omega)\frac{\left(\varphi_{k}-\sigma_{k}\overline{as^{k}}(t)\right)}{v_{k}(\varepsilon)}+D\sigma_{D}\varphi_{D}a(t,\omega)s^{D-1}(t,\omega),
    \label{eq:gen_dyn2}
  \end{align}
  with initial condition $a(0,.),s(0,.) \in \mathcal{C}_{M}(\Omega,\mathbb{R})$. For any pair $z_{1}(t,.),z_{2}(t,.) \in \mathcal{C}(\Omega,\mathbb{R})$,
  define the p-dimensional vector $\boldsymbol{\alpha}(z_{1}(t,.),z_{2}(t,.))$ as any solution to the linear system 
  \begin{align}
    \label{eq:alpha_nn}
    &\forall \thickspace 0 \leq p \leq D-1 : \notag\\
    &\sum_{k=0}^{D-1}\alpha_{k}(t)\left(\sigma_{k}\overline{z_{2}^{p+k}}(t)+pk\sigma_{k}\overline{z_{1}^{2}z_{2}^{p+k-2}}(t)\right) = -\sigma_{D}\varphi_{D}\left(\overline{z_{2}^{p+D}}(t)+Dp\overline{z_{1}^{2}z_{2}^{D+p-2}}(t)\right).
  \end{align}
  Assume that there exists a pair $z_{1}(0,.),z_{2}(0,.)$ such that the system \eqref{eq:alpha_nn} has a unique solution, and consider the dynamical system 
  \begin{align}
    \partial_{t}z_{1}(t,\omega) &= \sum_{k=0}^{D-1}\sigma_{k}z_{2}^{k}(t,\omega)\alpha_{k}(t)+\sigma_{D}\varphi_{D}z_{2}^{D}(t,\omega), \\
    \partial_{t}z_{2}(t,\omega) &= \sum_{k=1}^{D-1}k\sigma_{k}z_{1}(t,\omega)z_{2}^{k-1}(t,\omega)\alpha_{k}(t)+D\sigma_{D}\varphi_{D}z_{1}(t,\omega)z_{2}^{D-1}(t,\omega),
  \end{align}
  initialized with $z_{1}(0,.),z_{2}(0,.)$. Finally, define $z_{\varepsilon}(t,\omega) = (a(t,\omega),s(t,\omega))$ and $z(t,\omega) = (z_{1}(t,\omega),z_{2}(t,\omega))$.
\begin{cor}
  \label{cor:nn_approx_dyn}
  With the same notations as in Proposition \ref{prop:approx_dyn_set}, assume that there exists a constant $C_{0}$ (independent on $\varepsilon$) such that, for each $0 \leq k \leq D-1$:
  \begin{equation}
    u_{k}(\varepsilon) \leq C_{0}v_{k}(\varepsilon),
  \end{equation}
  and denote $u^{*}(\varepsilon) = \sup_{k}\{u_{k}(\varepsilon)\}_{k=0}^{D-1}$.
  Then, there exist constants $C_{1},C_{2}$ depending only on $K,R,C_{0}$ and the coefficients $\{\sigma_{k},\varphi_{k}\}_{k=0}^{D}$ such that, for any $0 \leq t \leq S$,
  \begin{equation}
  \norm{z_{\varepsilon}(t,.)-z(t,.)} \leq \left(\norm{z_{\varepsilon}(0,.)-z(0,.)}+C_{1}\sqrt{u^{*}(\varepsilon)}\right)\exp\left(C_{2}t\right).
\end{equation}
\end{cor}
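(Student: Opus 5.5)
The proof will be a direct specialization of Proposition \ref{prop:approx_dyn_set}, followed by an optimisation of the step size $\delta$. The state is $z=(z_1,z_2)\in\mathbb{R}^2$, with $m=D$ constraints indexed by $0\le k\le D-1$. The vector fields are
\[
g_k(z)=\bigl(\sigma_k z_2^k,\; k\sigma_k z_1 z_2^{k-1}\bigr),\qquad f(z)=\bigl(\sigma_D\varphi_D z_2^D,\; D\sigma_D\varphi_D z_1 z_2^{D-1}\bigr).
\]
The constraint $\varphi_k-\sigma_k\overline{as^k}=0$ is not of the nonnegative form $\int h\,d\rho$ required in the appendix, so I will encode it linearly. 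I take $h_k(z)=\varphi_k-\sigma_k z_1z_2^k$, read the assumption of Proposition \ref{prop:approx_dyn_set} as the bound $|\int h_k(z_\varepsilon)\,d\rho|\le u_k(\varepsilon)$, and note that its proof only uses this absolute-value bound. Then
\[
\nabla h_p(z)=-\sigma_p\bigl(z_2^p,\; p z_1 z_2^{p-1}\bigr).
\]
Computing $\boldsymbol{G}_{pk}=\int\langle g_k,\nabla h_p\rangle\,d\rho$ and $\boldsymbol{F}_p=\int\langle f,\nabla h_p\rangle\,d\rho$ gives, up to the common factor $-\sigma_p$, exactly the coefficients $\sigma_k\overline{z_2^{p+k}}+pk\sigma_k\overline{z_1^2z_2^{p+k-2}}$ and the right-hand side $\sigma_D\varphi_D\bigl(\overline{z_2^{p+D}}+Dp\,\overline{z_1^2z_2^{D+p-2}}\bigr)$. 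Hence the system $\boldsymbol{G}\boldsymbol{\alpha}+\boldsymbol{F}=0$ is \eqref{eq:alpha_nn}, and the auxiliary flow \eqref{eq:model_dyn_app} is the stated $z$-dynamics. The set $\mathcal{U}$ consists of the pairs for which \eqref{eq:alpha_nn} is uniquely solvable, and the threshold $a$ will be a lower bound on $\sigma_{\min}(\boldsymbol{G})$ along both trajectories, built into $S$ through $T_a$.

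Next I will bound the constants. On the ball $\mathcal{B}(z_0,R)$, every polynomial vector field and its first two derivatives are bounded by constants depending only on $R$, $\|z_0\|$ and $\{\sigma_k,\varphi_k\}_{k\le D}$. The assumption $u_k\le C_0 v_k$ makes $L_{z_\varepsilon}$, $K_{v_z}$ and $K_{v_{z_\varepsilon}}$ of size $O(\delta^2)$ with $\varepsilon$-independent constants, and makes $K$ independent of $\varepsilon$. Inspecting $\ell(\delta,\varepsilon)$, its $\varepsilon$-dependence enters only through the term
\[
\frac{1}{a}\Bigl(\sum_k\bigl(2u_k(\varepsilon)+C''_{h_k}L_z^2\delta^2\bigr)^2\Bigr)^{1/2}.
\]
This term appears in the proof divided by $\delta$, since it comes from $\|\boldsymbol{\alpha}-\boldsymbol{\alpha}_\varepsilon\|$, and is then multiplied by $\delta$ in the increment. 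With the correct $\delta$-bookkeeping, the one-step error is therefore
\[
\ell(\delta,\varepsilon)\le C\bigl(\delta^2+u^*(\varepsilon)\bigr).
\]
I expect this bookkeeping to be the main obstacle. The displayed formula for $\ell$ must be re-checked, because $\|\boldsymbol{\alpha}-\boldsymbol{\alpha}_\varepsilon\|$ carries a $1/\delta$ that is compensated by the $\delta$ in front of the $g_i$ term in the recursion. A slip here changes the final rate.

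With the one-step bound in hand, the discrete Gronwall estimate of Proposition \ref{prop:approx_dyn_set} gives, for $n\delta=t\le S$,
\[
\|z_\varepsilon(t)-z(t)\|\le\bigl(\|z_\varepsilon(0)-z(0)\|+Cn(\delta^2+u^*)\bigr)e^{Kt}=\bigl(\|z_\varepsilon(0)-z(0)\|+Ct(\delta+u^*/\delta)\bigr)e^{Kt}.
\]
Choosing $\delta=\sqrt{u^*(\varepsilon)}$ makes $\delta+u^*/\delta=2\sqrt{u^*}$. Since $te^{Kt}\le e^{(K+1)t}$, the factor $t$ is absorbed into the exponential, giving $C_1\sqrt{u^*}$ and $C_2=K+1$.

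Two residual issues remain. First, a general $t\le S$ need not be a multiple of $\delta$. I will handle this by comparing at $n\delta=\lfloor t/\delta\rfloor\delta$ and using that both trajectories have velocities bounded by $L_z$ and $L_{z_\varepsilon}$ on the ball. This adds $O(\delta)=O(\sqrt{u^*})$, which is absorbed into $C_1$. Second, the estimate is uniform in $\omega$, so taking the supremum over $\omega$ yields the stated norm bound.
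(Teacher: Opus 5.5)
Your proposal is correct and follows the paper's own proof. You specialize Proposition~\ref{prop:approx_dyn_set} to the polynomial fields, bound $K$ uniformly and $\ell(\delta,\varepsilon)\le C(\delta^{2}+u^{*}(\varepsilon))$, apply the discrete Gronwall estimate at $n=\lfloor t/\delta\rfloor$, and take $\delta=\sqrt{u^{*}(\varepsilon)}$; the paper's displayed $\ell$ already contains the $\delta\cdot\frac{1}{\delta}$ cancellation you flagged, so your check simply confirms it. The paper leaves several points implicit that you treat soundly: reading the constraint as $\vert\int h_k(z_\varepsilon)\,d\rho\vert\le u_k(\varepsilon)$ because $h_k=\varphi_k-\sigma_k z_1z_2^k$ is not nonnegative, absorbing the factor $t$ into $e^{(K+1)t}$, controlling the off-grid remainder by $(L_z+L_{z_\varepsilon})\delta$, and noting that the constants also depend on $a$ and $z_0$.
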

\begin{proof}
  We apply Proposition \ref{prop:approx_dyn_set} to the evolution 
  \eqref{eq:gen_dyn1}-\eqref{eq:gen_dyn2}, the form of which implies that there exist constants $C_{1},C_{2},C_{3}$ depending only on $D,R,\{\sigma_{k},\varphi_{k}\}_{k=0}^{D}$ 
  such that $K\leq C_{1}$ and $l(\delta,\varepsilon) \leq C_{2}\delta^{2}+C_{3}u^{*}(\varepsilon)$. Choosing $n = \lfloor \frac{t}{\delta} \rfloor$, we find, for any $\delta<1$,
  \begin{equation}
    \norm{z_{\varepsilon}(t,.)-z(t,.)} \leq \left(\norm{z_{\varepsilon}(0,.)-z(0,.)}+\frac{t}{\delta}(C_{1}\delta^{2}+C_{3}u^{*}(\varepsilon))\right)\exp\left(C_{2}t\right).
  \end{equation}
  Optimizing over $\delta$ then gives the desired result.
\end{proof}
\section{Auxiliary lemmas}
\begin{lem}
  \label{lem:Lip_sup}
  Let $f : \mathbb{R}_{+} \to \mathbb{R}$ be a Lipschitz continuous function and let $F(t) = \sup_{r \leq t} \vert f(r) \vert$. Then $F$ is Lipschitz continuous. 
  Furthermore, if there exists a positive, increasing function $g$ such that, for every $t \geq 0$, $\vert \partial_{t}f(t) \vert \leq g(t)$, then $\partial_{t} F(t) \leq g(t)$ for 
  (Lebesgue) almost every $t$.
\end{lem}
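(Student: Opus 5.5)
The plan is to treat the two claims separately, using only elementary real analysis.

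\emph{Lipschitz continuity.} Let $L$ be a Lipschitz constant of $f$. Since $F$ is non-decreasing, it suffices to fix $s\le t$ and bound $F(t)-F(s)$ from above. If the supremum defining $F(t)$ is already attained (or approached) on $[0,s]$, then $F(t)=F(s)$. Otherwise, for every $r\in[s,t]$ we have $\vert f(r)\vert\le \vert f(s)\vert+L(r-s)\le F(s)+L(t-s)$. Taking the supremum over $r\in[s,t]$ and combining with $F(s)$ gives
\begin{equation*}
0\le F(t)-F(s)\le L(t-s).
\end{equation*}
Rademacher's theorem (one-dimensional) then shows that $F$ is differentiable almost everywhere, with $F(t)-F(s)=\int_s^t\partial_rF(r)\,dr$.

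\emph{Derivative bound.} Fix a point $t$ where both $F$ and $f$ are differentiable. The refined bound follows the same scheme, with $L$ replaced by the local bound from $g$. For $h>0$ and $r\in[t,t+h]$, the mean value (or integral) estimate and the monotonicity of $g$ give
\begin{equation*}
\vert f(r)\vert\le \vert f(t)\vert+\int_t^r\vert\partial_uf(u)\vert\,du\le F(t)+h\,g(t+h).
\end{equation*}
Hence $F(t+h)\le F(t)+h\,g(t+h)$, and so
\begin{equation*}
\frac{F(t+h)-F(t)}{h}\le g(t+h).
\end{equation*}
Letting $h\downarrow 0$ yields $\partial_tF(t)\le \limsup_{h\downarrow0}g(t+h)=g(t^+)$.

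\emph{Main obstacle.} The only delicate step is replacing $g(t^+)$ by $g(t)$. A monotone function has at most countably many discontinuities, so $g(t^+)=g(t)$ for almost every $t$. Intersecting this full-measure set with the differentiability sets of $F$ and $f$ gives $\partial_tF(t)\le g(t)$ almost everywhere. In the paper's applications $g$ is continuous, for instance $C^4D(\overline{S}+1)$, so this step is harmless there.

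One caveat concerns how the lemma is used in the proof of Theorem~\ref{thm:harmo_0+1_tube}: there $g$ depends on $F$ itself. The increment argument above still applies verbatim, because it uses $g$ only through its monotonicity in $t$ along the trajectory. The resulting differential inequality for $\overline{S}$ is then closed with the standard Gronwall argument, which is valid for absolutely continuous functions and therefore for the Lipschitz function $F$.
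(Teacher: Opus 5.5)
Your proof is correct, but it reaches the derivative bound by a different route from the paper.

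\textbf{Lipschitz part.} This is essentially the paper's argument. The paper phrases it through maximizers: it compares $F(t+s)=\vert f(r_{t+s})\vert$ with $\vert f(\tilde r)\vert\le F(t)$, where $\tilde r$ is $r_{t+s}$ truncated at $t$. You phrase the same estimate directly through suprema.

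\textbf{Derivative part.} Here the two arguments genuinely differ.
\begin{itemize}
\item The paper identifies $\partial_tF$ pointwise. $F$ is locally constant where the maximum of $\vert f\vert$ is attained strictly before $t$, and $\partial_tF$ coincides with the derivative of $\vert f\vert$ where $\vert f(t)\vert=F(t)$. (The paper writes $\partial_tf$, and splits cases by the position of $r_t$; the split should really be by whether $\vert f(t)\vert=F(t)$, since maximizers need not be unique.) This gives $\partial_tF\le\vert\partial_tf\vert\le g$ at every common differentiability point, without using the monotonicity of $g$.
\item You bound the right increment, $F(t+h)-F(t)\le\int_t^{t+h}\vert\partial_uf\vert\,du\le h\,g(t+h)$. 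This genuinely needs $g$ increasing, and then the countable-discontinuity argument to pass from $g(t^+)$ to $g(t)$.
\end{itemize}
Your route is self-contained and avoids the case analysis the paper asserts without justification. However, it uses more of the hypotheses and yields a slightly weaker intermediate bound.

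\textbf{A simplification.} You assume $f$ is differentiable at $t$ but never use it. Replace the integral estimate by
$\sup_{r\in[t,t+h]}\vert f(r)-f(t)\vert\le h\vert\partial_tf(t)\vert+o(h)$,
together with $F(t+h)-F(t)\le\sup_{r\in[t,t+h]}\vert f(r)-f(t)\vert$. This gives $\partial_tF(t)\le\vert\partial_tf(t)\vert$ directly, which is the paper's sharper pointwise conclusion. It also makes both the monotonicity of $g$ and the $g(t^+)$ step unnecessary.
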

\begin{proof}
  For any $t \geq 0$ and since $f$ is continuous, we may define $r_{t} = \argsup_{r \leq t} \vert f(r) \vert$. Furthermore, let 
  \begin{align}
    \tilde{r} = \begin{cases}
                &r_{t+s} \quad \mbox{if} \thickspace r_{t+s} \leq t \\
                &t \quad \mbox{otherwise}
        \end{cases}
  \end{align}
  Then, $F(t) \geq \vert f(\tilde{r}) \vert$ and therefore 
  \begin{align}
    F(t) \geq \vert f(r_{t+s})\vert+\vert f(\tilde{r}) \vert- \vert f(r_{t+s}) \vert.
  \end{align}
  By definition $\vert f(r_{t+s})\vert = F(t+s)$ and $r_{t+s}-\tilde{r} \leq s$, and, by the Lipschitz assumption, $ \big \vert \vert f(\tilde{r}) \vert -\vert f(r_{t+s}) \vert \big \vert \leq Ls$ for some constant $L$. Upon noting 
  that $F(t+s) \geq F(t)$, this completes the proof of the first part of the statement. Now, since $F$ is Lipschitz continuous, its derivative exists (Lebesgue) almost everywhere and, 
  we have 
  \begin{align}
    \partial_{t} F(t) = \begin{cases}
                &0 \quad \mbox{if} \thickspace r_{t} < t \\
                &\partial_{t}f(t) \quad \mbox{if} \thickspace r_{t} = t.
        \end{cases}
  \end{align}
\end{proof}
\begin{lem}
  \label{lem:Ber_1/2}
  For any positive real numbers $\alpha, \beta$, consider the ODE 
  \begin{align}
    &\dot{x}(t) = -\alpha x(t)+\beta\sqrt{x}(t), \\
    &x(0) > 0.
  \end{align}
  Then, for all $t \geq 0$,
  \begin{equation}
    x(t) = \left(\sqrt{x(0)}\exp(-\frac{\alpha}{2}t)+\frac{\beta}{\alpha}\left(1-\exp(-\frac{\alpha}{2}t)\right)\right)^{2}.
  \end{equation}
\end{lem}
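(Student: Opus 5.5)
The ODE $\dot x=-\alpha x+\beta\sqrt{x}$ is of Bernoulli type with exponent $1/2$, so the plan is to linearize it by the substitution $y=\sqrt{x}$. As long as $x(t)>0$ we have $\dot y=\dot x/(2\sqrt{x})$, which gives
\begin{equation*}
\dot y(t)=-\frac{\alpha}{2}y(t)+\frac{\beta}{2},\qquad y(0)=\sqrt{x(0)}>0.
\end{equation*}
This is a linear first-order equation with constant coefficients. Variation of constants yields
\begin{equation*}
y(t)=\sqrt{x(0)}\,e^{-\frac{\alpha}{2}t}+\frac{\beta}{\alpha}\left(1-e^{-\frac{\alpha}{2}t}\right),
\end{equation*}
and squaring produces the claimed formula.

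The only point needing care is justifying the substitution globally in time. First, the explicit $y$ above is a convex combination of $\sqrt{x(0)}>0$ and $\beta/\alpha>0$, so it is strictly positive for every $t\geq 0$. Second, I will verify directly that $\tilde x:=y^{2}$ solves the original equation: $\dot{\tilde x}=2y\dot y=-\alpha y^{2}+\beta y=-\alpha\tilde x+\beta\sqrt{\tilde x}$, using $y>0$ so that $\sqrt{\tilde x}=y$. Third, the vector field $x\mapsto -\alpha x+\beta\sqrt{x}$ is locally Lipschitz on $(0,\infty)$. Since $\tilde x$ stays in $(0,\infty)$ and is defined on all of $\mathbb{R}_{+}$, the Cauchy--Lipschitz theorem gives $x=\tilde x$ for all $t\geq 0$. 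No step is a real obstacle; the only subtlety is this positivity and uniqueness check, which avoids the non-Lipschitz point $x=0$.

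For the applications above (Theorem \ref{thm:stable_01} and Corollary \ref{cor:stable_01}), the lemma is used in comparison form. There one has a differential inequality $\partial_t x\leq -\alpha x+\beta\sqrt{x}$ with $x=(\varphi_{1}-\sigma_{1}\overline{as})^{2}$ or $x=(\varphi_{0}-\sigma_{0}\overline{a})^{2}$. I would handle this the same way, applied to $\sqrt{x}$ wherever $x>0$: on such intervals $\partial_t\sqrt{x}\leq-\frac{\alpha}{2}\sqrt{x}+\frac{\beta}{2}$, and a linear Gronwall argument bounds $\sqrt{x}$ by the formula for $y$. The bound is trivially true wherever $x=0$, so the comparison extends to all times.
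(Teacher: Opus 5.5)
Your proposal is correct and follows the paper's proof: the substitution $y=\sqrt{x}$ turns the Bernoulli equation into $\dot y=-\frac{\alpha}{2}y+\frac{\beta}{2}$, which is solved explicitly and then squared. Your check that the explicit $y$ is a convex combination of two positive numbers, combined with uniqueness from the local Lipschitz property on $(0,\infty)$, makes rigorous what the paper only asserts as ``clearly $x(t)>0$''.
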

\begin{proof} 
  This is a Bernoulli equation, see e.g.\cite{teschl2012ordinary}, and, clearly, $x(t)>0$ for any $t>0$, so that existence and uniqueness of maximal solutions follows straightforwardly. Making the change of 
  variable $y(t) = \sqrt{x(t)}$, we find 
  \begin{equation}
    \dot{y}(t) = -\frac{\alpha}{2} y(t)+\frac{\beta}{2},
  \end{equation}
  so that 
  \begin{equation}
    y(t) = y(0)\exp(-\frac{\alpha}{2}t)+\frac{\beta}{\alpha}\left(1-\exp(-\frac{\alpha}{2}t)\right).
  \end{equation}
\end{proof}
\begin{lem}
  Let $z : \mathbb{R}_{+} \to \mathbb{R}^{d}$ be a continuous function. Then, for any $t \geq 0$,
  \begin{align}
    \norm{\int_{0}^{t}z(s)ds}_{2} &\leq \int_{0}^{t}\norm{z(s)}_{2}ds, \\
    \norm{\int_{0}^{t}z(s)ds}_{2} & \leq \sqrt{t}\left(\int_{0}^{t}\norm{z(s)}_{2}^{2}ds\right)^{1/2}.
  \end{align}
\end{lem}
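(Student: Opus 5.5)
The first inequality follows directly from the definition of the (componentwise Riemann) integral. Since $z$ is continuous on $[0,t]$, $\int_{0}^{t}z(s)ds$ is the limit of Riemann sums $\sum_{j}z(s_{j})\Delta s_{j}$. The triangle inequality for $\norm{.}_{2}$ gives $\norm{\sum_{j}z(s_{j})\Delta s_{j}}_{2}\leq \sum_{j}\norm{z(s_{j})}_{2}\Delta s_{j}$. The right-hand side is a Riemann sum for the continuous function $s\mapsto \norm{z(s)}_{2}$, so passing to the limit gives the claim.

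A cleaner alternative avoids Riemann sums. Set $I = \int_{0}^{t}z(s)ds$; if $I=0$ there is nothing to prove. Otherwise, linearity of the integral gives
\begin{equation*}
\norm{I}_{2}^{2} = \langle I, I\rangle = \int_{0}^{t}\langle I, z(s)\rangle ds \leq \norm{I}_{2}\int_{0}^{t}\norm{z(s)}_{2}ds,
\end{equation*}
using Cauchy--Schwarz in $\mathbb{R}^{d}$ pointwise. Dividing by $\norm{I}_{2}$ concludes. I would present this version.

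For the second inequality, I combine the first with Cauchy--Schwarz in $L^{2}([0,t])$, applied to the functions $1$ and $\norm{z(\cdot)}_{2}$:
\begin{equation*}
\int_{0}^{t}\norm{z(s)}_{2}ds = \int_{0}^{t}1\cdot\norm{z(s)}_{2}ds \leq \left(\int_{0}^{t}1\,ds\right)^{1/2}\left(\int_{0}^{t}\norm{z(s)}_{2}^{2}ds\right)^{1/2} = \sqrt{t}\left(\int_{0}^{t}\norm{z(s)}_{2}^{2}ds\right)^{1/2}.
\end{equation*}
All integrands are continuous, so the integrals are finite.

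No step is a genuine obstacle. The only point needing care is the justification of interchanging the inner product with the integral in the first step, which is just linearity of the componentwise integral.
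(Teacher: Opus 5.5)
Your proof is correct and differs from the paper's only in minor ways. For the first inequality, the paper simply invokes convexity of the Euclidean norm and Jensen's inequality. Your duality argument ($\norm{I}_2^2=\int_0^t\langle I,z(s)\rangle ds$, followed by pointwise Cauchy--Schwarz) is a self-contained proof of that vector-valued Jensen step.

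The more visible difference is in the second inequality. The paper does not pass through the first one. It applies Cauchy--Schwarz coordinatewise, $(\int_0^t z_i(s)ds)^2\le t\int_0^t z_i^2(s)ds$, and sums over $i$, using that $\norm{\cdot}_2^2$ is a sum of squares. You instead reduce to the scalar function $s\mapsto\norm{z(s)}_2$ via the first inequality and apply Cauchy--Schwarz once in $L^2([0,t])$.

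Your chain gives the slightly more informative $\norm{\int_0^t z}_2\le\int_0^t\norm{z}_2\le\sqrt{t}(\int_0^t\norm{z}_2^2)^{1/2}$. Combined with your Riemann-sum version of the first step, it works verbatim for any norm on $\mathbb{R}^d$. The paper's coordinatewise computation is specific to the Euclidean norm, but it has the advantage of not depending on the first inequality.
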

\begin{proof}
  The first inequality is a consequence of the convexity of the Euclidean norm and Jensen's inequality. For the second inequality, we apply the Cauchy-Schwarz inequality to obtain
  \begin{align}
    \norm{\int_{0}^{t}z(s)ds}_{2}^{2} \leq \sum_{i=1}^{d}t\int_{0}^{t}z^{2}_{i}(s)ds.
  \end{align}
\end{proof}
\bibliographystyle{siam}
\bibliography{refs}
\end{document}